\documentclass{article}

\usepackage{microtype}
\usepackage{graphicx}
\usepackage{subcaption}
\usepackage[utf8]{inputenc}
\usepackage[T1]{fontenc}
\usepackage{hyperref}
\usepackage{url}
\usepackage{booktabs}
\usepackage{amsfonts}
\usepackage{amsmath, amssymb, amsthm}
\usepackage{nicefrac}
\usepackage{microtype}
\usepackage{xcolor}
\usepackage{graphicx}
\usepackage{algorithm}
\usepackage{algpseudocode}
\usepackage{multirow}

\usepackage[preprint]{arxiv_2026}

\usepackage{amsmath}
\usepackage{amssymb}
\usepackage{mathtools}
\usepackage{amsthm}
\usepackage{bm}
\usepackage{enumitem}

\usepackage{algorithm}

\usepackage[capitalize,noabbrev]{cleveref}

\allowdisplaybreaks

\theoremstyle{plain}
\newtheorem{theorem}{Theorem}
\newtheorem{proposition}[theorem]{Proposition}
\newtheorem{lemma}[theorem]{Lemma}
\newtheorem{corollary}[theorem]{Corollary}
\newtheorem{definition}[theorem]{Definition}
\newtheorem{remark}[theorem]{Remark}
\newtheorem{assumption}[theorem]{Assumption}

\newcommand{\thmsubhead}[1]{\smallskip\noindent\textit{#1.}\quad}
\newcommand{\Mscr}{\mathcal{M}}
\newcommand{\Mlow}{\underline{C}}
\newcommand{\Mhigh}{\overline{C}}
\newcommand{\hmax}{h_{\max}}

\title{Bracketing Uncertainty in Clustering Under the Manifold Hypothesis}

\author{
  Savik Kinger \\
  Department of Computer Science\\
  Yale University
  \And
  Luciano Dyballa \\
  School of Science \& Technology \\
  IE University 
  \And
  Steven W. Zucker\\
  Depts. of Computer Science and Biomedical Engineering\\
  Wu Tsai Institute\\
  Yale University
}

\begin{document}

\maketitle

\begin{abstract}
The manifold hypothesis suggests a natural criterion for clustering: partition data according to the manifold component from which each point is drawn. Whether two components are separable depends on a geometric tradeoff: the ambient separation between components versus the largest gap in sampling. In practice, this tradeoff is rarely assessed explicitly, leading standard methods to over-commit to a single clustering assignment even when the data do not support a unique answer. We formalize this tradeoff by combining intrinsic manifold geometry (volume growth and reach) with sample-level quantities (fill distance and density), yielding a threshold phenomenon for mutual-$k$-nearest-neighbor graphs: when the offset-to-fill ratio exceeds a conservative upper threshold, component separation is preserved; below a lower threshold, components fuse. The gap between these thresholds defines a \emph{geometric uncertainty zone} in which the number of clusters is not identifiable from the data. Nevertheless, conventional approaches still seek one: sweeping parameters (an engineering approach) or fitting a generative mixture model (a model-based approach). Rather than forcing a single estimate of the number of clusters, we propose Manifold-Based Clustering (MBC), which returns an explicit bracket interval to quantify the underlying data uncertainty. This bracket acts as an empirically calibrated diagnostic: it narrows when a single resolution is supported, widens when multiple resolutions coexist, and collapses to one when no separated structure is detectable. Empirically, we find that many real datasets lie within the uncertainty zone rather than admitting one clear answer. Our results suggest that ambiguity in cluster number is often intrinsic, and should be quantified rather than resolved.
\end{abstract}

\section{Introduction}
\label{sec:intro}

\begin{figure}[t]
\centering
\includegraphics[width=\linewidth]{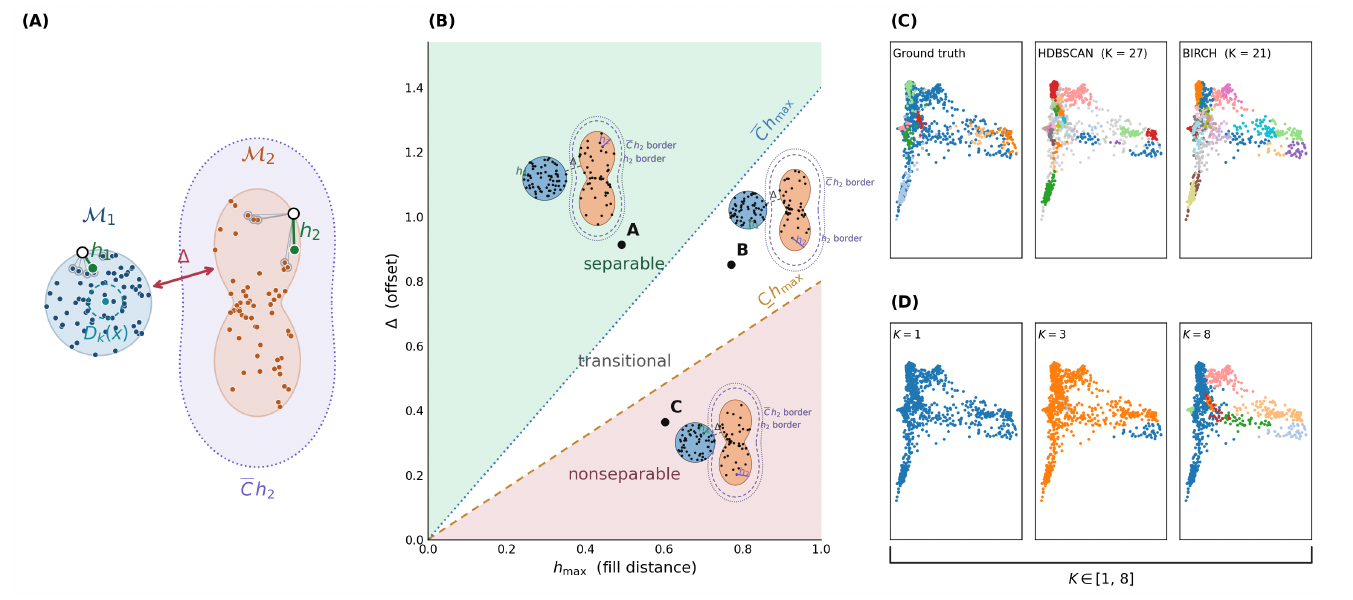}
\caption{\textbf{(A)} Two manifold components $\Mscr_1, \Mscr_2$ with i.i.d.\ samples. The fill distance $h_i$ is the largest sampling gap on a component (green); the offset $\Delta$ is the smallest cross-component distance (red); each sample's $k$NN ball $D_k(x)$ (teal) lies inside the $\Mhigh\,h_2$ boundary (purple), so when $\Delta$ exceeds the $\Mhigh\,h_2$ radius no $k$NN edge can bridge components---the no-bridge bound of Theorem~\ref{thm:knn-threshold}.
\textbf{(B)} The threshold theorem partitions the $(h_{\max}, \Delta)$ plane into three regimes: separable above $\Mhigh\,h_{\max}$, nonseparable below $\Mlow\,h_{\max}$, and a transitional band between them where the cluster count is not identifiable.
\textbf{(C)} Retinal-ganglion-cell recordings: against the ground truth estimate ($K \leq 8$), HDBSCAN ($K = 27$) and BIRCH ($K = 21$) both overshoot.
\textbf{(D)} On the same data MBC returns the bracket $K \in [1, 8]$ rather than a single answer; partitions at $K = 1, 3, 8$.}
\label{fig:manifold_sampling}
\vspace{-4mm}
\end{figure}

Clustering is a notoriously thorny problem. Results depend on the criterion used \citep{kleinberg2002impossibility}, on what counts as separation \citep{hennig2015true}, and on the sample at hand \citep{tibshirani2001estimating}. Practitioners typically fall back on either domain knowledge (e.g., genomics \citep{eisen1998cluster}) or off-the-shelf algorithms \citep{mcinnes2018umap, ester1996dbscan, ankerst1999optics, campello2013hdbscan, campello2015hdbscan}. The statistical power of those algorithms is difficult to assess \citep{dalmaijer2022statistical}, and applying any of them without inspection is risky \citep{chari2023specious}. This is especially true in neuroscience \citep{button2013power}, where even deciding whether the data are clustered at all remains open \citep{dyballa2024functional}. 

The difficulty is not only that different algorithms return different partitions. It is that different approaches make different commitments about what uncertainty in clustering means. One common response is to sweep parameters---for example a neighborhood radius, a minimum cluster size, or a target number of clusters---and inspect how much the resulting partition changes. This treats uncertainty as sensitivity to an algorithmic choice. A model-based approach instead specifies a likelihood family, such as a Gaussian mixture \citep{sanjeevLearningMixturesArbitrary2001}, and turns the problem into model selection. A Bayesian approach places a prior over partitions or mixture components and summarizes the posterior through mass over potential cluster counts \citep{wade2018bayesian}. An information-theoretic approach asks how many clusters can be resolved before finite-sample corrections dominate the information gained by adding more clusters \citep{still2004howmanyclusters}. These approaches are principled within their own frameworks, but they encode different notions of uncertainty: stability, posterior uncertainty, predictive fit, or model complexity.

In this paper, we adopt a geometric perspective to address this problem of learning the ``clusterability'' of data. The manifold hypothesis posits that most data live on a low dimensional manifold whose structure is meaningful and informative for analysis \citep{meilaManifoldLearningWhat2023}. Many popular unsupervised learning algorithms rely on this structure in practice (e.g. \citep{mcinnes2018umap, belkin2003laplacian, maaten2008tsne}). Under the manifold hypothesis, high-dimensional observations are modeled as samples from a compact subset $\Mscr\subset\mathbb{R}^D$ that is either a single connected $C^2$ submanifold or a finite union of disjoint $C^2$ components \citep{feffermanFittingManifoldData2023}. This allows us to define a cluster as precisely a connected component, and thus we can recast clustering as the following \emph{decision problem}: given i.i.d.\ samples $X = \{x_i\}_{i=1}^n$ from a distribution supported on $\Mscr$, can we decide whether the support is connected or decomposes into separated components? 
In this paper, we show this decision depends on a single threshold, and we define a bracket to capture the regime in which that threshold is uncertain.

We estimate this threshold from $k$-nearest-neighbor ($k$NN) graphs and leverage the ratio $\rho$ between two quantities: the \emph{offset} $\Delta$—the smallest Euclidean distance between any two components—and the \emph{fill distance} $h$ of the sample—the size of the largest gap in sample coverage. A smaller fill distance means denser, more uniform sampling. So a large $\rho$ means clusters are well-separated relative to how finely the data are sampled; a small $\rho$ means sampling is too coarse to distinguish nearby components, and they will be linked into a single blob. Classic random geometric graph results justify reading clustering through this lens: such graphs exhibit a sharp connectivity transition as the neighborhood scale grows with the sample size $n$, becoming connected around radii $r_n \asymp ((\log n)/n)^{1/d}$ or when the $k$NN parameter scales as $k \asymp \log n$, under mild regularity conditions \citep{penrose2003random,balister2005connectivity}. We translate this transition from connectivity within a single component to separation between distinct manifold components. The constants in the translation depend only on standard intrinsic geometry—two-sided volume growth (bounds on small-ball volumes) and positive reach \citep{niyogiFindingHomologySubmanifolds2008a}—and the resulting statement quantifies when distinct components remain disconnected in a $k$NN graph rather than linked by spurious ``bridging'' edges.

We further leverage this theory into a practical diagnostic algorithm (MBC). Our construction relies on the mutual $k$NN graph, in which an edge $\{i,j\}$ exists only when $i$ and $j$ each appear in the other's top-$k$ list. 
Mutual-$k$NN graphs let us consider multiple scales of $k$, which we can then locally regularize by the local densities intrinsic to the data, and then define a specific filtration of graphs where the range of $k$ chosen is informed by a heuristic based on the theory. This graph filtration is analogous to those used in the persistent homology literature \citep{carlsson2010characterization}. We study the connected components of this filtration, analogous to the 0th homology group, as a proxy for clusters in the data. Our separation criteria provides a principled approach for defining the range of scales $k$, and the mutual-$k$NN construction ensures that merge events seen during the sweep more likely correspond to genuine geometric proximity rather than artifacts of spurious samples. We make the description of our method precise in \S\ref{sec:algorithm} and empirically justify its performance in \S\ref{sec:experiments}.

Our contributions are threefold. First, we prove a conservative geometric threshold theorem for mutual-$k$NN graphs on separated manifold components (Theorem~\ref{thm:knn-threshold}): above an offset-to-fill threshold, cross-component edges are excluded; below a lower threshold, bridge edges appear. Second, we turn the gap between these bounds into a finite-sample bracket estimator (\S\ref{sec:algorithm}). Third, we evaluate the bracket as an uncertainty diagnostic across synthetic, real-world, and neural datasets (\S\ref{sec:experiments}), with ablations to evaluate the role of the chosen scales and pruning procedures.

\section{Background}
\label{sec:background}

\textit{Neighborhood graphs at the connectivity threshold.} Manifold-learning and spectral methods rely on $k$NN graphs, which depend on choosing $k$ near the intrinsic sampling scale \citep{tenenbaum2000isomap, roweis2000lle, belkin2003laplacian, coifmanDiffusionMaps2006, luxburg2007tutorial, mcinnes2018umap}. Random graph theory specifies this scale: such graphs exhibit a sharp connectivity transition at $k \asymp \log n$, with constants determined by dimension and local volume regularity \citep{penrose2003random, balister2005connectivity}. Under two-sided volume growth, the fill distance $h(R, \Mscr) = \sup_{x\in\Mscr}\min_{r_i \in R}\|x - r_i\|$ and the $k$NN radii $D_k(x) \asymp (k/n)^{1/d}$ are tightly coupled \citep{niyogiFindingHomologySubmanifolds2008a, boissonnat2018book}. Our method uses the mutual-$k$NN graph, which is a sparser, more selective alternative that inherits the same logarithmic threshold up to a constant \citep{brito1997connectivity}. Setting $k = O(\log n)$ thus puts the graph near this threshold, where the offset-to-fill criterion can be applied to decide on separated components by comparing the inter-component distance $\Delta$ to $\hmax$.

\textit{Reach, curvature, and density-based clustering.}
The reach $\tau_M$ of $M\subset\mathbb R^D$ is the infimum distance from $M$ to its medial axis \citep{federer1959curvature}; it controls the scale below which the manifold remains locally regular, accounting for curvature and narrow within-component bottlenecks \citep{aamari2019estimating,boissonnat2023reach}. In our theorem, this dependence appears through the local-mass scale $r_\ast$ and the constants $\underline c,\overline c$. The ratio $\Delta/h$ instead compares the gap between distinct components with the resolution of the observed sample, rather than relaxing or estimating reach.
Density-and threshold-based methods (DBSCAN, OPTICS, BIRCH, HDBSCAN) cluster by
thresholding density or mutual-reachability graphs but rely on user-specified
parameters that implicitly decide whether bridges persist
\citep{ester1996dbscan, ankerst1999optics, campello2013hdbscan,
campello2015hdbscan, Miron1996BIRCH}. 
Adaptive-neighborhood methods estimate a local $k_i$ for each point from sampling density rather than fixing a single global scale \citep{dyballa2023ian}.
We instead follow connected components across a nested family of mutual-$k$NN graphs, giving the construction a natural interpretation in terms of $0$ homology group of persistence diagram \citep{carlsson2010characterization}.

\section{Geometric Cluster-Separation Criterion}
\label{sec:geom-criterion}

The framework borrows two intuitions from Gaussian mixture models: the \emph{offset} between manifold components plays the role of inter-cluster distance, and the \emph{fill distance} of the sample plays the role of within-cluster dispersion. Suppose the data lie on the union
\begin{equation*}
\Mscr \;=\; \Mscr_1 \cup \cdots \cup \Mscr_K, \qquad \Mscr_i \cap \Mscr_j = \emptyset, \;\forall i \neq j \in \{1,\ldots,K\}
\end{equation*}
where each $\Mscr_k$ is a connected manifold component in $\mathbb{R}^D$. Let
$
\Delta \;=\;
\min_{k\neq \ell}\,\bigl\{\|x-y\| \;:\; x\in\Mscr_k,\,y\in\Mscr_\ell\bigr\}
$
denote the \emph{offset} (minimal ambient distance) between distinct components. We define the fill distance for the sampled approximation of $\Mscr$ as follows.

\begin{definition}[Fill distance]
\label{def:filldist}
Let $R = \{r_i\}_{i=1}^n \subset \Mscr$ be a finite point set. The \emph{fill distance} is
$
h_{R,\Mscr} \;=\; \sup_{x\in\Mscr}\min_{1\le i\le n}\|x - r_i\|.
$
A smaller fill distance indicates that $R$ provides a denser covering of $\Mscr$.
\end{definition}

Analogous to the sampling-density and within-component variance criteria in Gaussian mixtures, $h_{R,\Mscr}$ measures sampling dispersion: smaller values correspond to denser, more uniform sampling, which is what manifold-learning algorithms need in order to reliably approximate geodesic distances. 

For brevity write $h$ for $h_{R,\Mscr}$, and consider the ratio $\rho = \Delta/h$.

\subsection{Manifold separation criterion}

The next theorem establishes a threshold phenomenon for the connectivity of mutual-$k$NN graphs on points sampled from disjoint, compact, $d$-dimensional Riemannian manifolds. As $k$ varies, the graph undergoes a transition: when manifolds are far apart relative to sampling density, no edges cross between them; when they are close enough, bridging edges appear with high probability. Under the assumption that clusters correspond to distinct manifold components, the theorem characterizes how the fill distance $h$ (sampling density) and the offset $\Delta$ (intrinsic separation) together determine whether components remain disconnected or fuse in the graph. The dimension $d$ that appears throughout is the \emph{intrinsic} dimension of $\Mscr$, which can be much smaller than the ambient dimension $D$; the threshold constants depend on $d$, not on $D$, so the bracket is most informative when ambient distance reflects intrinsic geometry (a point we revisit in \S\ref{sec:limits}).
\begin{theorem}[Threshold for manifold separation in the mutual-$k$NN graph]
\label{thm:knn-threshold}
Let $\Mscr_1,\Mscr_2 \subset \mathbb{R}^D$ be disjoint, compact, connected, $d$-dimensional $C^2$ submanifolds with positive reach. For each $i \in \{1,2\}$, let $\mu_i$ be a Borel probability measure supported on $\Mscr_i$. Assume there exist constants $0 < \underline c \le \overline c < \infty$ and $r_\ast > 0$ such that $\underline c\,r^d \le \mu_i(B(x,r)) \le \overline c\,r^d$ for every $x \in \Mscr_i$ and $0 < r \le r_\ast$.

\thmsubhead{Sampling and graph}
Independently draw $S_i = \{X_{i1},\ldots,X_{in_i}\}$ with $X_{i1},\ldots,X_{in_i} \stackrel{\mathrm{i.i.d.}}{\sim} \mu_i$, and write $n = n_1+n_2$, $n_{\min} = \min\{n_1,n_2\}$, and $n_{\max} = \max\{n_1,n_2\}$. Define the component-wise fill distances and their maximum by
$$
h_i = \sup_{x \in \Mscr_i}\min_{z \in S_i}\|x-z\|,
\qquad
\hmax = \max\{h_1,h_2\},
$$
and define the ambient offset by
$$
\Delta = \min\bigl\{\|x-y\| : x \in \Mscr_1,\ y \in \Mscr_2\bigr\}.
$$
Fix $\delta,\varepsilon \in (0,1)$, set $k = \lceil A\log(4n/\delta)\rceil$ with $A \ge 3/\varepsilon^2$, and form the mutual-$k$NN graph $G_k^{\mathrm{mut}}$. Assume the finite-sample scale conditions stated in App.~\ref{app:mutual-knn-setup}, which keep the  fill distance and $k$NN radii within the local-mass scale $r_\ast$.

\thmsubhead{Threshold statement}
There exist explicit constants $\Mhigh,\Mlow > 0$, given in Rem.~\ref{rem:constants-threshold}, such that:
\begin{enumerate}
\item[(i)] The following
$$
\frac{\Delta}{\hmax} > \Mhigh
\qquad\Longrightarrow\qquad
G_k^{\mathrm{mut}}\text{ has no cross-component edge}
$$
fails with probability at most $3\delta/8$.

\item[(ii)] Fix $a \in (0,1/8)$ and write $B = 1+2a$. If $n_{\max} \le 2n_{\min}$ and $B\Delta \le r_\ast$, then
$$
\frac{\Delta}{\hmax} < \Mlow
\qquad\Longrightarrow\qquad
G_k^{\mathrm{mut}}\text{ contains a cross-component edge}
$$
fails with probability at most
$\delta/4 + 2\exp(-\underline c\,a^d n_{\min}\Delta^d) + \exp(-\gamma k)$,
where $\gamma = \log(4/e)-1/A$.
\end{enumerate}
\end{theorem}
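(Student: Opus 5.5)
The proof controls the $k$NN radii $D_k(x)$ from both sides in terms of the fill distance, using the volume-growth bounds. Write $\hat\mu$ for the empirical measure of the pooled sample $S_1\cup S_2$.

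\textbf{Step 1: two-sided $k$NN radius bounds.} For each sample point $x\in S_i$, compare $D_k(x)$ with deterministic radii. Set $r_-=((1-\varepsilon)k/(\overline c\,n_i))^{1/d}$ and $r_+=((1+\varepsilon)k/(\underline c\,n_i))^{1/d}$. Conditionally on $x$, the count $|S_i\cap B(x,r)|$ is binomial, with mean between $\underline c\,n_i r^d$ and $\overline c\,n_i r^d$ for $r\le r_\ast$. Multiplicative Chernoff bounds with $k\ge A\log(4n/\delta)$ and $A\ge 3/\varepsilon^2$ give $r_-\le D_k(x)\le r_+$ simultaneously for all $n$ points, with failure probability at most $\delta/8$ per side after a union bound.

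Next, tie $r_\pm$ to $h_i$ by a covering argument. A maximal $h_i$-packing of $\Mscr_i$ has about $(\overline c\,h_i^d)^{-1}$ balls. On the event $h_i\le r_\ast$, every ball $B(x,h_i)$ contains a sample point, which yields $\underline c\,h_i^d\lesssim \log(n/\delta)/n_i$. A matching bound in the opposite direction comes from a coupon-collector style estimate, at another $\delta/8$ cost. Together these give $h_i\asymp (k/n_i)^{1/d}$ up to constants depending only on $d,\underline c,\overline c,A,\varepsilon$. These ratios define $\Mhigh$ and $\Mlow$ as in Rem.~\ref{rem:constants-threshold}.

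\textbf{Step 2: part (i), no bridge.} A cross edge $\{x,y\}$ with $x\in S_1$, $y\in S_2$ requires $\|x-y\|\le\min\{D_k(x),D_k(y)\}$, and this is at most $r_+^{(1)}\le \Mhigh\,h_1\le \Mhigh\,\hmax$ on the good event. Since $\|x-y\|\ge\Delta$, the hypothesis $\Delta>\Mhigh\,\hmax$ rules out every cross edge. Only the non-mutual upper direction is actually used here, which is why the argument is conservative. Summing the three failure events of size $\delta/8$ gives $3\delta/8$.

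\textbf{Step 3: part (ii), a bridge exists.} Choose $p\in\Mscr_1$ and $q\in\Mscr_2$ with $\|p-q\|=\Delta$.
\begin{itemize}
\item The balls $B(p,a\Delta)$ and $B(q,a\Delta)$ each contain a sample point with probability at least $1-\exp(-\underline c\,a^d n_{\min}\Delta^d)$. These two events account for the term $2\exp(\cdot)$. Call the resulting points $x$ and $y$; then $\|x-y\|\le B\Delta\le r_\ast$.
\item The edge $\{x,y\}$ is present provided $D_k(x)\ge B\Delta$ and $D_k(y)\ge B\Delta$, that is, provided fewer than $k$ points fall in $B(x,B\Delta)$, and likewise for $y$. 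Since $n_{\max}\le2n_{\min}$ and $\Delta<\Mlow\,\hmax$ with $\hmax\lesssim(k/n_{\min})^{1/d}$, the expected count $\overline c\,n\,(B\Delta)^d$ is at most $k/4$ once $\Mlow$ is small enough. Points from both components must be counted here.
\item A Chernoff upper-tail bound at $k$, applied to a variable with mean $k/4$, gives failure probability at most $(e/4)^{k}$ times a correction. This yields the $\exp(-\gamma k)$ term with $\gamma=\log(4/e)-1/A$; the $-1/A$ absorbs the ceiling and the union over $x$ and $y$.
\item The Step 1 events cost $\delta/4$.
\end{itemize}

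\textbf{Main obstacle.} The delicate step is the conditioning in Step 3. The points $x$ and $y$ are chosen as functions of the sample, so the neighbor counts around them are not clean binomials. I would handle this by conditioning on the indices realizing $x$ and $y$ and treating the other $n-2$ points as independent. For the upper direction I would use a deterministic ball around $p$ of radius $(B+a)\Delta$, which contains $B(x,B\Delta)$. This is why the hypothesis needs $a<1/8$ and $B\Delta\le r_\ast$. The constants must then be checked so that $\overline c\,(1+3a)^d\,\Mlow^d$ times the fill-to-radius ratio stays at most $1/4$.
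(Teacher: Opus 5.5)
Your part (i) is essentially the paper's argument: a uniform upper bound on every $D_k(z)$, a lower bound on $\hmax$ from one component, and a comparison with $\Delta$. The gap is in part (ii), and it breaks the step that bounds the expected count by $k/4$.

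You take an arbitrary occupied-cap pair $x\in S_1\cap B(p,a\Delta)$, $y\in S_2\cap B(q,a\Delta)$. Other points of $S_2$ can then precede $y$ in the order at $x$, and points of $S_1$ can precede $x$ at $y$. So, as you note, both components must be counted, and that step does not close under the stated hypotheses and constants.

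First, the mass bound $\mu_2(B(x,B\Delta))\le\overline c\,(B\Delta)^d$ hidden in your estimate $\overline c\,n\,(B\Delta)^d$ is not available. The upper bound is assumed only for centers on $\Mscr_2$, and $x\in\Mscr_1$. Recentering on $\Mscr_2$ (e.g.\ at $q$) needs radius about $(2+3a)\Delta$. Your deterministic ball $B(p,(1+3a)\Delta)$ also has radius larger than $B\Delta$, and for the $S_2$-count it is again centered off $\Mscr_2$. So the hypothesis $B\Delta\le r_\ast$ licenses neither bound.

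Second, even granting the mass bound, the stated $\Mlow$ is too large for your count. On the covering event $\hmax\le C_{\mathrm{fill}}(\log(n_{\min}/\delta)/n_{\min})^{1/d}$, with $L=\log(4n/\delta)$, it gives only $\overline c\,(B\Delta)^d<AL/(8n_{\min})$. The expected count is then below $nAL/(8n_{\min})\le 3k/8$, since $n\le 3n_{\min}$. The tail bound $(e\mu/k)^k$ you invoke then only gives $(3e/8)^k>1$. Your route therefore proves at best a variant with a smaller $\Mlow$ and a stronger locality hypothesis, not the theorem as stated.

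The missing idea is how the bridge pair is chosen. Take a \emph{closest} observed cross pair, realizing $d_\times=\min_{x\in S_1,\,y\in S_2}\|x-y\|$, with an asymmetric label tie-break:
\begin{itemize}
\item $x^\star$ is the smallest-labeled $S_1$ point occurring in any minimizing pair;
\item $y^\star$ is the smallest-labeled minimizing partner of $x^\star$.
\end{itemize}
Then no $S_2$ point can precede $y^\star$ in the order at $x^\star$, and no $S_1$ point can precede $x^\star$ at $y^\star$. Such a point would either be strictly closer, contradicting minimality, or tied with a smaller label, contradicting the choice. So every competitor is a same-component point within $d_\times\le t=B\Delta$; the caps serve only to guarantee $d_\times\le t$.

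Only the same-component counts $Q(z;t)$ are then needed, with centers on the correct manifold and radius exactly $t\le r_\ast$. The estimate $(n_i-1)\overline c\,t^d<\frac{n_i-1}{n_{\min}}\cdot\frac{AL}{8}\le\frac{k}{4}$ uses exactly the stated $\Mlow$. The paper controls $Q(z;t)\le k-1$ simultaneously for all $z\in S$ by a union bound over the $n$ labels; the $-1/A$ in $\gamma$ pays for this via $\log n\le k/A$. This also removes the data-dependent-center issue, with no need to condition on selected indices, which would distort the laws of the remaining points.

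You also need the paper's case split, because the crowding inequality is deterministic while $\hmax$ is random. If the inequality fails, then $\{\Delta<\Mlow\hmax\}$ lies inside the failure of the covering event, which costs $\delta/4$.

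Two smaller points in Step 1:
\begin{itemize}
\item That every ball $B(x,h_i)$ contains a sample is just the definition of $h_i$ and yields no upper bound. The upper bound comes from union-bounding empty cells of a fixed-radius cover. The lower bound needs a second-moment argument on empty packing cells.
\item With $r_+$ built from $(1+\varepsilon)$, the Chernoff exponent $\varepsilon^2k/(2(1+\varepsilon))$ reaches $L$ under $A\ge 3/\varepsilon^2$ only when $\varepsilon\le 1/2$. This is why the paper's radius $r_k$ makes the mean at least $k/(1-\varepsilon)$.
\end{itemize}
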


\paragraph{Proof sketch.}
The proof has two halves, one for each direction of the threshold. We first show that the realized fill distance and the $k$NN radii lie on the same sampling scale. The upper direction then follows by comparing every neighborhood radius with the component offset. For the lower direction, we instead find one short observed cross-component pair and show that its endpoints are in each component. Formal statements and proofs are deferred to App.~\ref{app:knn-threshold}.

Write $D_k(z)$ for the distance from a sample $z \in S$ to its $k$th nearest neighbor.
For the lower-threshold direction, let $x_0 \in \Mscr_1$ and $y_0 \in \Mscr_2$ be distance $\Delta$ apart and define the radius-$a\Delta$ neighborhoods
$U = \Mscr_1 \cap B(x_0,a\Delta)$ and $V = \Mscr_2 \cap B(y_0,a\Delta)$, which we call \emph{caps}, and write $B = 1+2a$. We say the caps are occupied when $S_1\cap U$ and $S_2\cap V$ are each nonempty. This guarantees a cross-component pair at distance at most $B\Delta$. For $z \in S_i$, ``same-component crowding'' refers to the number of points in $S_i\setminus\{z\}$ within distance $B\Delta$ of $z$. The $\delta$-terms bound the fill-distance and $k$NN-radius estimates. 

\emph{Step 1: identify the common sampling scale.} We first use a covering argument to rule out holes much larger than the sampling scale and a packing argument to show that a hole of that scale remains. Together, these bounds show that $\hmax$ is of order $\bigl(\log(n_{\min}/\delta)/n_{\min}\bigr)^{1/d}$ with high probability. A separate Chernoff argument shows that every sample point has $k$ neighbors within a radius of order $(k/n_{\min})^{1/d}$. Since $k \asymp \log(n/\delta)$, the fill distance and the $k$NN radii lie on the same scale.

\emph{Step 2: no cross-component edges, proving (i).} If $\Delta/\hmax > \Mhigh$, then $\Delta > \Mhigh\,\hmax \ge D_k(z)$ for every $z \in S$. In particular, for any $z \in S_1$ and $w \in S_2$,
$\|z-w\| \ge \Delta > D_k(z)$, so $w \notin N_k(z)$. The same uniform radius bound gives $\|z-w\| \ge \Delta > D_k(w)$, so $z \notin N_k(w)$. Thus no directed neighbor relation crosses between the components. The union-$k$NN graph therefore has no cross-component edge, and its subgraph $G_k^{\mathrm{mut}}$ inherits the same conclusion.

\emph{Step 3: a mutual cross-edge under $\Delta/\hmax < \Mlow$, proving (ii).} Consider the closest-point caps $U$ and $V$ defined above. We combine three properties:
(\emph{a})~\textit{Cap occupancy.} Since $a\Delta < B\Delta \le r_\ast$, the local lower mass bound gives $\mu_1(U),\mu_2(V) \ge \underline c\,(a\Delta)^d$. Each cap is therefore missed by its corresponding sample with probability at most $\exp(-\underline c\,a^d n_{\min}\Delta^d)$. A union bound shows that both caps are occupied except on probability $2\exp(-\underline c\,a^d n_{\min}\Delta^d)$. Whenever this occurs, we may choose $x \in S_1\cap U$ and $y \in S_2\cap V$, and the triangle inequality gives $\|x-y\| \le a\Delta+\Delta+a\Delta = B\Delta$. Thus the closest observed cross-component pair is no farther apart than $B\Delta$.
(\emph{b})~\textit{Control of same-component competitors.} The high-probability upper bound on $\hmax$, together with $\Delta/\hmax < \Mlow$ and the balance-sampling assumption $n_{\max} \le 2n_{\min}$, implies that for every $z \in S_i$, the conditional expected number of points in $S_i\setminus\{z\}$ within distance $B\Delta$ is at most $k/4$. A Chernoff bound for this binomial count shows that reaching $k$ is exponentially unlikely. After a union bound over all sample points, every endpoint has at most $k-1$ same-component competitors within $B\Delta$, except on probability $\exp(-\gamma k)$.
(\emph{c})~\textit{Reciprocity from a closest observed pair.} Among the closest observed cross-component pairs, choose $(x^\star,y^\star)$ according to an ordering placed on pairs of points by distance. Viewed from $x^\star$, no point of $S_2$ can come before $y^\star$ as a closer point would contradict the ordering. Viewed from $y^\star$, the same argument prevents any point of $S_1$ from preceding $x^\star$. Since there are at most $k-1$ such predecessors, $y^\star \in N_k(x^\star)$ and $x^\star \in N_k(y^\star)$, so $\{x^\star,y^\star\}$ is a mutual cross-component edge.

The proof combines standard ideas from manifold sampling and neighborhood-graph clustering. The covering and packing control of sample coverage follows the reach-based manifold-sampling framework of \citet{niyogiFindingHomologySubmanifolds2008a}, while the logarithmic neighborhood scale and local occupancy bounds are classical in random geometric graph theory \citep{penrose2003random}. The use of mutual-$k$NN connectivity for clustering builds on \citet{brito1997connectivity} and the cluster-identification analysis of \citet{maier2009mutualknn}. Full details are in Appendix~\ref{app:knn-threshold}.
\begin{remark}[Scaling of the thresholds]
\label{rem:constants-threshold}
Let $R_n = \log(4n/\delta)/\log(n_{\min}/\delta)$ and $M = (\overline c/\underline c)^{1/d}$. The explicit constants from the proof are
$$
\Mhigh = \left(\frac{12 A R_n\,\overline c}{(1-\varepsilon)\underline c}\right)^{1/d},
\qquad
\Mlow = \left(\frac{A R_n\,\underline c}{2^{d+4}\,\overline c\,B^d}\right)^{1/d}.
$$
In particular, $\Mhigh = \Theta((A R_n)^{1/d}M)$ and $\Mlow = \Theta((A R_n)^{1/d}/(BM))$. Under balanced sampling ($R_n \approx 1$), fixed $\varepsilon$ and $a$, and bounded geometry ($\overline c/\underline c = \Theta(1)$), both thresholds are $\Theta(A^{1/d})$ with constants depending only on $d$.
\end{remark}

We further show an extentsion Theorem~\ref{thm:knn-threshold} to the noisy data case. Specifically, we show in App.~\ref{app:knn-threshold}, \emph{i})~Lemma~\ref{lem:mutual-knn-sample-offset} the observed cross-component distance for a given sampling $\Delta_{\mathrm{sam}}$ can be bounded as:
$$
\Delta \le \Delta_{\mathrm{sam}} \le \Delta+h_1+h_2 \le \Delta+2\hmax.
$$
This motivates estimating the offset-to-fill ratio from the data. 

\section{From thresholds to a bracket estimator}
\label{sec:algorithm}

Theorem~\ref{thm:knn-threshold} characterizes, for a fixed $k$NN scale and known geometric constants, when the resulting graph can and cannot bridge distinct manifold components. More precisely, Theorem~\ref{thm:knn-threshold} identifies this scale to be
$\lceil A\log(4n/\delta)\rceil,$ where $A$ is the scale-determining constant. Yet in practice, this scale is unknown. Consider however that the threshold statement can be read in reverse. Because $\Mhigh$ and $\Mlow$ depend on $A$ and $d$, an empirical estimate of $\rho = \Delta/\hmax$ lets us invert the threshold to determine the range of plausible $A$, and hence of $k$, over which neither threshold direction resolves whether the graph preserves or bridges the component separation. We leverage this observation to build a family of graphs, one per scale, whose component counts define a range of plausible clustering assignments.

MBC does this by first using a carefully constructed pilot graph to supply a baseline estimate of the observed $\rho$, denoted $\widehat\rho$. We then invert the two practical threshold curves at $\widehat\rho$ to obtain the range of graph scales associated with this transition. Finally, we rebuild the mutual-$k$NN graphs across that range and record their connected-component count at each scale. The resulting range of counts forms the MBC bracket for the number of clusters supported by the sample.

We now describe the four steps of this construction.


\paragraph{Step 1: Set the pilot scale.}
For a fixed $\delta$, set
$ A_0:=1 $
and
$ k^\star:=\lceil A_0\log(4n/\delta)\rceil
=\lceil\log(4n/\delta)\rceil. $
Compute the $k^\star$ nearest neighbors of each observation and record
$H_i^{\mathrm{pilot}}:=D_{k^\star}(x_i).$
Theorem~\ref{thm:knn-threshold} identifies the scale family
$k(A)=\lceil A\log(4n/\delta)\rceil,$
for which the fill distance and the $k$NN radii lie on the same sampling scale. The theorem uses a larger coefficient to obtain a conservative finite-sample guarantee; we use $A_0=1$ and evaluate this choice in App.~\ref{app:A-sweep}.

\paragraph{Step 2: Construct the pilot graph.}
We first remove observations whose pilot radii are overly large. Define
\begin{equation}
\label{eq:density-prune}
\tau_q
:=
\alpha_q\,
\operatorname{Quantile}_q
\bigl\{
H_j^{\mathrm{pilot}}:j\in[n]
\bigr\},
\qquad
\mathcal A
:=
\bigl\{
i\in[n]:
H_i^{\mathrm{pilot}}\le\tau_q
\bigr\}.
\end{equation}
A large pilot radius means that an observation must reach farther than most of the sample to collect $k^\star$ neighbors, making its incident edges plausible sources of spurious bridges. Mutuality reduces this risk but does not remove it.

We then adjust the pilot degree for variation in local sampling density:
\begin{equation}
\label{eq:local-k}
k_i^\star
=
\max\!\left\{
k^\star,\,
\left\lfloor
k^\star
\left(
\frac{H_{\mathrm{ref}}}
{H_i^{\mathrm{pilot}}}
\right)^{d_{\mathrm{eff}}}
\right\rfloor
\right\},
\qquad
H_{\mathrm{ref}}
=
\operatorname{median}
\bigl\{
H_j^{\mathrm{pilot}}:
H_j^{\mathrm{pilot}}>0
\bigr\},
\end{equation}
with clipping details given in App.~\ref{app:impl}. Observations in denser regions receive larger neighbor lists, reducing variation in the graph radii caused only by local density.

We recompute the neighbor lists on $\mathcal A$ at degrees $k_i^\star$ and form the mutual-neighbor graph. An additional nearest-neighbor fallback is added only for observations left isolated by the reciprocal construction. Write
$\widehat{\mathcal C}_1,\ldots,\widehat{\mathcal C}_{K_0}$
for its connected components and $H_i$ for the resulting neighborhood radius of retained observation $i$. These components are specific to the pilot partition used in Step~3.

\paragraph{Step 3: Estimate the ratio and set the sweep range.}
If the pilot graph has more than one component, define
$$
\widehat\Delta
:=
\min_{r\neq s}
\min_{\substack{
i\in\widehat{\mathcal C}_r\\
j\in\widehat{\mathcal C}_s
}}
\|x_i-x_j\|,
\qquad
\widehat h
:=
\operatorname{Quantile}_{0.5}
\bigl\{
H_i:i\in\mathcal A,\ H_i>0
\bigr\},
\qquad
\widehat\rho
:=
\frac{\widehat\Delta}{\widehat h}.
$$
If the pilot graph is connected, $\widehat\Delta$ and $\widehat\rho$ are left undefined. Because the same pilot graph supplies both the partition and the radii, $\widehat\rho$ asks whether the gaps visible at the pilot scale are large relative to the neighborhoods required to form that graph. For the true component partition, Lemma~\ref{lem:mutual-knn-sample-offset} gives
$
\Delta\le\Delta_{\mathrm{sam}}\le\Delta+2\hmax,
$
which motivates the observed gap in the numerator. We therefore treat $\widehat\rho$ as an empirical regime locator, rather than as a sharp estimator of $\Delta/\hmax$.

Write
$
\underline C(A,d_{\mathrm{eff}})
$
and
$
\overline C(A,d_{\mathrm{eff}})
$
for the practical lower bridge and upper no-bridge curves defined in App.~\ref{app:impl}. They preserve the $A^{1/d_{\mathrm{eff}}}$ scaling of the theoretical thresholds in Rem.~\ref{rem:constants-threshold}. In the transitional regime,
$
\underline C(A_0,d_{\mathrm{eff}})
<
\widehat\rho
<
\overline C(A_0,d_{\mathrm{eff}}),
$
we choose the endpoints by solving
$$
\underline C(A_{\mathrm{high}},d_{\mathrm{eff}})
=
\widehat\rho,
\qquad
\overline C(A_{\mathrm{low}},d_{\mathrm{eff}})
=
\widehat\rho.
$$
Since both curves increase with $A$, the upper curve is reached at a smaller scale than the lower curve. 
The two solutions therefore mark the beginning and end of the range in which the practical threshold map does not resolve the bridge transition. Inverting the curves in this way turns the gap between the two fixed-scale bounds into a range of graph scales adapted to the observed pilot geometry.

Outside this transitional case, the sweep is chosen according to the same geometry. If $\widehat\rho$ lies above the upper curve, the pilot partition is already well separated at $A_0$, and a narrow range around the pilot scale tests whether that partition persists under nearby choices of $k$. If $\widehat\rho$ lies below the lower curve, or if the pilot graph is connected, the sweep instead extends toward smaller degrees, since increasing $A$ only adds edges and cannot reveal a separation that is absent at the pilot scale. The exact endpoint rules and clipping are given in App.~\ref{app:impl}; their behavior is illustrated by the ablations in Apps.~\ref{app:A-sweep} and~\ref{app:perturbation-walk}.

Finally, convert the coefficient range to graph degrees:
\begin{equation}
\label{eq:bracket-endpoints}
k_{\mathrm{low}}
=
\left\lceil
A_{\mathrm{low}}\log\frac{4n}{\delta}
\right\rceil,
\qquad
k_{\mathrm{high}}
=
\left\lceil
A_{\mathrm{high}}\log\frac{4n}{\delta}
\right\rceil.
\end{equation}

\paragraph{Step 4: Sweep the graph scales and compute the bracket.}
We evaluate an integer grid over
$\mathcal K\subseteq[k_{\mathrm{low}},k_{\mathrm{high}}].$
At each $k\in\mathcal K$, rescale the local degrees as before, with details in App.~\ref{app:full-algorithm}, and rebuild the mutual-neighbor graph. We continue to denote this graph by $G_k^{\mathrm{mut}}$. Since every $k_i(k)$ is non-decreasing in $k$, these graphs create a filtration.

Across this range, increasing $k$ adds neighbor relations and carries the filtration from finer toward coarser graph partitions. The threshold has already been used to determine which part of this filtration corresponds to the uncertain bridge transition. At each selected scale, we therefore read the candidate clusters directly from the connected components of $G_k^{\mathrm{mut}}$ and define
$$ K_{\mathrm{raw}}(k)
=
\#\operatorname{Comp}
\bigl(
G_k^{\mathrm{mut}}
\bigr),
\qquad
K_{\mathrm{big}}(k)
=
\#\left\{
C\in\operatorname{Comp}
\bigl(
G_k^{\mathrm{mut}}
\bigr):
|C|\ge s_{\min}
\right\}.
$$
The primary MBC bracket is therefore
\begin{equation}
\label{eq:bracket-def}
\bigl[
K_{\mathrm{low}},
K_{\mathrm{high}}
\bigr]
=
\left[
\min_{k\in\mathcal K}K_{\mathrm{big}}(k),
\;
\max_{k\in\mathcal K}K_{\mathrm{big}}(k)
\right].
\end{equation}

We also report the mass-bounded count
\begin{equation}
\label{eq:K-mass}
K_{\mathrm{mass},\gamma}(k)
=
\min\left\{
m:
\sum_{i=1}^{m}s_{(i)}(k)
\ge
\gamma|\mathcal A|
\right\},
\qquad
\gamma=0.95,
\end{equation}
where
$s_{(1)}(k)\ge s_{(2)}(k)\ge\cdots$
are the component sizes. Its range over $\mathcal K$ records how many components are needed to cover most of the retained sample and is reported alongside, rather than in place of, the primary bracket. Point estimates and representative labels are defined from the same sweep in App.~\ref{app:impl}.

\begin{algorithm}[h]
\caption{Bracket estimator (full pseudocode in App.~\ref{app:full-algorithm})}
\label{alg:mbc}
\begin{algorithmic}[1]
\Require $X\in\mathbb R^{n\times D}$, $\delta\in(0,1)$
\State \textbf{Set pilot scale.}
Set $A_0\gets1$ and
$k^\star\gets\lceil\log(4n/\delta)\rceil$;\
compute $H_i^{\mathrm{pilot}}\gets D_{k^\star}(x_i)$
\State \textbf{Construct pilot graph.}
Retain $\mathcal A$ using Eq.~\eqref{eq:density-prune};\
compute $k_i^\star$ using Eq.~\eqref{eq:local-k};\
build the $\alpha$-gated mutual-neighbor pilot graph (App.~\ref{app:impl}) and record its components and radii $H_i$
\State \textbf{Set sweep range.}
Estimate $\widehat\rho$ when the pilot graph is disconnected;\
invert $\overline C$ and $\underline C$ in the transitional regime and otherwise apply the endpoint rule in App.~\ref{app:impl};\
obtain $k_{\mathrm{low}},k_{\mathrm{high}}$ from Eq.~\eqref{eq:bracket-endpoints}
\State \textbf{Compute bracket.}
Rescale $k_i^\star$ over the selected scales;\
rebuild the mutual graphs without fallback or gate;\
record the raw, size-filtered, and mass-bounded component counts
\State \textbf{Return.}
The primary bracket from Eq.~\eqref{eq:bracket-def}, together with the raw and mass-bounded companion brackets
\end{algorithmic}
\vspace{-1mm}
\end{algorithm}

\paragraph{Outputs, parameters, and complexity.}
The algorithm returns the primary size-filtered bracket, a raw-count bracket, and a mass-bounded companion bracket. A narrow primary bracket indicates that one component count remains stable over the selected threshold range; a wide bracket indicates that several resolutions remain supported; a small lower endpoint indicates that the representation does not support persistent separation at the sampled scale.

Every experiment uses
$\delta=0.05,$
$q=0.95,$
$\alpha_q=1.5,$
per-edge slack
$\alpha=1.5,$
$\gamma=0.95,$
$A_0=1,$
and
$s_{\min} = \max\{\lceil0.005|\mathcal A|\rceil,k^\star,5\}.$
These values are held fixed across datasets. Additional implementation details are given in App.~\ref{app:impl}.

Let $s$ be the number of graph scales in the sweep and $k_{\max}$ the largest neighborhood size. With brute-force neighbor search, the worst-case cost is
$O\!\left(s(n^2D+nk_{\max})\right).$

\section{Experiments}
\label{sec:experiments}

We test whether the MBC bracket behaves as Theorem~\ref{thm:knn-threshold} predicts, and whether its behavior arises from the threshold mechanism the theorem describes. Three suites are used: $38$ synthetic datasets in eight families that each stress one structural property the theory addresses (classic shapes, additive noise, background contamination, anisotropy, ambient dimension, multi-scale hierarchy, near-overlap, cluster imbalance; details in App.~\ref{app:synth-catalog}). We further include 11 standard real-world benchmarks across image, text, and tabular data and three neural datasets. To compare the bracket with parameter-varied baselines, we form a \emph{grid bracket} $[\min_g K_g, \max_g K_g]$ over six DBSCAN settings and eight HDBSCAN settings (grids in App.~\ref{app:impl}). We run KMeans, GMM, Ward, and Spectral at $K = K^\star$ when ground truth is known as additional points of reference. MBC parameters are fixed across every dataset at $\delta = 0.05$, $\alpha = 1.5$, $q = 0.95$, $\alpha_q = 1.5$. Three seeds are used and inter-seed variability is reported in App.~\ref{app:multi-seed}. To distinguish a wide-but-correct bracket from a narrow-but-occasionally-wrong one, we define  \emph{informativeness} as the average of the indicator denoting coverage of $K^\star$ divided by (median width $+\,1$).

\paragraph{Mechanism tests.}
The theory ties three parts of the construction to specific predictions: \emph{reciprocity} (mutual vs.\ union $k$NN), \emph{density pruning}, and the \emph{$k$NN scale coefficient} $A$. Per-mechanism ablations in App.~\ref{app:A-sweep} show the main effects. Varying $A$ moves the bracket from fragmentation at small $A$ toward coarser partitions at large $A$. The union graph matches the mutual graph on clean separable data but is sensitive to outliers in the presence of noisy outliers. The density pruning heuristic has a useful, but limited, effect when ambient, background-noise points lie close to the manifold rather than forming clear off-manifold outliers.

\looseness=-1
\subsection{Bracket calibration}
\label{sec:exp-calibration}

On the $38$ synthetic datasets, MBC reports a median bracket width of $0$ with $68\%$ coverage of $K^\star$. 
HDBSCAN-grid covers $K^\star$ on $92\%$ of datasets at median width $3.5$; DBSCAN-grid covers $84\%$ at median width $6$. Using unrounded coverage and width, the informativeness scores are $0.68$, $0.20$, and $0.12$ for MBC, HDBSCAN-grid, and DBSCAN-grid, respectively.
MBC leads on most synthetic families (Fig.~\ref{fig:per_family_brackets}, App.~\ref{app:synth-catalog}, Tab.~\ref{tab:per-family}). Against the stronger of the two grids, its informativeness is $1.00$ vs.\ $0.14$ on classic shapes, $0.44$ vs.\ $0.25$ on background contamination, and $0.71$ vs.\ $0.21$ on class imbalance. On high-dimensional blobs  the HDBSCAN-grid is ahead, $1.00$ vs.\ $0.88$. MBC often deals with the standard coverage tradeoff: if it commits to a single $K$ on most datasets and it is occasionally wrong, while not committing creates a wide grid which misses less often but offers little informativeness.
A separate $16$-dataset comparison with a variational Dirichlet-process mixture shows the same coverage-width tradeoff. On this analysis, MBC has zero median width. The Bayesian interval has similar mean width but lower coverage, and HDBSCAN attains higher coverage only at substantially wider intervals (App.~\ref{app:additional-results}).

\begin{figure}[t]
\centering
\includegraphics[width=0.98\linewidth]{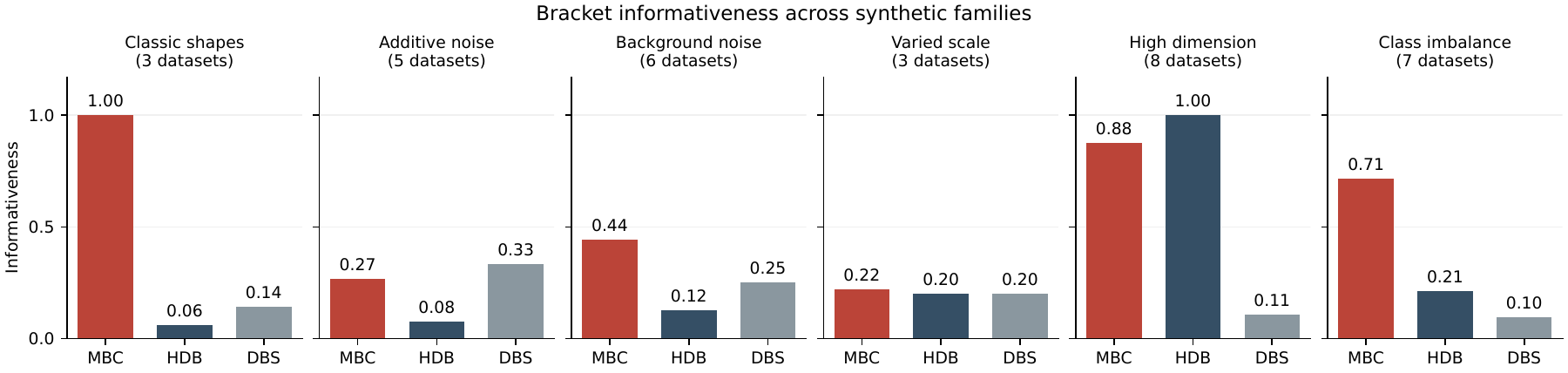}
\caption{Per-family bracket informativeness on six synthetic families of data. Bars are MBC, HDBSCAN-grid, and DBSCAN-grid. MBC is strongest on the families that match the separation criterion and competitive on high-dimensional blobs, where the HDBSCAN grid performs well. Additional families (e.g. adversarial) that violate the single-scale assumption are provided in App.~\ref{app:synth-catalog}.}
\label{fig:per_family_brackets}
\vspace{-4mm}
\end{figure}

\looseness=-1
\subsection{Bracket response to perturbations}
\label{sec:exp-perturbations}

Theorem~\ref{thm:knn-threshold} predicts that the bracket widens as as $\rho = \Delta/\hmax$ drifts into the uncertainty zone and collapses toward $1$ when the sampled representation no longer supports separation. Controlled perturbations and additional geometric stress tests examine this behavior directly (Apps.~\ref{app:perturbation-walk},
\ref{app:additional-results}).

\emph{Contamination} is the hardest axis for the method: reciprocity and density pruning help expose the transition, but background noise is not cleanly separated from the manifold, so the pruning ablation is mixed (App.~\ref{app:A-sweep}).
\emph{Sampling density and ambient dimension} behave as predicted. Shrinking $n$ increases $h_{\max} \asymp ((\log n)/n)^{1/d}$ and drifts the regime toward non-separable, while high-dimensional blob families show that the bracket follows intrinsic geometry rather than $D$ alone (App.~\ref{app:sampling-sweep}).
\emph{Varied curvature and metrics} provide complementary stress tests. On separated
quadratic patches, every bracket contains $K^\star$ across center curvatures
from $-2$ to $2$, with mean widths between $0.3$ and $1.3$; bracket-derived assignments reach ARI
$0.990$ and $0.992$. Alternative metrics preserve coverage on three diagnostic
datasets but can substantially widen the bracket. Tightly wound spirals
and the Swiss roll remain difficult when their local reach is small relative to
the graph scale (App.~\ref{app:additional-results}).

\subsection{Representation sensitivity on real-world data}
\label{sec:exp-real}

Here we test whether the bracket detects representation failure. Theorem~\ref{thm:knn-threshold} predicts that when ambient distance fails to reflect intrinsic geometry, both $\Delta$ and $\hmax$ collapse onto a common $D$-dimensional scale, $\widehat\rho$ concentrates near $1$, and the bracket narrows. The eleven real-world datasets confirm this: $\widehat\rho$ sits in a narrow band near the transitional boundary on all eleven datasets.
\begin{table}[t]
\centering
\small
\caption{Representative bracket behavior across synthetic, real-world, and neuroscience datasets. ``Primary'' is the size-filtered MBC bracket $K_{\mathrm{big}}$; ``mass'' is the $K_{\mathrm{mass},0.95}$ companion bracket. HDBSCAN and DBSCAN brackets are the ranges of non-noise cluster counts over their parameter grids. A baseline bracket containing $0$ means that at least one grid setting labeled every point as noise. $^{\dagger}$ denotes a biological estimate from independent physiological measurements rather than a ground-truth label set.}
\label{tab:main-brackets}
\resizebox{\textwidth}{!}{%
\begin{tabular}{llccccccc}
\toprule
Suite & Dataset & $n$ & $D$ & $K^\star$ & MBC primary & MBC mass & HDBSCAN grid & DBSCAN grid \\
\midrule
\multirow{3}{*}{Synthetic}
& Blobs 2D clean        & 2000 &  2 & 4 & $[4, 4]$ & $[4, 4]$   & $[4, 5]$  & $[4, 6]$ \\
& Moons, noise $0.10$   & 2000 &  2 & 2 & $[1, 4]$ & $[1, 4]$   & $[2, 10]$ & $[0, 2]$ \\
& Blobs 50D easy        & 2000 & 50 & 6 & $[6, 6]$ & $[6, 221]$ & $[6, 6]$  & $[0, 6]$ \\
\midrule
\multirow{3}{*}{Real}
& BreastCancer          &  569 & 30 &  2 & $[1, 2]$ & $[1, 21]$ & $[0, 2]$ & $[1, 1]$ \\
& Iris                  &  150 &  4 &  3 & $[2, 3]$ & $[2, 7]$  & $[2, 4]$ & $[1, 2]$ \\
& Wine                  &  178 & 13 &  3 & $[1, 2]$ & $[1, 12]$ & $[2, 2]$ & $[0, 2]$ \\
\midrule
\multirow{3}{*}{Neuro}
& Retina (full)         & 1146 &  5 & $\approx 8^{\dagger}$ & $[1, 8]$ & $[1, 7]$  & $[3, 27]$ & $[1, 6]$ \\
& Retina (labeled)      &  335 &  5 & 7 & $[1, 7]$ & $[1, 10]$ & $[5, 13]$ & $[1, 5]$ \\
& V1                    &  640 &  6 & 1 & $[1, 1]$ & $[1, 1]$  & $[0, 8]$  & $[1, 1]$ \\
\bottomrule
\end{tabular}}
\vspace{-4mm}
\end{table}

On the tabular datasets the bracket covers $K^\star$ at width $1$ on Iris and BreastCancer. On Wine the primary bracket misses the third grape variety, but the mass companion admits it as a long-tail minority within the wider window $[1, 12]$. The baseline grids fail on BreastCancer in opposite directions: the DBSCAN grid collapses to $[1, 1]$ and misses $K^\star{=}2$, while the HDBSCAN grid covers $K^\star$ but admits an all-noise, zero clusters, assignment, at its lower endpoint. The remaining image and text datasets in the suite sit at the representation-failure boundary, where Theorem~\ref{thm:knn-threshold} predicts the bracket narrows toward 1. This is a finding about the representation capacity of the data rather than a tuning failure of the algorithm.

On ImageNet-100 in the tested representation, MBC under-resolves; but on ImageNet-Dogs it returns $[14,19]$ around $K^\star=15$, against $[2,23]$ for HDBSCAN and $[19,105]$ for DBSCAN, and values drawn from the MBC interval also give the highest mean downstream ARI for both KMeans and spectral clustering (see App.~\ref{app:additional-results} for additional details).

\subsection{Biological results}
\label{sec:neuro}

The retinal and primary-visual-cortex recordings are the kind of data the theory was built for, physiologically characterized but not crisply clusterable, with $K^\star$ either estimated from independent biology or unknown altogether. Each dataset is a diffusion-map embedding of the recordings, and the bracket is computed on the embedding directly so the geodesic structure is preserved.

On the full retinal recording, the bracket $[1, 8]$ contains the physiological estimate of $K^{\star}\approx 8$ retinal-ganglion-cell types based on independent physiological measurements \citep{dyballa_population_2024}. On the labeled retinal subset it returns $[1,7]$, containing $K^{\star}= 7$; HDBSCAN reaches this count only at some grid settings, while DBSCAN stays below it across the grid. On V1 the bracket $[1, 1]$ indicates that the diffusion-map embedding supports no manifold-separable structure.

\section{Discussion}
\label{sec:limits}

Three limitations follow from the hypotheses of Theorem~\ref{thm:knn-threshold}. (i) When contamination is high or clusters are weakly separated, the bracket often collapses to one. Often this is the intended behavior, however---the bracket does not force a partition the data do not support.  (ii) Nearly overlapping clusters may violate the separation hypothesis, so the threshold guarantees do not apply. Other methods still produce a partition here by imposing stronger modeling assumptions, such as a GMM at known $K$; our approach instead makes the ambiguity explicit by identifying when multiple clusterings are equally defensible from the geometry. (iii) Components smaller than $s_{\min}$ are not reported by the primary bracket, which causes MBC to miss a true cluster on Wine (\S\ref{sec:exp-real}). The size threshold requires that a component be supported by enough samples to be distinguished from sampling noise, and the mass-bounded companion bracket retains the small components the primary one filters.

A fourth limitation is more fundamental than the three above: the choice of distance. The threshold constants depend on the intrinsic dimension $d$, so when ambient distance fails to reflect intrinsic geometry, $\Delta$ and $\hmax$ collapse onto a common $D$-dimensional scale and the bracket narrows, signaling that the representation does not expose cluster structure at the relevant scale. Geodesic-aware embeddings can often expose it (diffusion maps, Isomap, UMAP, learned encoders) where the ambient metric does not; the retinal analysis of \S\ref{sec:neuro} uses a diffusion-map embedding and returns a bracket containing the physiological estimate. 

Across synthetic, real-world, and neural data, the bracket trades raw coverage for width and gains informativeness against both parameter-swept baselines. It locates most real datasets inside the uncertainty zone, where the number of clusters is not identifiable from the geometry alone.
Rather than forcing a single clustering, it quantifies when such a choice is justified by the data.

\begin{ack}
This article has received funding from the European Commission's Marie Sk\l{}odowska-Curie Action under grant agreement no. 101207931 (LD).
\end{ack}

\bibliographystyle{plainnat}
\bibliography{references}

@article{coifmanDiffusionMaps2006,
    title = {Diffusion maps},
    journal = {Applied and Computational Harmonic Analysis},
    volume = {21},
    number = {1},
    pages = {5-30},
    year = {2006},
    note = {Special Issue: Diffusion Maps and Wavelets},
    issn = {1063-5203},
    doi = {https://doi.org/10.1016/j.acha.2006.04.006},
    url = {https://www.sciencedirect.com/science/article/pii/S1063520306000546},
    author = {Ronald R. Coifman and Stéphane Lafon}
}

@article{feffermanFittingManifoldData2023,
      title={Fitting a manifold to data in the presence of large noise}, 
      author={Charles Fefferman and Sergei Ivanov and Matti Lassas and Hariharan Narayanan},
      year={2023},
      eprint={2312.10598},
      archivePrefix={arXiv},
      primaryClass={math.ST},
      url={https://arxiv.org/abs/2312.10598}, 
}

@article{dyballa2023ian,
  title = {{{IAN}}: {{Iterated}} Adaptive Neighborhoods for Manifold Learning and Dimensionality Estimation},
  author = {Dyballa, Luciano and Zucker, Steven W.},
  year = 2023,
  month = feb,
  journal = {Neural Computation},
  volume = {35},
  number = {3},
  eprint = {https://direct.mit.edu/neco/article-pdf/35/3/453/2071882/neco\_a\_01566.pdf},
  pages = {453--524},
  issn = {0899-7667},
  doi = {10.1162/neco_a_01566}
}

@misc{meilaManifoldLearningWhat2023,
   author = "Meilă, Marina and Zhang, Hanyu",
   title = "Manifold Learning: What, How, and Why", 
   journal= "Annual Review of Statistics and Its Application",
   year = "2024",
   volume = "11",
   number = "Volume 11, 2024",
   pages = "393-417",
   doi = "https://doi.org/10.1146/annurev-statistics-040522-115238",
   url = "https://www.annualreviews.org/content/journals/10.1146/annurev-statistics-040522-115238",
   publisher = "Annual Reviews",
   issn = "2326-831X",
   type = "Journal Article",
  }

@inproceedings{sanjeevLearningMixturesArbitrary2001,
    author = {Arora, Sanjeev and Kannan, Ravi},
    title = {Learning mixtures of arbitrary gaussians},
    year = {2001},
    isbn = {1581133499},
    publisher = {Association for Computing Machinery},
    address = {New York, NY, USA},
    url = {https://doi.org/10.1145/380752.380808},
    doi = {10.1145/380752.380808},
    booktitle = {Proceedings of the Thirty-Third Annual ACM Symposium on Theory of Computing},
    pages = {247–257},
    numpages = {11},
    location = {Hersonissos, Greece},
    series = {STOC '01}
}

@article{maaten2008tsne,
  author  = {Laurens van der Maaten and Geoffrey Hinton},
  title   = {Visualizing Data using t-SNE},
  journal = {Journal of Machine Learning Research},
  year    = {2008},
  volume  = {9},
  number  = {86},
  pages   = {2579--2605},
  url     = {http://jmlr.org/papers/v9/vandermaaten08a.html}
}

@article{aamari2019estimating,
    author = {Eddie Aamari and Jisu Kim and Fr{\'e}d{\'e}ric Chazal and Bertrand Michel and Alessandro Rinaldo and Larry Wasserman},
    title = {{Estimating the reach of a manifold}},
    volume = {13},
    journal = {Electronic Journal of Statistics},
    number = {1},
    publisher = {Institute of Mathematical Statistics and Bernoulli Society},
    pages = {1359 -- 1399},
    year = {2019},
    doi = {10.1214/19-EJS1551},
    URL = {https://doi.org/10.1214/19-EJS1551}
}

@article{boissonnat2023reach,
  title = {The Reach of Subsets of Manifolds},
  author = {Boissonnat, Jean-Daniel and Wintraecken, Mathijs},
  year = 2023,
  month = sep,
  journal = {Journal of Applied and Computational Topology},
  volume = {7},
  number = {3},
  pages = {619--641},
  issn = {2367-1734},
  doi = {10.1007/s41468-023-00116-x}
}

@article{niyogiFindingHomologySubmanifolds2008a,
  title = {Finding the {{Homology}} of {{Submanifolds}} with {{High Confidence}} from~{{Random~Samples}}},
  author = {Niyogi, Partha and Smale, Stephen and Weinberger, Shmuel},
  year = 2008,
  month = mar,
  journal = {Discrete \& Computational Geometry},
  volume = {39},
  number = {1},
  pages = {419--441},
  issn = {1432-0444},
  doi = {10.1007/s00454-008-9053-2}
}

@book{penrose2003random,
  title = {Random Geometric Graphs},
  author = {Penrose, Mathew},
  year = 2003,
  month = may,
  publisher = {Oxford University Press},
  doi = {10.1093/acprof:oso/9780198506263.001.0001},
  isbn = {978-0-19-850626-3}
}

@article{balister2005connectivity,
  title={Connectivity of random k-nearest-neighbour graphs}, 
  volume={37}, 
  DOI={10.1239/aap/1113402397}, 
  number={1}, 
  journal={Advances in Applied Probability}, 
  author={Balister, Paul and Bollobás, Béla and Sarkar, Amites and Walters, Mark}, 
  year={2005}, 
  pages={1–24}
}

@article{luxburg2007tutorial,
  title = {A Tutorial on Spectral Clustering},
  author = {{von Luxburg}, Ulrike},
  year = 2007,
  month = dec,
  journal = {Statistics and Computing},
  volume = {17},
  number = {4},
  pages = {395--416},
  issn = {1573-1375},
  doi = {10.1007/s11222-007-9033-z}
}

@article{tenenbaum2000isomap,
    author = {Joshua B. Tenenbaum  and Vin de Silva  and John C. Langford },
    title = {A Global Geometric Framework for Nonlinear Dimensionality Reduction},
    journal = {Science},
    volume = {290},
    number = {5500},
    pages = {2319-2323},
    year = {2000},
    doi = {10.1126/science.290.5500.2319},
    URL = {https://www.science.org/doi/abs/10.1126/science.290.5500.2319},
    eprint = {https://www.science.org/doi/pdf/10.1126/science.290.5500.2319}
}

@article{roweis2000lle,
    author = {Sam T. Roweis  and Lawrence K. Saul },
    title = {Nonlinear Dimensionality Reduction by Locally Linear Embedding},
    journal = {Science},
    volume = {290},
    number = {5500},
    pages = {2323-2326},
    year = {2000},
    doi = {10.1126/science.290.5500.2323},
    URL = {https://www.science.org/doi/abs/10.1126/science.290.5500.2323},
    eprint = {https://www.science.org/doi/pdf/10.1126/science.290.5500.2323}
}

@article{belkin2003laplacian,
    author = {Belkin, Mikhail and Niyogi, Partha},
    title = {Laplacian Eigenmaps for dimensionality reduction and data representation},
    year = {2003},
    issue_date = {June 2003},
    publisher = {MIT Press},
    address = {Cambridge, MA, USA},
    volume = {15},
    number = {6},
    issn = {0899-7667},
    url = {https://doi.org/10.1162/089976603321780317},
    doi = {10.1162/089976603321780317},
    journal = {Neural Comput.},
    month = jun,
    pages = {1373–1396},
    numpages = {24}
}

@misc{mcinnes2018umap,
  title={UMAP: Uniform Manifold Approximation and Projection for Dimension Reduction}, 
  author={Leland McInnes and John Healy and James Melville},
  year={2020},
  eprint={1802.03426},
  archivePrefix={arXiv},
  primaryClass={stat.ML},
  url={https://arxiv.org/abs/1802.03426}, 
}

@inproceedings{ester1996dbscan,
  author = {Ester, Martin and Kriegel, Hans-Peter and Sander, J{\"o}rg and Xu, Xiaowei},
    title = {A density-based algorithm for discovering clusters in large spatial databases with noise},
    year = {1996},
    publisher = {AAAI Press},
    booktitle = {Proceedings of the Second International Conference on Knowledge Discovery and Data Mining},
    pages = {226–231},
    numpages = {6},
    location = {Portland, Oregon},
    series = {KDD'96}
}

@article{ankerst1999optics,
    author = {Ankerst, Mihael and Breunig, Markus M. and Kriegel, Hans-Peter and Sander, J{\"o}rg},
    title = {OPTICS: ordering points to identify the clustering structure},
    year = {1999},
    journal = {ACM SIGMOD Record},
    issue_date = {June 1999},
    publisher = {Association for Computing Machinery},
    address = {New York, NY, USA},
    volume = {28},
    number = {2},
    issn = {0163-5808},
    url = {https://doi.org/10.1145/304181.304187},
    doi = {10.1145/304181.304187},
    journal = {SIGMOD Rec.},
    month = jun,
    pages = {49–60},
    numpages = {12}
}

@article{campello2015hdbscan,
  author = {Campello, Ricardo J. G. B. and Moulavi, Davoud and Zimek, Arthur and Sander, J{\"o}rg},
title = {Hierarchical Density Estimates for Data Clustering, Visualization, and Outlier Detection},
year = {2015},
issue_date = {July 2015},
publisher = {Association for Computing Machinery},
address = {New York, NY, USA},
volume = {10},
number = {1},
issn = {1556-4681},
url = {https://doi.org/10.1145/2733381},
doi = {10.1145/2733381},
journal = {ACM Trans. Knowl. Discov. Data},
month = jul,
articleno = {5},
numpages = {51}
}

@inproceedings{campello2013hdbscan,
  author="Campello, Ricardo J. G. B.
    and Moulavi, Davoud
    and Sander, Joerg",
    editor="Pei, Jian
    and Tseng, Vincent S.
    and Cao, Longbing
    and Motoda, Hiroshi
    and Xu, Guandong",
    title="Density-Based Clustering Based on Hierarchical Density Estimates",
    booktitle="Advances in Knowledge Discovery and Data Mining",
    year="2013",
    publisher="Springer Berlin Heidelberg",
    address="Berlin, Heidelberg",
    pages="160--172",
    isbn="978-3-642-37456-2"
}

@book{boissonnat2018book,
  place={Cambridge}, 
  series={Cambridge Texts in Applied Mathematics}, 
  title={Geometric and Topological Inference}, 
  publisher={Cambridge University Press}, 
  author={Boissonnat, Jean-Daniel and Chazal, Frédéric and Yvinec, Mariette}, year={2018}, 
  collection={Cambridge Texts in Applied Mathematics}
}

@article{federer1959curvature,
  title   = {Curvature Measures},
  author  = {Federer, Herbert},
  journal = {Transactions of the American Mathematical Society},
  volume  = {93},
  number  = {3},
  pages   = {418--491},
  year    = {1959},
  doi     = {10.1090/S0002-9947-1959-0110078-1}
}

@article{kleinberg2002impossibility,
  author = {Kleinberg, Jon},
 journal = {Advances in Neural Information Processing Systems},
 editor = {S. Becker and S. Thrun and K. Obermayer},
 pages = {},
 publisher = {MIT Press},
 title = {An Impossibility Theorem for Clustering},
 url = {https://proceedings.neurips.cc/paper_files/paper/2002/file/43e4e6a6f341e00671e123714de019a8-Paper.pdf},
 volume = {15},
 year = {2002}
}

@article{tibshirani2001estimating,
  title = {Estimating the Number of Clusters in a Data Set via the Gap Statistic},
  author = {Tibshirani, Robert and Walther, Guenther and Hastie, Trevor},
  year = 2001,
  month = jul,
  journal = {Journal of the Royal Statistical Society Series B: Statistical Methodology},
  volume = {63},
  number = {2},
  eprint = {https://academic.oup.com/jrsssb/article-pdf/63/2/411/49590410/jrsssb\_63\_2\_411.pdf},
  pages = {411--423},
  issn = {1369-7412},
  doi = {10.1111/1467-9868.00293}
}

@article{hennig2015true,
  title = {What are the true clusters?},
    journal = {Pattern Recognition Letters},
    volume = {64},
    pages = {53-62},
    year = {2015},
    note = {Philosophical Aspects of Pattern Recognition},
    issn = {0167-8655},
    doi = {https://doi.org/10.1016/j.patrec.2015.04.009},
    url = {https://www.sciencedirect.com/science/article/pii/S0167865515001269},
    author = {Christian Hennig}
}

@article{eisen1998cluster,
  author = {Michael B. Eisen  and Paul T. Spellman  and Patrick O. Brown  and David Botstein },
    title = {Cluster analysis and display of genome-wide expression patterns},
    journal = {Proceedings of the National Academy of Sciences},
    volume = {95},
    number = {25},
    pages = {14863-14868},
    year = {1998},
    doi = {10.1073/pnas.95.25.14863},
    URL = {https://www.pnas.org/doi/abs/10.1073/pnas.95.25.14863},
    eprint = {https://www.pnas.org/doi/pdf/10.1073/pnas.95.25.14863}
}

@article{chari2023specious,
  doi = {10.1371/journal.pcbi.1011288},
    author = {Chari, Tara AND Pachter, Lior},
    journal = {PLOS Computational Biology},
    publisher = {Public Library of Science},
    title = {The specious art of single-cell genomics},
    year = {2023},
    month = {08},
    volume = {19},
    url = {https://doi.org/10.1371/journal.pcbi.1011288},
    pages = {1-20},
    number = {8},
}

@article{dalmaijer2022statistical,
  title = {Statistical Power for Cluster Analysis},
  author = {Dalmaijer, Edwin S. and Nord, Camilla L. and Astle, Duncan E.},
  year = 2022,
  month = may,
  journal = {BMC Bioinformatics},
  volume = {23},
  number = {1},
  pages = {205},
  issn = {1471-2105},
  doi = {10.1186/s12859-022-04675-1}
}

@article{button2013power,
  title = {Power Failure: Why Small Sample Size Undermines the Reliability of Neuroscience},
  author = {Button, Katherine S. and Ioannidis, John P. A. and Mokrysz, Claire and Nosek, Brian A. and Flint, Jonathan and Robinson, Emma S. J. and Munaf{\`o}, Marcus R.},
  year = 2013,
  month = may,
  journal = {Nature Reviews Neuroscience},
  volume = {14},
  number = {5},
  pages = {365--376},
  issn = {1471-0048},
  doi = {10.1038/nrn3475}
}

@article{dyballa_population_2024,
	author = {Luciano Dyballa  and Andra M. Rudzite  and Mahmood S. Hoseini  and Mishek Thapa  and Michael P. Stryker  and Greg D. Field  and Steven W. Zucker },
    title = {Population encoding of stimulus features along the visual hierarchy},
    journal = {Proceedings of the National Academy of Sciences},
    volume = {121},
    number = {4},
    pages = {e2317773121},
    year = {2024},
    doi = {10.1073/pnas.2317773121},
    URL = {https://www.pnas.org/doi/abs/10.1073/pnas.2317773121},
    eprint = {https://www.pnas.org/doi/pdf/10.1073/pnas.2317773121}
}

@article{dyballa2024functional,
  author = {Dyballa, Luciano and Field, Greg D. and Stryker, Michael P. and Zucker, Steven W.},
	title = {Functional organization and natural scene responses across mouse visual cortical areas revealed with encoding manifolds},
	elocation-id = {2024.10.24.620089},
	year = {2025},
	doi = {10.1101/2024.10.24.620089},
	publisher = {Cold Spring Harbor Laboratory},
	URL = {https://www.biorxiv.org/content/early/2025/10/23/2024.10.24.620089},
	eprint = {https://www.biorxiv.org/content/early/2025/10/23/2024.10.24.620089.full.pdf},
	journal = {bioRxiv}
}

@inproceedings{Miron1996BIRCH,
author = {Zhang, Tian and Ramakrishnan, Raghu and Livny, Miron},
title = {BIRCH: an efficient data clustering method for very large databases},
year = {1996},
isbn = {0897917944},
publisher = {Association for Computing Machinery},
address = {New York, NY, USA},
url = {https://doi.org/10.1145/233269.233324},
doi = {10.1145/233269.233324},
booktitle = {Proceedings of the 1996 ACM SIGMOD International Conference on Management of Data},
pages = {103–114},
numpages = {12},
location = {Montreal, Quebec, Canada},
series = {SIGMOD '96}
}

@article{maier2009mutualknn,
author = {Maier, Markus and Hein, Matthias and von Luxburg, Ulrike},
title = {Optimal construction of k-nearest-neighbor graphs for identifying noisy clusters},
year = {2009},
issue_date = {April, 2009},
publisher = {Elsevier Science Publishers Ltd.},
address = {GBR},
volume = {410},
number = {19},
issn = {0304-3975},
url = {https://doi.org/10.1016/j.tcs.2009.01.009},
doi = {10.1016/j.tcs.2009.01.009},
journal = {Theor. Comput. Sci.},
month = apr,
pages = {1749–1764},
numpages = {16}
}

@article{brito1997connectivity,
    title = {Connectivity of the mutual k-nearest-neighbor graph in clustering and outlier detection},
    journal = {Statistics \& Probability Letters},
    volume = {35},
    number = {1},
    pages = {33-42},
    year = {1997},
    issn = {0167-7152},
    doi = {https://doi.org/10.1016/S0167-7152(96)00213-1},
    url = {https://www.sciencedirect.com/science/article/pii/S0167715296002131},
    author = {M.R. Brito and E.L. Chávez and A.J. Quiroz and J.E. Yukich}
}

@article{blei2006variational,
    author = {David M. Blei and Michael I. Jordan},
    title = {{Variational inference for Dirichlet process mixtures}},
    volume = {1},
    journal = {Bayesian Analysis},
    number = {1},
    publisher = {International Society for Bayesian Analysis},
    pages = {121 -- 143},
    year = {2006},
    doi = {10.1214/06-BA104},
    URL = {https://doi.org/10.1214/06-BA104}
}

@article{carlsson2010characterization,
  author  = {Gunnar Carlsson and Facundo M{{\'e}}moli},
  title   = {Characterization, Stability and Convergence of Hierarchical Clustering Methods},
  journal = {Journal of Machine Learning Research},
  year    = {2010},
  volume  = {11},
  number  = {47},
  pages   = {1425-1470},
  url     = {http://jmlr.org/papers/v11/carlsson10a.html}
}

@article{still2004howmanyclusters,
    author = {Still, Susanne and Bialek, William},
    title = {How Many Clusters? An Information-Theoretic Perspective},
    year = {2004},
    issue_date = {December 2004},
    publisher = {MIT Press},
    address = {Cambridge, MA, USA},
    volume = {16},
    number = {12},
    issn = {0899-7667},
    url = {https://doi.org/10.1162/0899766042321751},
    doi = {10.1162/0899766042321751},
    journal = {Neural Comput.},
    month = dec,
    pages = {2483–2506},
    numpages = {24}
}

@article{wade2018bayesian,
  title   = {Bayesian Cluster Analysis: Point Estimation and Credible Balls (with Discussion)},
  author  = {Wade, Sara and Ghahramani, Zoubin},
  journal = {Bayesian Analysis},
  volume  = {13},
  number  = {2},
  pages   = {559--626},
  year    = {2018},
  doi     = {10.1214/17-BA1073}
}

\appendix

\noindent\paragraph{Appendix overview.} 
\textit{Theory} (App.~\ref{app:knn-threshold}) proves Theorem~\ref{thm:knn-threshold}, and the observed-sample-offset lemma. 
\textit{Algorithm} (Apps.~\ref{app:impl}--\ref{app:full-algorithm}) describes the per-edge gate, minimum-component size, mass-coverage alternative, practical estimate $\widehat K_{\mathrm{prac}}$, baseline parameter grids, and full pseudocode.
\textit{Experiments} (Apps.~\ref{app:multi-seed}--\ref{app:neuro-behavior}) reports inter-seed stability, the $(\delta,\alpha)$ sensitivity grid, the perturbation walk, the sampling-size sweep, the synthetic-suite catalog and per-family aggregate, the mass-coverage bracket sensitivity, and the neuroscience filtration figure.



\section{Proof of Theorem~\ref{thm:knn-threshold}}
\label{app:knn-threshold}

We now prove the fill-normalized separation threshold for the mutual-$k$NN graph stated in the main text. We first control the fill distance and the largest $k$NN radius at the common scale $(\log n/n)^{1/d}$, then compare that scale with the manifold offset: a sufficiently large offset excludes cross-component listings, while a sufficiently small offset produces a short cross pair whose endpoints must list one another.

\subsection{Notation and assumptions}
\label{app:mutual-knn-setup}

\paragraph{Notation reference.}
We use the following notation throughout the appendix.
\begin{center}
\begingroup
\renewcommand{\arraystretch}{1.08}
\begin{tabular}{@{}p{0.24\linewidth}p{0.67\linewidth}@{}}
$\mathcal M_i,\mu_i$ & manifold component and sampling measure, $i\in\{1,2\}$;\\
$\underline c,\overline c,r_*,d$ & constants in the local mass bound
$\mu_i(B(x,r))\asymp r^d$;\\
$S_i,n_i,n,n_{\min},n_{\max}$ & labeled samples and their sizes;\\
$\Delta,h_i,h_{\max}$ & manifold offset, component fill distances, and their maximum;\\
$N_k(z),D_k(z)$ & ordered $k$NN list of $z$ and its $k$th-neighbor radius;\\
$G_{\mathrm{mut}}$ & mutual-$k$NN graph on the full sample;\\
$A,\delta,\varepsilon$ & graph-scale coefficient, target failure level, and concentration slack in the neighborhood-radius bound;\\
$L,k,R_n,r_k$ & logarithmic factor, graph degree, sample-size log ratio, and uniform neighborhood-radius scale;\\
$M_{\mathrm{high}},M_{\mathrm{low}}$ & fill-normalized no-edge and bridge thresholds.
\end{tabular}
\endgroup
\end{center}

\begin{assumption}[Geometry and local mass]
\label{ass:mutual-knn-geometry}
Let $d\ge1$.  The sets $\mathcal M_1,\mathcal M_2\subset\mathbb R^D$ are compact, connected, disjoint, $d$-dimensional $C^2$ submanifolds with positive reach.  For each $i\in\{1,2\}$, let $\mu_i$ be a Borel probability measure supported on $\mathcal M_i$.  There are constants $0<\underline c\le\overline c<\infty$ and $r_*>0$ such that
\begin{equation}
\label{eq:mutual-knn-local-mass}
    \underline c\,r^d
    \le \mu_i\bigl(B(x,r)\bigr)
    \le \overline c\,r^d,
    \qquad
    x\in\mathcal M_i,
    \quad 0<r\le r_*.
\end{equation}
Here $B(x,r)$ is the closed ambient Euclidean ball.
\end{assumption}

\begin{remark}[How the geometry enters]
\label{rem:mutual-knn-geometry-role}
The manifold assumptions enter only through \eqref{eq:mutual-knn-local-mass}: curvature is absorbed into the constants and the radius $r_*$ below which Euclidean balls have mass comparable to $r^d$.
\end{remark}

\begin{assumption}[Sampling]
\label{ass:mutual-knn-sampling}
For $i\in\{1,2\}$, let
\begin{equation}
\label{eq:mutual-knn-samples}
    S_i:=\{X_{i1},\ldots,X_{in_i}\},
    \qquad
    X_{i1},\ldots,X_{in_i}\stackrel{\mathrm{i.i.d.}}{\sim}\mu_i,
\end{equation}
and assume that $S_1$ and $S_2$ are independent. Write
\begin{equation}
\label{eq:mutual-knn-sample-sizes}
    S:=S_1\cup S_2,
    \qquad n:=n_1+n_2,
    \qquad n_{\min}:=\min\{n_1,n_2\},
    \qquad n_{\max}:=\max\{n_1,n_2\}.
\end{equation}
For the mutual-cross-edge direction only, assume
\begin{equation}
\label{eq:mutual-knn-balance}
    n_{\max}\le2n_{\min}.
\end{equation}
This sample balance condition prevents one component from being sampled at a much higher resolution than the other.
\end{assumption}

We write the manifold offset and the component-wise fill distances as
\begin{align}
\label{eq:mutual-knn-offset}
    \Delta
    &:=\min_{x\in\mathcal M_1,\,y\in\mathcal M_2}\|x-y\|,
    \\
\label{eq:mutual-knn-fill}
    h_i
    &:=\sup_{x\in\mathcal M_i}\min_{z\in S_i}\|x-z\|,
    &
    h_{\max}&:=\max\{h_1,h_2\}.
\end{align}
Compactness ensures that the minimum defining $\Delta$ is attained, while disjointness ensures that $\Delta>0$.  The quantity $h_i$ is the radius of the largest region of $\mathcal M_i$ left uncovered by $S_i$, and therefore measures the realized sampling resolution on the $i$th component.

\noindent To make the graph well defined when distances tie, assign every observation a fixed label before sampling and order candidate neighbors first by distance and then by this label. Let $N_k(z)$ be the first $k$ points in that order, and let $D_k(z)$ be the distance to the $k$th point. The mutual-$k$NN graph is
\begin{equation}
\label{eq:mutual-knn-graph}
    G_{\mathrm{mut}}=(S,E_{\mathrm{mut}}),
    \qquad
    E_{\mathrm{mut}}:=\bigl\{\{z,w\}:w\in N_k(z)\ \text{and}\ z\in N_k(w)\bigr\}.
\end{equation}
A \emph{cross edge} has one endpoint in each component; let $\mathsf{Cross}_{\mathrm{mut}}$ denote the event that such an edge exists. For $z\in S_i$ and $t>0$, define the same-component count
\begin{equation}
\label{eq:mutual-knn-local-count}
    Q(z;t)
    :=\#\bigl\{w\in S_i\setminus\{z\}:\|w-z\|\le t\bigr\}.
\end{equation}
Two immediate consequences will be used throughout the proof:
\begin{align}
\label{eq:mutual-knn-count-radius}
    Q(z;t)\ge k
    &\quad\Longrightarrow\quad D_k(z)\le t,
    \\
\label{eq:mutual-knn-outside-list}
    \|w-z\|>D_k(z)
    &\quad\Longrightarrow\quad w\notin N_k(z).
\end{align}
The first bounds a neighborhood radius using same-component points alone, while the second says that a point beyond the $k$th-neighbor radius cannot appear in the neighbor list.

\begin{assumption}[Parameters and scale conditions]
\label{ass:mutual-knn-parameters}
Fix $\delta,\varepsilon\in(0,1)$ and define
\begin{equation}
\label{eq:mutual-knn-parameters}
    L:=\log\frac{4n}{\delta},
    \qquad
    k:=\lceil AL\rceil,
    \qquad
    A\ge\frac{3}{\varepsilon^2},
    \qquad
    R_n:=\frac{\log(4n/\delta)}{\log(n_{\min}/\delta)}.
\end{equation}
For the mutual-cross-edge direction, fix $a\in(0,1/8)$ and set $B:=1+2a$.  The proof uses only $a>0$ and $B\Delta\le r_*$. Now set
\begin{equation}
\label{eq:mutual-knn-rk}
    r_k:=\left(
        \frac{2k}{(1-\varepsilon)\underline c\,n_{\min}}
    \right)^{1/d}.
\end{equation}
We assume
\begin{align}
\label{eq:mutual-knn-scale-one}
    k&\le\frac{n_{\min}}2,
    &
    r_k&\le r_*,
    \\
\label{eq:mutual-knn-scale-two}
    \frac{2^d\log(n_i/\delta)}{\underline c\,n_i}
    &\le r_*^d,
    && i\in\{1,2\},
    \\
\label{eq:mutual-knn-scale-three}
    n_i
    &\ge\max\{2^{d+2},\log(1/\delta)\},
    &
    \sqrt{n_i}
    &\ge
    \frac{2^{d+1}\log(n_i/\delta)}{\delta^{3/2}},
    \qquad i\in\{1,2\}.
\end{align}
For the lower-threshold direction, we additionally assume
\begin{equation}
\label{eq:mutual-knn-local-gap}
    B\Delta\le r_*.
\end{equation}
These conditions ensure that every radius used below remains inside the local mass condition and that the concentration bounds hold with the stated probabilities. The inequalities in \eqref{eq:mutual-knn-scale-two}--\eqref{eq:mutual-knn-scale-three} are explicit sample-size requirements; the balance and local-gap conditions remain separate structural assumptions.
\end{assumption}

Since $n\ge4$, $L>2$, and therefore
\begin{equation}
\label{eq:mutual-knn-ceiling}
    AL\ge2,
    \qquad
    AL\le k\le AL+1.
\end{equation}
We use the standard binomial bounds
\begin{align}
\label{eq:mutual-knn-chernoff-lower}
    X\sim\operatorname{Bin}(q,p),\ \mu=qp,
    \quad t\in(0,1)
    &\Longrightarrow
    \mathbb P\bigl(X\le(1-t)\mu\bigr)
    \le e^{-t^2\mu/2},
    \\
\label{eq:mutual-knn-chernoff-upper}
    X\sim\operatorname{Bin}(q,p),\ \mu=qp,
    \quad s\ge\mu
    &\Longrightarrow
    \mathbb P(X\ge s)
    \le\left(\frac{e\mu}{s}\right)^s.
\end{align}

\noindent We can now state the threshold result proved in the remainder of the appendix.

\begin{theorem}[Fill-normalized threshold for the mutual-$k$NN graph]
\label{thm:mutual-knn-threshold}
Under Assumptions~\ref{ass:mutual-knn-geometry}--\ref{ass:mutual-knn-parameters}, define
\begin{align}
\label{eq:mutual-knn-thresholds}
    M_{\mathrm{high}}
    &:=\left(\frac{12AR_n\overline c}{(1-\varepsilon)\underline c}
    \right)^{1/d},
    &
    M_{\mathrm{low}}
    &:=\left(\frac{AR_n\underline c}{2^{d+4}\overline c B^d}
    \right)^{1/d},
    \\
\label{eq:mutual-knn-gamma}
    \gamma
    &:=\log\frac4e-\frac1A
    \ge\frac1{20}.
\end{align}
Then:
\begin{enumerate}
\item[(i)] Without the balance condition \eqref{eq:mutual-knn-balance},
\begin{equation}
\label{eq:mutual-knn-main-upper}
    \mathbb P\left(
        \{\Delta>M_{\mathrm{high}}h_{\max}\}
        \cap\mathsf{Cross}_{\mathrm{mut}}
    \right)
    \le\frac{3\delta}{8}.
\end{equation}

\item[(ii)] Under \eqref{eq:mutual-knn-balance} and
\eqref{eq:mutual-knn-local-gap},
\begin{equation}
\label{eq:mutual-knn-main-lower}
    \mathbb P\left(
        \{\Delta<M_{\mathrm{low}}h_{\max}\}
        \cap\mathsf{Cross}_{\mathrm{mut}}^{\,c}
    \right)
    \le
    \frac{\delta}{4}
    +2e^{-\underline c a^dn_{\min}\Delta^d}
    +e^{-\gamma k}.
\end{equation}
\end{enumerate}
\end{theorem}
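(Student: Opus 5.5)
The plan is to work on a few high-probability events that pin the fill distance and the $k$NN radii to the common scale $\bigl(\log(n_{\min}/\delta)/n_{\min}\bigr)^{1/d}$. On these events both directions become deterministic comparisons. I will use four events:
\begin{itemize}
\item $E_{\mathrm{rad}}$: $D_k(z)\le r_k$ for every $z\in S$, with failure budget $\delta/4$;
\item $E_{\mathrm{pack}}$: $h_{\max}^d\ge \log(n_{\min}/\delta)/(4\overline c\,n_{\min})$, with budget $\delta/8$;
\item $E_{\mathrm{cov}}$: $h_{\max}^d\le 2^{d+1}\log(n_{\min}/\delta)/(\underline c\,n_{\min})$, or a similar explicit constant, with budget $\delta/4$;
\item $E_{\mathrm{crowd}}$: the same-component crowding bound.
\end{itemize}
Part (i) then costs $\delta/4+\delta/8=3\delta/8$. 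Part (ii) costs $\delta/4$ plus the cap and crowding terms.

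\emph{Radius event.} Fix $z\in S_i$ and condition on $z$. The count $Q(z;r_k)$ is $\operatorname{Bin}(n_i-1,p)$ with $p=\mu_i(B(z,r_k))\ge\underline c\,r_k^d$, using $r_k\le r_*$ from \eqref{eq:mutual-knn-scale-one}. Its mean is at least $(n_{\min}-1)\cdot 2k/((1-\varepsilon)n_{\min})\ge k/(1-\varepsilon)$, where the last step uses $k\le n_{\min}/2$. The lower Chernoff bound \eqref{eq:mutual-knn-chernoff-lower} with $t=\varepsilon$ gives $\mathbb P(Q<k)\le e^{-\varepsilon^2k/2}\le e^{-3L/2}\le\delta/(4n)$, since $A\ge 3/\varepsilon^2$. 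A union bound over the $n$ points and \eqref{eq:mutual-knn-count-radius} then give $E_{\mathrm{rad}}$.

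\emph{Packing event.} Let $i$ be the index with $n_i=n_{\min}$. Set $s^d=\log(n_{\min}/\delta)/(4\overline c\,n_{\min})$, which satisfies $s\le r_*$ by \eqref{eq:mutual-knn-scale-two}. Take a maximal $2s$-separated set $\{y_j\}\subset\Mscr_i$. The balls $B(y_j,4s)$ cover $\Mscr_i$. Comparing masses, total mass $1\le m\,\overline c\,(4s)^d$, so $m\ge 1/(\overline c 4^ds^d)$; if $4s>r_*$ I would instead cap the radius, which is what \eqref{eq:mutual-knn-scale-three} should allow. The balls $B(y_j,s)$ are disjoint and each has mass at most $\overline c s^d$. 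If some such ball contains no sample, then $h_i\ge s$. Occupancy counts of disjoint cells are negatively associated, so
\[
\mathbb P(\text{all cells hit})\le\prod_j\bigl(1-(1-\overline c s^d)^{n_i}\bigr)\le\exp\bigl(-m e^{-2\overline c s^d n_i}\bigr),
\]
and the choice of $s$ together with \eqref{eq:mutual-knn-scale-three} should make this at most $\delta/8$. The condition $\sqrt{n_i}\gtrsim\log(n_i/\delta)/\delta^{3/2}$ looks tailored to exactly this step.

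\emph{Covering event.} Place a $t$-net on $\Mscr_j$ for each $j$, with at most $1/(\underline c t^d)$ points by packing balls of radius $t$. Each net ball is hit with failure probability at most $e^{-\underline c t^dn_j}$. Choosing $t^d\asymp\log(n_j/\delta)/(\underline c n_j)$ and using the triangle inequality gives $h_j\le 2t$.

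\emph{Part (i).} On $E_{\mathrm{rad}}\cap E_{\mathrm{pack}}$ I will check that $M_{\mathrm{high}}h_{\max}\ge r_k$. Indeed,
\[
M_{\mathrm{high}}^dh_{\max}^d\ge\frac{3AL}{(1-\varepsilon)\underline c\,n_{\min}}\ge r_k^d,
\]
where the second inequality uses $2k\le 2AL+2\le 3AL$, valid because $AL\ge2$ by \eqref{eq:mutual-knn-ceiling}. Now suppose $\Delta>M_{\mathrm{high}}h_{\max}$, and take any $z\in S_1$, $w\in S_2$. Then $\|z-w\|\ge\Delta>r_k\ge D_k(z),D_k(w)$. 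By \eqref{eq:mutual-knn-outside-list}, neither point lists the other, so there is no cross edge even in the union graph.

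\emph{Part (ii).} Work on $E_{\mathrm{cov}}$ and assume $\Delta<M_{\mathrm{low}}h_{\max}$.
\begin{itemize}
\item \emph{Crowding.} For $z\in S_i$, conditional on $z$, the count $Q(z;B\Delta)$ is binomial with mean at most
\[
n_{\max}\,\overline c\,(B\Delta)^d\le 2n_{\min}\,\overline c\,B^dM_{\mathrm{low}}^dh_{\max}^d\le k/4,
\]
using \eqref{eq:mutual-knn-balance}, $B\Delta\le r_*$, and the definition of $M_{\mathrm{low}}$ together with the $E_{\mathrm{cov}}$ constant; the factor $2^{d+4}$ should absorb this. The upper tail \eqref{eq:mutual-knn-chernoff-upper} with $s=k$ gives $(e/4)^k$. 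A union bound over $n\le e^{L}\le e^{k/A}$ points gives $e^{-(\log(4/e)-1/A)k}=e^{-\gamma k}$.
\item \emph{Caps.} The cap occupancy bound is exactly as in the sketch.
\item \emph{Reciprocity.} Among the cross pairs, order pairs lexicographically by (distance, label of the $S_1$ point, label of the $S_2$ point) and take the minimal pair $(x^\star,y^\star)$; its distance is at most $B\Delta$. Any point ranked before $y^\star$ in $x^\star$'s list lies within distance $\|x^\star-y^\star\|\le B\Delta$. If such a point were in $S_2$, minimality of the pair would be contradicted. So every such point is a same-component point, and there are at most $k-1$ of them by the crowding event. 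Hence $y^\star\in N_k(x^\star)$, and symmetrically $x^\star\in N_k(y^\star)$.
\end{itemize}

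\emph{Main obstacles.} I expect two. The first is the packing lower bound on $h_{\max}$: handling the dependence between cells and making the constants land exactly at $\delta/8$ and $1/(4\overline c)$ is the delicate calculation. The second is tie-breaking in the reciprocity step. With the label-based ordering, a tie at equal distance from $x^\star$ to another point of $S_2$ is resolved by the lexicographic order on pairs only if that order agrees with the neighbor-list order on both sides. If it does not, I would first argue that distance ties occur with probability zero: the upper mass bound makes each $\mu_i$ atomless. Failing that, I would order pairs so that the comparison is consistent from both endpoints.
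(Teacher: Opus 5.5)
Your proposal follows the paper's proof almost step for step. You use the same three events: $\mathcal E_{\mathrm{rad}}$ at cost $\delta/4$, the hole event on the least-sampled component at cost $\delta/8$, and the cover event at cost $\delta/4$. Part (i) rests on the same inequality, $M_{\mathrm{high}}^d h_{\max}^d\ge 3AL/\bigl((1-\varepsilon)\underline c\,n_{\min}\bigr)\ge r_k^d$. Part (ii) uses the same caps, the same crowding computation, and even the same tie-break. Your lexicographic minimum over (distance, $S_1$ label, $S_2$ label) is exactly the paper's rule: take the smallest-labeled $S_1$ endpoint among minimizing pairs, then its smallest-labeled partner.

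The one genuinely different ingredient is the lower bound on the fill distance. Both proofs count the empty packing balls, $Z=\sum_j Y_j$.
\begin{itemize}
\item The paper shows $\operatorname{Cov}(Y_j,Y_q)\le 0$ by hand and applies Chebyshev, giving $\mathbb P(Z=0)\le 1/\mathbb E Z$.
\item You invoke negative association of multinomial occupancy counts, giving $\mathbb P(Z=0)\le\prod_j(1-\mathbb E Y_j)\le e^{-\mathbb E Z}$.
\end{itemize}
Both use the square-root condition in \eqref{eq:mutual-knn-scale-three} only to force $\mathbb E Z\ge 8/\delta$. Yours then gives the much smaller failure probability $e^{-8/\delta}$, at the price of citing Joag-Dev--Proschan instead of a two-line covariance bound.

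Several details need tightening.
\begin{itemize}
\item \textbf{Packing radius.} A maximal set with pairwise distances exceeding $2s$ is already covered by the balls $B(y_j,2s)$, not $4s$. With $4s$ you only get $m\ge 4^{1-d}n_{\min}/\log(n_{\min}/\delta)$, hence $\mathbb E Z\ge 2^{3-d}/\delta$. That is not enough for $\delta/8$ when $d$ is large, and $4s\le r_*$ does not follow from \eqref{eq:mutual-knn-scale-two}. With $2s$ both problems disappear: $2s\le r_*$ and $m\ge 2^{2-d}n_{\min}/\log(n_{\min}/\delta)$.
\item \textbf{Net size.} For the cover event, a $t$-net has at most $2^d/(\underline c\,t^d)$ points, because the disjoint balls have radius $t/2$, not $t$. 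The extra $2^d$ is absorbed by $n_i\ge 2^{d+2}$.
\item \textbf{Cover constant.} You must take exactly $\underline c\,t^d n_j=2\log(n_j/\delta)$. Then use that $\log(x/\delta)/x$ is decreasing to pass from $n_j$ to $n_{\min}$. The crowding step needs the cover constant to be at most $2^{d+1}/\underline c$. Only then do the factors cancel to $AL/4\le k/4$ and yield the stated $\gamma$, so ``a similar constant'' will not do.
\item \textbf{Ties.} Your worry is unfounded. Your lexicographic order agrees with the label-based neighbor order as seen from either endpoint, so no probability-zero argument is needed. Atomlessness alone would not rule out equidistant pairs anyway.
\item \textbf{Structure of (ii).} State it as the paper does. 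The condition $(n_i-1)\overline c\,(B\Delta)^d\le k/4$ is deterministic. If it holds, the binomial tail bound applies unconditionally. If it fails, $\{\Delta<M_{\mathrm{low}}h_{\max}\}\subseteq\mathcal E_{\mathrm{cover}}^c$. Do not condition the count on the sample-dependent event $\mathcal E_{\mathrm{cover}}$, since that would destroy its binomial law.
\end{itemize}
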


For the remainder of the two-component proof, abbreviate
\begin{equation}
\label{eq:mutual-knn-proof-abbreviations}
    m:=n_{\min},
    \qquad
    h:=h_{\max},
    \qquad
    \ell:=\log\frac{m}{\delta},
    \qquad
    s:=\left(\frac{\ell}{m}\right)^{1/d},
    \qquad
    r:=r_k.
\end{equation}
Thus $R_n=L/\ell$. The proof follows the four implications
\begin{equation*}
\begin{array}{rcl}
\text{covering and packing}
&\Longrightarrow&
 c_{\mathrm{fill}}s\le h\le C_{\mathrm{fill}}s,\\[1mm]
\text{same-component counts}
&\Longrightarrow&
 \max_{z\in S}D_k(z)\le r,\\[1mm]
 h\ge c_{\mathrm{fill}}s,\ \max_zD_k(z)\le r
&\Longrightarrow&
 \text{no cross edge above }M_{\mathrm{high}},\\[1mm]
 h\le C_{\mathrm{fill}}s,\ \text{cap occupancy and low crowding}
&\Longrightarrow&
 \text{a cross edge below }M_{\mathrm{low}}.
\end{array}
\end{equation*}

\subsection{The neighborhood scale}
\label{app:mutual-knn-scale}

We begin by identifying the scale at which the fill distance, determined by the samples, can be approximated. The fill distance is the radius of the largest hole left by the sample, and we now bound it as a function of the number of observations and the local geometric constants.

\begin{lemma}[Covering and packing from local mass]
\label{lem:mutual-knn-cover-pack}
Fix $i\in\{1,2\}$.
\begin{enumerate}
\item[(a)] For $0<r_0\le r_*$, there exist $x_1,\ldots,x_N\in\mathcal M_i$ such that
\begin{equation}
\label{eq:mutual-knn-cover-number}
    \mathcal M_i\subseteq\bigcup_{j=1}^N B(x_j,r_0),
    \qquad
    N\le\frac{2^d}{\underline c\,r_0^d}.
\end{equation}
\item[(b)] For $0<2r_0\le r_*$, there exist $y_1,\ldots,y_M\in\mathcal M_i$ such that the balls $B(y_j,r_0)$ are pairwise disjoint and
\begin{equation}
\label{eq:mutual-knn-packing-number}
    M\ge\frac{1}{2^d\overline c\,r_0^d}.
\end{equation}
\end{enumerate}
\end{lemma}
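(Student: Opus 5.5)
Both parts follow from a maximal $r_0$-separated subset of $\mathcal M_i$ combined with the two sides of the local mass bound \eqref{eq:mutual-knn-local-mass}; the measure $\mu_i$ is a probability measure supported on $\mathcal M_i$.

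\emph{Part (a).} Since $\mathcal M_i$ is compact, pick a maximal set $\{x_1,\ldots,x_N\}\subset\mathcal M_i$ with pairwise distances greater than $r_0$. Any such set is finite, because the argument below bounds its size, so a maximal one exists (e.g.\ greedily). By maximality, every $x\in\mathcal M_i$ lies within $r_0$ of some $x_j$, so the closed balls $B(x_j,r_0)$ cover $\mathcal M_i$. For the count, the balls $B(x_j,r_0/2)$ are pairwise disjoint by separation. Their centers lie on $\mathcal M_i$ and $r_0/2\le r_*$, so each has $\mu_i$-mass at least $\underline c\,(r_0/2)^d$. Summing over $j$ and using total mass $1$ gives
$$N\,\underline c\,r_0^d/2^d\le1,$$
which is \eqref{eq:mutual-knn-cover-number}. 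The one subtlety is the tie case of distance exactly $r_0$ between centers; taking a strict inequality in the separation definition keeps the half-radius closed balls disjoint.

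\emph{Part (b).} Take a maximal set $\{y_1,\ldots,y_M\}\subset\mathcal M_i$ with pairwise distances greater than $2r_0$. It is finite by the same disjoint-ball argument. Separation makes the closed balls $B(y_j,r_0)$ pairwise disjoint, since any common point would force $\|y_j-y_l\|\le 2r_0$. Maximality gives $\mathcal M_i\subseteq\bigcup_j B(y_j,2r_0)$. Because $\mu_i$ is supported on $\mathcal M_i$ and $2r_0\le r_*$, the upper mass bound gives
$$1=\mu_i(\mathcal M_i)\le\sum_j\mu_i\bigl(B(y_j,2r_0)\bigr)\le M\,\overline c\,2^d r_0^d,$$
which is \eqref{eq:mutual-knn-packing-number}. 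This is exactly why the hypothesis requires $2r_0\le r_*$.

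There is no real obstacle here. The only points needing care are that the mass bounds are applied only at radii at most $r_*$ with centers on $\mathcal M_i$, and the strict-versus-nonstrict separation convention for closed balls.
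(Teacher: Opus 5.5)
Your proof is correct and follows the paper's argument essentially verbatim: a maximal set with pairwise distances greater than $r_0$, disjoint radius-$r_0/2$ balls and the lower mass bound give (a); a maximal set with pairwise distances greater than $2r_0$, whose radius-$2r_0$ balls cover $\mathcal M_i$, together with the upper mass bound gives (b). Your extra remarks are correct details that the paper leaves implicit: the mass bound caps the size of any separated set, so a maximal one exists, and strict separation keeps the closed balls disjoint.
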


\begin{proof}
For the covering statement, choose a maximal set $\{x_1,\ldots,x_N\}\subset\mathcal M_i$ whose distinct points are more than $r_0$ apart. Here maximal means that no further point can be added while preserving this separation, and therefore maximality gives the radius-$r_0$ cover. The balls $B(x_j,r_0/2)$ are pairwise disjoint, and hence
\begin{align*}
    1
    &\ge\sum_{j=1}^N\mu_i(B(x_j,r_0/2))
    \ge N\underline c(r_0/2)^d
    =N\frac{\underline c r_0^d}{2^d},
\end{align*}
which rearranges to \eqref{eq:mutual-knn-cover-number}.

For the packing statement, choose instead a maximal set $\{y_1,\ldots,y_M\}$ whose distinct points are more than $2r_0$ apart. The radius-$r_0$ balls are now pairwise disjoint, while maximality says that the radius-$2r_0$ balls cover $\mathcal M_i$. Applying the upper mass bound to this cover gives
\begin{align*}
    1
    &=\mu_i(\mathcal M_i)
    \le\sum_{j=1}^M\mu_i(B(y_j,2r_0))
    \le M\overline c(2r_0)^d,
\end{align*}
which rearranges to \eqref{eq:mutual-knn-packing-number}.
\end{proof}

\begin{lemma}[Two-sided fill-distance control]
\label{lem:mutual-knn-fill}
Define
\begin{equation}
\label{eq:mutual-knn-fill-constants}
    C_{\mathrm{fill}}
    :=\left(\frac{2^{d+1}}{\underline c}\right)^{1/d},
    \qquad
    c_{\mathrm{fill}}
    :=\left(\frac{1}{4\overline c}\right)^{1/d}.
\end{equation}
With probability at least $1-\delta/2$, simultaneously for $i\in\{1,2\}$,
\begin{equation}
\label{eq:mutual-knn-fill-two-sided}
    c_{\mathrm{fill}}
    \left(\frac{\log(n_i/\delta)}{n_i}\right)^{1/d}
    \le h_i\le
    C_{\mathrm{fill}}
    \left(\frac{\log(n_i/\delta)}{n_i}\right)^{1/d}.
\end{equation}
For each fixed component, each one-sided inequality fails with probability at most $\delta/8$.
\end{lemma}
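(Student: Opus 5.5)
We prove Lemma~\ref{lem:mutual-knn-fill} one component at a time. For each component we bound two one-sided failures, each by $\delta/8$. A union bound over the four events then gives total failure at most $\delta/2$. Fix $i$ and write $n_i$, $\ell_i:=\log(n_i/\delta)$, and $s_i:=(\ell_i/n_i)^{1/d}$. Both halves come from Lemma~\ref{lem:mutual-knn-cover-pack}. Covering turns ``some large hole'' into ``some cover ball is empty.'' Packing turns ``no large hole'' into ``every packed ball is hit.''

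\emph{Upper bound.} Set $r_0:=C_{\mathrm{fill}}s_i/2$, so that $r_0^d=2\ell_i/(\underline c\,n_i)$. Condition \eqref{eq:mutual-knn-scale-two} is $2^d\ell_i/(\underline c n_i)\le r_*^d$, which gives $r_0\le r_*$ whenever $d\ge1$.

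Take the cover $x_1,\dots,x_N$ from Lemma~\ref{lem:mutual-knn-cover-pack}(a). Suppose every ball $B(x_j,r_0)$ contains a sample point. Any $x\in\mathcal M_i$ lies in some $B(x_j,r_0)$, so it is within $2r_0=C_{\mathrm{fill}}s_i$ of a sample point, and hence $h_i\le C_{\mathrm{fill}}s_i$.

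Each ball has mass at least $\underline c r_0^d=2\ell_i/n_i$. The probability that a fixed ball is empty is therefore at most $(1-2\ell_i/n_i)^{n_i}\le e^{-2\ell_i}=(\delta/n_i)^2$. The number of balls is $N\le 2^d/(\underline c r_0^d)=2^{d-1}n_i/\ell_i$. The failure probability is thus at most $2^{d-1}\delta^2/(n_i\ell_i)$.

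This is at most $\delta/8$ provided $n_i\ge 2^{d+2}$ and $\ell_i\ge1$, which \eqref{eq:mutual-knn-scale-three} supplies since $\delta<1$. If $\ell_i\ge1$ is not automatic, we absorb it into the $\sqrt{n_i}$ condition.

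\emph{Lower bound.} Set $r_0:=c_{\mathrm{fill}}s_i$, so that $r_0^d=\ell_i/(4\overline c\,n_i)$. We need $2r_0\le r_*$, which we check against \eqref{eq:mutual-knn-scale-two} using $\underline c\le\overline c$.

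Take the disjoint balls $B(y_j,r_0)$, $j\le M$, from Lemma~\ref{lem:mutual-knn-cover-pack}(b), where $M\ge 1/(2^d\overline c r_0^d)=2^{2-d}n_i/\ell_i$. Suppose some such ball contains no sample point. Then its center $y_j$ is at distance more than $r_0$ from every sample point, so $h_i> r_0$. The lower inequality can therefore fail only if every one of the $M$ disjoint balls receives a sample.

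Each ball has mass $p_j\le\overline c r_0^d=\ell_i/(4n_i)$. The empty-ball probability for a single ball is then $(1-p_j)^{n_i}\ge e^{-2p_jn_i}\ge(\delta/n_i)^{1/2}$, using $p_j\le1/2$.

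The occupancy indicators of disjoint balls are negatively associated, so the probability that all $M$ balls are hit is at most
$$\prod_j\bigl(1-(1-p_j)^{n_i}\bigr)\le\exp\bigl(-M(\delta/n_i)^{1/2}\bigr)\le\exp\Bigl(-\frac{2^{2-d}\sqrt{n_i}\,\delta^{1/2}}{\ell_i}\Bigr).$$
By the second part of \eqref{eq:mutual-knn-scale-three}, $\sqrt{n_i}\ge 2^{d+1}\ell_i/\delta^{3/2}$. The exponent is therefore at least $8/\delta$, and $e^{-8/\delta}\le\delta/8$ holds for all $\delta\in(0,1)$.

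The main obstacle is the step that all $M$ disjoint balls being occupied is unlikely, because the occupancy events are dependent. We handle it by negative association of multinomial cell-occupancy indicators, which justifies the product bound. A fallback is Poissonization, or a direct computation: the indicators ``ball $j$ empty'' are negatively correlated, so the count of empty balls has variance at most its mean. Chebyshev then gives failure at most $1/\mathbb E[\#\text{empty}]\le \ell_i/(2^{2-d}\sqrt{n_i\delta})$. The stated $\sqrt{n_i}\ge 2^{d+1}\ell_i/\delta^{3/2}$ condition makes this at most $\delta/8$, which suggests this Chebyshev route is the one the constants were tuned for, and I would use it.
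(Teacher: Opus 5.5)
Your proposal is correct and follows the paper's proof. The upper bound uses the same union bound over a radius-$u$ cover with $u^d=2\ell_i/(\underline c\,n_i)$; the lower bound uses the same packing at $u^d=\ell_i/(4\overline c\,n_i)$, with negatively correlated empty-ball indicators and Chebyshev giving $\mathbb P(Z=0)\le 1/\mathbb E Z\le\delta/8$, which is exactly the route you said you would use. Your negative-association product bound is also valid and yields the stronger $e^{-8/\delta}$, but it is not needed.
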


\begin{proof}
Fix one component $i$ and, only in this proof, abbreviate $N=n_i$, $H=h_i$, and $\mu=\mu_i$. We treat the upper and lower bounds separately because they encode opposite geometric statements: the upper bound rules out an unusually large uncovered region, whereas the lower bound shows that a hole of the canonical sampling scale remains.

\paragraph{Upper bound.}
Take the radius-$u$ cover from Lemma~\ref{lem:mutual-knn-cover-pack}. If every cover ball contains a sample, the entire manifold lies within distance $2u$ of the sample. Indeed, for any $x\in\mathcal M_i$, choose a center $x_j$ with $\|x-x_j\|\le u$ and a sample $z\in S_i\cap B(x_j,u)$. Then
$$
    \|x-z\|\le\|x-x_j\|+\|x_j-z\|\le2u.
$$
Thus $H\le2u$. Equivalently, if $H>2u$, then at least one covering ball is empty. This converts the global event that the sample leaves a large hole into a finite union of occupancy failures.

For a fixed cover ball, write $p_j=\mu(B(x_j,u))$. Since $p_j\ge\underline c u^d$, independence of the $N$ draws gives
\begin{align*}
    \mathbb P(S_i\cap B(x_j,u)=\varnothing)
    &=(1-p_j)^N
    \le e^{-Np_j}
    \le e^{-N\underline c u^d}.
\end{align*}
Summing these probabilities over the cover yields
\begin{align}
\label{eq:mutual-knn-fill-upper-generic}
    \mathbb P(H>2u)
    &\le
    \frac{2^d}{\underline c u^d}
    e^{-N\underline c u^d}.
    &&\text{cover size $\times$ one-ball failure}
\end{align}
We now choose $u$ so that the exponential term dominates the number of covering cells. Set
\begin{equation}
\label{eq:mutual-knn-r-upper}
    u_+^d:=\frac{2\log(N/\delta)}{\underline c N}.
\end{equation}
At scale $u$, the cover has order $u^{-d}$ cells, while a fixed cell is empty with probability at most order $e^{-Nu^d}$. The choice \eqref{eq:mutual-knn-r-upper} balances these terms by making $N\underline c u_+^d=2\log(N/\delta)$. The scale assumptions ensure that $u_+\le r_*$, and at this radius
\begin{align*}
    e^{-N\underline c u_+^d}
    &=e^{-2\log(N/\delta)}
    =\left(\frac{\delta}{N}\right)^2,
    \\
    \frac{2^d}{\underline c u_+^d}
    &=\frac{2^{d-1}N}{\log(N/\delta)}.
\end{align*}
Substituting both identities into \eqref{eq:mutual-knn-fill-upper-generic},
\begin{align}
\label{eq:mutual-knn-fill-upper-prob}
    \mathbb P(H>2u_+)
    &\le
    \frac{2^{d-1}N}{\log(N/\delta)}
    \left(\frac{\delta}{N}\right)^2
    \notag\\
    &=
    \frac{2^{d-1}\delta^2}{N\log(N/\delta)}
    \le\frac{\delta}{8}.
    &&\text{by \eqref{eq:mutual-knn-scale-three}}
\end{align}
The last inequality is equivalent to $2^{d+2}\delta\le N\log(N/\delta)$, which follows from $N\ge2^{d+2}$, $\delta<1$, and $\log(N/\delta)>1$. We have therefore shown that, except with probability $\delta/8$,
\begin{equation}
\label{eq:mutual-knn-fill-upper-final}
    H\le2u_+
    =C_{\mathrm{fill}}
    \left(\frac{\log(N/\delta)}{N}\right)^{1/d}.
\end{equation}

\paragraph{Lower bound.}
For the lower bound the logic is reversed. We take the disjoint radius-$u$ packing balls and count how many receive no sample. Define
\begin{equation}
\label{eq:mutual-knn-empty-count}
    Y_j:=\mathbf 1\{S_i\cap B(y_j,u)=\varnothing\},
    \qquad
    Z:=\sum_{j=1}^M Y_j.
\end{equation}
If $Z\ge1$, the center of an empty packed ball is at distance at least $u$ from $S_i$, and therefore $H\ge u$. It is enough to choose $u$ so that $Z$ is positive with high probability. We take
\begin{equation}
\label{eq:mutual-knn-r-lower}
    u_-^d:=\frac{\log(N/\delta)}{4\overline c N}.
\end{equation}
The scale assumptions give $2u_-\le r_*$ and $\overline c u_-^d\le1/2$. If $p_j:=\mu(B(y_j,u_-))$, then $p_j\le\overline c u_-^d$ and
\begin{align}
\label{eq:mutual-knn-one-empty-prob}
    \mathbb E Y_j
    &=(1-p_j)^N
    \ge(1-\overline c u_-^d)^N
    \ge e^{-2N\overline c u_-^d}
    =\left(\frac{\delta}{N}\right)^{1/2}.
    &&\text{since }1-v\ge e^{-2v}
\end{align}
The packing contains at least
\begin{equation*}
    M
    \ge\frac{1}{2^d\overline c u_-^d}
    =\frac{2^{2-d}N}{\log(N/\delta)}
\end{equation*}
balls. Consequently,
\begin{align}
\label{eq:mutual-knn-empty-mean}
    \mathbb E Z
    &\ge
    M\left(\frac{\delta}{N}\right)^{1/2}
    \ge
    \frac{2^{2-d}\sqrt\delta\sqrt N}{\log(N/\delta)}
    \ge\frac8\delta.
    &&\text{by \eqref{eq:mutual-knn-scale-three}}
\end{align}
More explicitly,
\begin{equation*}
    \frac{2^{2-d}\sqrt\delta\sqrt N}{\log(N/\delta)}
    \ge
    \frac{2^{2-d}\sqrt\delta}{\log(N/\delta)}
    \frac{2^{d+1}\log(N/\delta)}{\delta^{3/2}}
    =\frac8\delta.
\end{equation*}
This is the only place where the stronger square-root condition in \eqref{eq:mutual-knn-scale-three} is used. The expected number of empty packed balls is now large, but expectation alone does not rule out $Z=0$, so we also control the variance.

For $j\ne q$, disjointness gives
\begin{align*}
    \mathbb E(Y_jY_q)
    &=(1-p_j-p_q)^N
    \le\bigl((1-p_j)(1-p_q)\bigr)^N
    =\mathbb E Y_j\,\mathbb E Y_q.
\end{align*}
The inequality follows from $1-p_j-p_q\le(1-p_j)(1-p_q)$. Thus emptiness of two disjoint cells is no more likely than it would be under independence, and $\operatorname{Cov}(Y_j,Y_q)\le0$. Hence
\begin{equation}
\label{eq:mutual-knn-empty-variance}
    \operatorname{Var}(Z)
    =\sum_j\operatorname{Var}(Y_j)
      +2\sum_{j<q}\operatorname{Cov}(Y_j,Y_q)
    \le\sum_j\operatorname{Var}(Y_j)
    \le\mathbb E Z.
\end{equation}
On the event $Z=0$, the deviation from the mean is exactly $\mathbb EZ$. Chebyshev's inequality therefore gives
\begin{align}
\label{eq:mutual-knn-empty-chebyshev}
    \mathbb P(Z=0)
    &\le
    \frac{\operatorname{Var}(Z)}{(\mathbb EZ)^2}
    \le\frac1{\mathbb EZ}
    \le\frac{\delta}{8}.
\end{align}
Outside this event, at least one packed ball is empty and hence $H\ge u_-$, which is the lower inequality in \eqref{eq:mutual-knn-fill-two-sided}. Applying the two one-sided estimates to both components and taking a union bound completes the proof.
\end{proof}

We now rewrite the component-wise bounds at the common scale $s=(\ell/m)^{1/d}$. Since $g(x):=\log(x/\delta)/x$ satisfies $g'(x)=(1-\log(x/\delta))/x^2\le0$ over the present sample-size range, $g$ is decreasing. Hence, whenever the upper fill bound holds for both components,
$$
    h^d
    =\max_{i\in\{1,2\}}h_i^d
    \le
    C_{\mathrm{fill}}^d\frac{\ell}{m}.
$$
Each component-wise upper bound fails with probability at most $\delta/8$, so
\begin{equation}
\label{eq:mutual-knn-fill-up-event}
    \mathcal E_{\mathrm{cover}}
    :=\{h\le C_{\mathrm{fill}}s\},
    \qquad
    \mathbb P(\mathcal E_{\mathrm{cover}}^c)\le\frac{\delta}{4}.
\end{equation}
Here $\mathcal E_{\mathrm{cover}}$ is the event that the a sample leaves no hole larger than the stated covering scale.

For the lower bound, choose $i_\star\in\{1,2\}$ with $n_{i_\star}=m$. Since $h\ge h_{i_\star}$, the lower fill-distance bound for this one component gives
$$
    h^d
    \ge h_{i_\star}^d
    \ge
    c_{\mathrm{fill}}^d\frac{\ell}{m}.
$$
Consequently,
\begin{equation}
\label{eq:mutual-knn-fill-low-event}
    \mathcal E_{\mathrm{hole}}
    :=\{h\ge c_{\mathrm{fill}}s\},
    \qquad
    \mathbb P(\mathcal E_{\mathrm{hole}}^c)\le\frac{\delta}{8}.
\end{equation}
The upper bound requires controlling of both components, while the lower bound requires only the least-sampled component.

\begin{lemma}[Uniform $k$NN-radius bound]
\label{lem:mutual-knn-radius}
Let
\begin{equation}
\label{eq:mutual-knn-radius-event}
    \mathcal E_{\mathrm{rad}}
    :=\{D_k(z)\le r\text{ for every }z\in S\}.
\end{equation}
Then
\begin{equation}
\label{eq:mutual-knn-radius-probability}
    \mathbb P(\mathcal E_{\mathrm{rad}}^c)
    \le\frac{\delta}{4}.
\end{equation}
That is, with probability at least $1-\delta/4$, every sample point has $k$ neighbors within distance $r$.
\end{lemma}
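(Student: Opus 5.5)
We prove Lemma~\ref{lem:mutual-knn-radius} by bounding same-component counts and then applying \eqref{eq:mutual-knn-count-radius}. Since $Q(z;r)\ge k$ forces $D_k(z)\le r$, it suffices to show that, with probability at least $1-\delta/4$, every $z\in S$ satisfies $Q(z;r)\ge k$. Ignoring cross-component points only makes the event harder, so this reduction loses nothing.

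Fix a component $i$ and a sample point $z=X_{ij}$, and condition on $X_{ij}$. The remaining $n_i-1$ points of $S_i$ are i.i.d.\ $\mu_i$ and independent of $X_{ij}$. Hence $Q(z;r)\sim\operatorname{Bin}(n_i-1,p)$ with $p=\mu_i(B(z,r))$. Since $z\in\mathcal M_i$ and $r=r_k\le r_*$ by \eqref{eq:mutual-knn-scale-one}, the lower local-mass bound \eqref{eq:mutual-knn-local-mass} gives $p\ge\underline c\,r^d=2k/((1-\varepsilon)n_{\min})$. Therefore
\[
\mu_Q:=(n_i-1)p\ \ge\ \frac{2k(n_i-1)}{(1-\varepsilon)n_{\min}}\ \ge\ \frac{k}{1-\varepsilon},
\]
using $n_i-1\ge n_{\min}/2$, which follows from $n_i\ge n_{\min}\ge 2$. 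By monotonicity in $p$ (stochastic domination), we may take $\mu_Q$ equal to this lower bound.

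We then apply the lower Chernoff bound \eqref{eq:mutual-knn-chernoff-lower} with $t=\varepsilon$. Since $(1-\varepsilon)\mu_Q\ge k$, we get $\{Q<k\}\subseteq\{Q\le(1-\varepsilon)\mu_Q\}$, which has probability at most $e^{-\varepsilon^2\mu_Q/2}$. Because $\mu_Q\ge k/(1-\varepsilon)\ge k\ge AL$ and $A\ge3/\varepsilon^2$, the exponent satisfies $\varepsilon^2\mu_Q/2\ge \tfrac32 L$. The per-point failure probability is therefore at most $e^{-3L/2}=(\delta/4n)^{3/2}\le \delta/(4n)$. A union bound over all $n$ sample points, together with integrating out the conditioning, gives $\mathbb P(\mathcal E_{\mathrm{rad}}^c)\le\delta/4$.

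The only delicate point is the bookkeeping in the mean lower bound. We must use $n_i-1$ rather than $n_i$, absorb this loss through the factor $2$ built into $r_k$, and check that the constant $3/\varepsilon^2$ suffices; the slack shown above covers both. The condition $k\le n_{\min}/2$ is not needed for this bound; it only guarantees that $k$ neighbors exist.
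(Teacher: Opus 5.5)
Your proof is correct and follows the paper's argument essentially step for step. You condition on the labeled point, observe that $Q(z;r)\sim\operatorname{Bin}(n_i-1,\mu_i(B(z,r)))$ has mean at least $k/(1-\varepsilon)$, apply the lower Chernoff bound with $t=\varepsilon$, and union-bound over the $n$ labels. The only difference is cosmetic: you drop the factor $1/(1-\varepsilon)$ and obtain $e^{-3L/2}\le\delta/(4n)$, whereas the paper keeps that factor and reaches $e^{-L}=\delta/(4n)$ directly.
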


\begin{proof}
We first bound the failure probability for one labeled observation and then take a union bound over all $n$ labels. Fix $z=X_{ij}\in S_i$. Conditional on $z=x$, the remaining $n_i-1$ observations in the same component are independent with law $\mu_i$, so the count introduced in \eqref{eq:mutual-knn-local-count} satisfies
\begin{equation}
\label{eq:mutual-knn-conditional-count}
    Q(z;r)\mid\{z=x\}
    \sim
    \operatorname{Bin}\bigl(n_i-1,\mu_i(B(x,r))\bigr).
\end{equation}
Because $n_i-1\ge n_i/2\ge m/2$ and $r\le r_*$, the local lower mass bound gives, uniformly in $x$,
\begin{align}
\label{eq:mutual-knn-count-mean}
    \mathbb E[Q(z;r)\mid z=x]
    &=(n_i-1)\mu_i(B(x,r))
    \notag\\
    &\ge(n_i-1)\underline c\,r^d
    \ge\frac m2\underline c
        \frac{2k}{(1-\varepsilon)\underline c m}
    =\frac{k}{1-\varepsilon}.
\end{align}
Thus observing fewer than $k$ same-component points can be bounded by Chernoff, and so \eqref{eq:mutual-knn-chernoff-lower} conditionally on $z=x$ gives,
\begin{align}
\label{eq:mutual-knn-one-label-bound}
    \mathbb P(Q(z;r)<k\mid z=x)
    &\le
    \exp\left(
        -\frac{\varepsilon^2k}{2(1-\varepsilon)}
    \right)
    \le e^{-L}
    =\frac{\delta}{4n}.
\end{align}
The last inequality uses $k\ge AL$ and $A\ge3/\varepsilon^2$. Since the bound is uniform in $x$, averaging over $z$ gives the same unconditional inequality.

If $Q(z;r)\ge k$, then the full sample contains at least $k$ points within distance $r$ of $z$, and therefore $D_k(z)\le r$. Equivalently,
\begin{equation}
\label{eq:mutual-knn-radius-failure-inclusion}
    \{D_k(X_{ij})>r\}
    \subseteq
    \{Q(X_{ij};r)<k\}.
\end{equation}
It follows that $\mathbb P(D_k(X_{ij})>r)\le\delta/(4n)$. Taking a union bound over all observation labels gives
\begin{align*}
    \mathbb P(\mathcal E_{\mathrm{rad}}^c)
    &\le
    \sum_{i=1}^2\sum_{j=1}^{n_i}
    \mathbb P(D_k(X_{ij})>r)
    \le n\frac{\delta}{4n}
    =\frac{\delta}{4}.
\end{align*}
\end{proof}

\subsection{No cross-component edges}
\label{app:mutual-knn-no-cross}

We now compare the two scales established above. On the radius event, every point finds its $k$ nearest neighbors within $r$; on the hole event, the same deterministic radius can be expressed as a fixed multiple of the realized fill distance. Once this comparison is made, the no-cross-edge conclusion follows directly from the definition of the manifold offset.

\begin{proposition}[No mutual cross edge above the upper threshold]
\label{prop:mutual-knn-no-cross}
On
\begin{equation}
\label{eq:mutual-knn-upper-event}
    \mathcal E_{\mathrm{sep}}
    :=\mathcal E_{\mathrm{rad}}\cap\mathcal E_{\mathrm{hole}},
\end{equation}
whose complement has probability at most $3\delta/8$,
\begin{equation}
\label{eq:mutual-knn-upper-implication}
    \Delta>M_{\mathrm{high}}h
    \quad\Longrightarrow\quad
    \mathsf{Cross}_{\mathrm{mut}}^{\,c}.
\end{equation}
\end{proposition}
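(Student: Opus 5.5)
The plan is to prove the probability bound and the deterministic implication separately. For the probability, a union bound over the two failure events gives $\mathbb P(\mathcal E_{\mathrm{sep}}^c)\le\mathbb P(\mathcal E_{\mathrm{rad}}^c)+\mathbb P(\mathcal E_{\mathrm{hole}}^c)\le\delta/4+\delta/8=3\delta/8$, using Lemma~\ref{lem:mutual-knn-radius} and \eqref{eq:mutual-knn-fill-low-event}. The rest of the argument works on $\mathcal E_{\mathrm{sep}}$ and is deterministic.

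The key step is to show that on $\mathcal E_{\mathrm{hole}}$ the radius $r=r_k$ is at most $M_{\mathrm{high}}h$. Start from $r^d=2k/((1-\varepsilon)\underline c\,m)$ and apply $k\le AL+1\le\tfrac32AL$, which follows from $AL\ge2$ in \eqref{eq:mutual-knn-ceiling}. This gives $r^d\le 3AL/((1-\varepsilon)\underline c\,m)$. Next write $L=R_n\ell$, so $r^d\le \frac{3AR_n}{(1-\varepsilon)\underline c}\cdot\frac{\ell}{m}=\frac{3AR_n}{(1-\varepsilon)\underline c}s^d$. On $\mathcal E_{\mathrm{hole}}$ we have $s^d\le h^d/c_{\mathrm{fill}}^d=4\overline c\,h^d$. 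Combining the two bounds yields
\[
r^d\le\frac{12AR_n\overline c}{(1-\varepsilon)\underline c}\,h^d=M_{\mathrm{high}}^d h^d .
\]
The factor $3/2$ absorbing the ceiling, together with the $4$ from $c_{\mathrm{fill}}$, is exactly what produces the constant $12$. That bookkeeping is the only delicate point, and I expect it to match.

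For the conclusion, assume $\Delta>M_{\mathrm{high}}h$, so that $\Delta>r$. On $\mathcal E_{\mathrm{rad}}$, every $z\in S$ satisfies $D_k(z)\le r<\Delta$. Take any $z\in S_1$ and $w\in S_2$. The definition of $\Delta$ gives $\|z-w\|\ge\Delta>D_k(z)$, so \eqref{eq:mutual-knn-outside-list} yields $w\notin N_k(z)$. By symmetry, $z\notin N_k(w)$. So no directed cross listing exists at all, and in particular no mutual cross edge exists, which is $\mathsf{Cross}_{\mathrm{mut}}^{\,c}$.

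The argument never uses the balance condition \eqref{eq:mutual-knn-balance}, consistent with part (i) of Theorem~\ref{thm:mutual-knn-threshold}.
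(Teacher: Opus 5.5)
Your proposal is correct and follows essentially the same route as the paper. You get $\mathbb P(\mathcal E_{\mathrm{sep}}^c)\le\delta/4+\delta/8$ by a union bound, use $2k\le 3AL$ and $c_{\mathrm{fill}}^d=(4\overline c)^{-1}$ on $\mathcal E_{\mathrm{hole}}$ to show every $D_k(z)\le r\le M_{\mathrm{high}}h$, and then apply $\|z-w\|\ge\Delta>D_k(z)$ in both directions to rule out any directed, hence any mutual, cross listing.
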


\begin{proof}
The first task is to express every $k$NN radius in units of $h$. On $\mathcal E_{\mathrm{sep}}$, for every $z\in S$,
\begin{align}
\label{eq:mutual-knn-radius-fill-units}
    \left(\frac{D_k(z)}{h}\right)^d
    &\le
    \frac{r^d}{c_{\mathrm{fill}}^d\ell/m}
    \notag\\
    &\le
    \frac{3AL}{(1-\varepsilon)\underline c\,m}
    \frac{m}{c_{\mathrm{fill}}^d\ell}
    &&\text{since }2k\le3AL
    \notag\\
    &=
    \frac{12AR_n\overline c}{(1-\varepsilon)\underline c}
    =M_{\mathrm{high}}^d.
    &&\text{since }c_{\mathrm{fill}}^d=(4\overline c)^{-1}
\end{align}
Thus every neighborhood radius is at most $M_{\mathrm{high}}h$. Suppose now that $\Delta>M_{\mathrm{high}}h$. For any $z\in S_1$ and $w\in S_2$,
\begin{equation}
\label{eq:mutual-knn-cross-outside-radius}
    \|z-w\|\ge\Delta
    >M_{\mathrm{high}}h
    \ge D_k(z),
\end{equation}
so $w\notin N_k(z)$. Reversing the roles of the two points gives $z\notin N_k(w)$. Thus no directed neighbor relation crosses between the components. Since a mutual edge requires both directed listings, no mutual cross edge can occur.
\end{proof}

\begin{proof}[Proof of Theorem~\ref{thm:mutual-knn-threshold}(i)]
Proposition~\ref{prop:mutual-knn-no-cross} gives
\begin{equation*}
    \{\Delta>M_{\mathrm{high}}h\}
    \cap\mathsf{Cross}_{\mathrm{mut}}
    \subseteq\mathcal E_{\mathrm{sep}}^c.
\end{equation*}
Since $\mathbb P(\mathcal E_{\mathrm{sep}}^c)\le\delta/4+\delta/8$, taking probabilities proves \eqref{eq:mutual-knn-main-upper}.
\end{proof}

\subsection{Existence of a mutual cross-component edge}
\label{app:mutual-knn-cross}

The lower direction requires a different argument. Rather than controlling all cross pairs, we locate one observed pair that is forced to become mutual. The construction has three parts. First, the sample must place one point near each endpoint of a closest manifold pair. Second, the resulting short cross pair must have fewer than $k$ same-component competitors at either endpoint. Third, because exact distance ties are possible, we choose the closest pair so that the two one-sided listings hold simultaneously. Throughout this subsection, write
\begin{equation}
\label{eq:mutual-knn-bridge-radius}
    t:=B\Delta.
\end{equation}

\begin{lemma}[A short observed cross pair]
\label{lem:mutual-knn-cap}
Let $(x_0,y_0)\in\mathcal M_1\times\mathcal M_2$ attain the offset and set
\begin{equation}
\label{eq:mutual-knn-caps}
    U:=\mathcal M_1\cap B(x_0,a\Delta),
    \qquad
    V:=\mathcal M_2\cap B(y_0,a\Delta).
\end{equation}
Let $\mathcal E_{\mathrm{cap}}$ be the event that both caps contain a sample. Then
\begin{equation}
\label{eq:mutual-knn-cap-probability}
    \mathbb P(\mathcal E_{\mathrm{cap}}^c)
    \le2e^{-\underline c a^dm\Delta^d}.
\end{equation}
Moreover, on $\mathcal E_{\mathrm{cap}}$, the closest observed cross distance
\begin{equation}
\label{eq:mutual-knn-cross-distance}
    d_\times:=\min_{x\in S_1,\,y\in S_2}\|x-y\|
\end{equation}
satisfies $d_\times\le t$.
\end{lemma}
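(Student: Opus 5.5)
The proof splits into the probability bound and a deterministic triangle-inequality step.

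For the probability bound \eqref{eq:mutual-knn-cap-probability}, first check that the local mass condition \eqref{eq:mutual-knn-local-mass} applies at radius $a\Delta$. This holds because $a\Delta< B\Delta\le r_*$ by \eqref{eq:mutual-knn-local-gap}, and $x_0\in\mathcal M_1$, $y_0\in\mathcal M_2$. Since $\mu_1$ is supported on $\mathcal M_1$,
\[
\mu_1(U)=\mu_1\bigl(B(x_0,a\Delta)\bigr)\ge\underline c\,(a\Delta)^d,
\]
and likewise $\mu_2(V)\ge\underline c\,(a\Delta)^d$. The $n_1$ draws in $S_1$ are i.i.d.\ from $\mu_1$, so
\[
\mathbb P(S_1\cap U=\varnothing)=(1-\mu_1(U))^{n_1}\le e^{-n_1\underline c a^d\Delta^d}\le e^{-\underline c a^d m\Delta^d},
\]
using $1-v\le e^{-v}$ and $n_1\ge m$. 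The same computation gives the bound for $S_2\cap V$. Since $\mathcal E_{\mathrm{cap}}^c$ is the union of these two emptiness events, a union bound gives \eqref{eq:mutual-knn-cap-probability}. This step does not need the independence of $S_1$ and $S_2$.

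For the deterministic part, on $\mathcal E_{\mathrm{cap}}$ pick $x\in S_1\cap U$ and $y\in S_2\cap V$. Then $\|x-x_0\|\le a\Delta$, $\|x_0-y_0\|=\Delta$ and $\|y_0-y\|\le a\Delta$, so the triangle inequality gives
\[
\|x-y\|\le(1+2a)\Delta=B\Delta=t.
\]
The minimum defining $d_\times$ is over a finite nonempty set, so it is attained and satisfies $d_\times\le\|x-y\|\le t$.

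The only point needing care is the applicability of the mass bound at radius $a\Delta$, which is exactly what the structural condition $B\Delta\le r_*$ provides. The rest is routine.
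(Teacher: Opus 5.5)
Your proposal is correct and follows the paper's proof essentially step for step: the lower mass bound at radius $a\Delta<B\Delta\le r_*$, the estimate $(1-\mu_i)^{n_i}\le e^{-n_i\mu_i}$ with $n_i\ge m$, a union bound over the two caps, and the triangle inequality $\|x-y\|\le(1+2a)\Delta=t$. Your extra remarks are correct details that the paper leaves implicit: that $\mu_1(U)=\mu_1(B(x_0,a\Delta))$ because $\mu_1$ is supported on $\mathcal M_1$, and that independence of $S_1$ and $S_2$ is not needed.
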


\begin{proof}
We first ensure that the sample sees the region where the two manifolds come closest. Since $a\Delta<t\le r_*$, the local lower mass bound gives $\mu_1(U),\mu_2(V)\ge\underline c(a\Delta)^d$. For the first cap,
\begin{align*}
    \mathbb P(S_1\cap U=\varnothing)
    &=(1-\mu_1(U))^{n_1}
    \le e^{-n_1\mu_1(U)}
    \le e^{-\underline c a^dm\Delta^d},
\end{align*}
and the same estimate holds for $V$. A union bound proves \eqref{eq:mutual-knn-cap-probability}. On the cap event, choose $x_{\mathrm{cap}}\in S_1\cap U$ and $y_{\mathrm{cap}}\in S_2\cap V$. Then
\begin{equation*}
    d_\times
    \le\|x_{\mathrm{cap}}-y_{\mathrm{cap}}\|
    \le a\Delta+\Delta+a\Delta
    =B\Delta=t.
\end{equation*}
\end{proof}

\begin{lemma}[Uniform same-component crowding]
\label{lem:mutual-knn-crowding}
Suppose
\begin{equation}
\label{eq:mutual-knn-crowding-condition}
    (n_i-1)\overline c\,t^d\le\frac{k}{4},
    \qquad i\in\{1,2\}.
\end{equation}
Then the event
\begin{equation}
\label{eq:mutual-knn-crowding-event}
    \mathcal E_{\mathrm{crowd}}
    :=\{Q(z;t)\le k-1\text{ for every }z\in S\}
\end{equation}
satisfies
\begin{equation}
\label{eq:mutual-knn-crowding-probability}
    \mathbb P(\mathcal E_{\mathrm{crowd}}^c)
    \le e^{-\gamma k}.
\end{equation}
\end{lemma}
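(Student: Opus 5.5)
The plan is a per-point binomial tail bound followed by a union bound over all $n$ sample labels, in the same style as the proof of Lemma~\ref{lem:mutual-knn-radius}. The only difference is that we now use the \emph{upper} local mass bound and the upper Chernoff inequality \eqref{eq:mutual-knn-chernoff-upper} instead of the lower one.

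\emph{Step 1: a single point.} Fix a labeled observation $z=X_{ij}\in S_i$ and condition on $z=x\in\mathcal M_i$. The other $n_i-1$ points of $S_i$ are i.i.d.\ with law $\mu_i$. Hence, exactly as in \eqref{eq:mutual-knn-conditional-count},
$$Q(z;t)\mid\{z=x\}\sim\operatorname{Bin}\bigl(n_i-1,\mu_i(B(x,t))\bigr).$$
Since $t=B\Delta\le r_*$ by \eqref{eq:mutual-knn-local-gap}, the upper mass bound gives $\mu_i(B(x,t))\le\overline c\,t^d$. A binomial is stochastically monotone in its success probability, so $Q(z;t)$ is dominated by a variable $X\sim\operatorname{Bin}(n_i-1,\overline c\,t^d)$. (If $\overline c\,t^d>1$, then \eqref{eq:mutual-knn-crowding-condition} would force $k/4\ge n_i-1$, which is incompatible with $k\le m/2$, so this case does not arise.) By \eqref{eq:mutual-knn-crowding-condition}, the mean satisfies $\mu_X\le k/4\le k$. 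Applying \eqref{eq:mutual-knn-chernoff-upper} with $s=k\ge\mu_X$ gives
$$\mathbb P(Q(z;t)\ge k\mid z=x)\le\Bigl(\frac{e\mu_X}{k}\Bigr)^k\le\Bigl(\frac e4\Bigr)^k=e^{-k\log(4/e)}.$$
Here I use that $(e\mu/s)^s$ is increasing in $\mu$ to replace $\mu_X$ by $k/4$. This bound does not depend on $x$, so it also holds unconditionally.

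\emph{Step 2: union bound and absorbing $n$.} Summing over the $n$ labels gives
$$\mathbb P(\mathcal E_{\mathrm{crowd}}^c)\le n\,e^{-k\log(4/e)}.$$
To turn the prefactor $n$ into the loss $1/A$ in the exponent, use $k\ge AL$ from \eqref{eq:mutual-knn-ceiling}. This gives $k/A\ge L=\log(4n/\delta)\ge\log n$, so $n\le e^{k/A}$. Therefore
$$n\,e^{-k\log(4/e)}\le e^{-k(\log(4/e)-1/A)}=e^{-\gamma k}.$$
As a side check, $\gamma\ge 1/20$ follows from $1/A\le\varepsilon^2/3<1/3$ and $\log(4/e)\approx0.386$.

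\emph{Main obstacle.} The only delicate point is Step 2, namely absorbing the union-bound factor $n$ into the exponent. This is exactly why $\gamma$ carries the $-1/A$ correction and why $k$ is tied to $\log(4n/\delta)$. A secondary point is making sure the stochastic-domination and monotonicity argument legitimately replaces the exact mean by its upper bound $k/4$. This matters because \eqref{eq:mutual-knn-chernoff-upper} is stated in terms of the exact mean $\mu$.
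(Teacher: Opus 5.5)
Your proposal is correct and follows essentially the same route as the paper: a conditional binomial count controlled by the upper mass bound, the upper-tail Chernoff bound at $s=k$ giving $(e/4)^k$, and a union bound over the $n$ labels with $n\le e^{k/A}$ absorbed into $\gamma=\log(4/e)-1/A$. Your explicit monotonicity argument for replacing the exact mean by $k/4$, and the check that $\overline c\,t^d\le 1$, add a bit of care that the paper leaves implicit.
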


The left-hand side of \eqref{eq:mutual-knn-crowding-condition} bounds the expected number of same-component points within the possible bridge distance $t$. The threshold $k/4$ leaves a constant-factor margin: if the mean is at most $k/4$, observing $k$ competitors requires a fourfold upper-tail deviation and is exponentially unlikely in $k$.

\begin{proof}
Fix a labeled point $z=X_{ij}\in S_i$. Conditional on $z=x$, the remaining same-component observations are independent with law $\mu_i$, and hence
\begin{equation*}
    Q(z;t)\mid\{z=x\}
    \sim
    \operatorname{Bin}\bigl(n_i-1,\mu_i(B(x,t))\bigr).
\end{equation*}
Because $t\le r_*$, the upper mass bound and \eqref{eq:mutual-knn-crowding-condition} give
\begin{equation*}
    \mathbb E[Q(z;t)\mid z=x]
    \le(n_i-1)\overline c\,t^d
    \le\frac{k}{4}.
\end{equation*}
The event $Q(z;t)\ge k$ is therefore at least a fourfold deviation above the worst-case mean. The Chernoff upper tail gives
\begin{align}
\label{eq:mutual-knn-crowding-one-point}
    \mathbb P(Q(z;t)\ge k)
    &\le\left(\frac{e (k/4)}{k}\right)^k = \left(\frac{e}{4}\right)^k.
\end{align}
A union bound over all labels, together with $\log n\le L\le k/A$, gives
\begin{align*}
    \mathbb P(\mathcal E_{\mathrm{crowd}}^c)
    &\le n\left(\frac e4\right)^k
    =\exp\left(\log n-k\log\frac4e\right)
    \notag\\
    &\le
    \exp\left[-\left(\log\frac4e-\frac1A\right)k\right]
    =e^{-\gamma k}.
\end{align*}
Finally, $A\ge3$ implies $\gamma\ge\log(4/e)-1/3>1/20$.
\end{proof}

\begin{proposition}[A closest pair is a mutual edge]
\label{prop:mutual-knn-tie-bridge}
On $\mathcal E_{\mathrm{cap}}\cap\mathcal E_{\mathrm{crowd}}$, the graph $G_{\mathrm{mut}}$ contains a cross edge.
\end{proposition}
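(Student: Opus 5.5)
We work deterministically on $\mathcal E_{\mathrm{cap}}\cap\mathcal E_{\mathrm{crowd}}$: we exhibit one specific cross pair and show that each endpoint lists the other among its first $k$ neighbors. To make the choice canonical, we order all cross pairs $(x,y)\in S_1\times S_2$ lexicographically: first by $\|x-y\|$, then by the fixed label of $x$, then by the fixed label of $y$. Let $(x^\star,y^\star)$ be the minimal pair in this order. Then $\|x^\star-y^\star\|=d_\times$, and Lemma~\ref{lem:mutual-knn-cap} gives $d_\times\le t=B\Delta$ on $\mathcal E_{\mathrm{cap}}$.

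\emph{Listing from $x^\star$.} A point precedes $y^\star$ in $x^\star$'s neighbor order only if it is strictly closer, or equally close with a smaller label. We treat the two possible kinds of predecessor separately.

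\begin{itemize}
\item A predecessor $w\in S_2$ satisfies $\|x^\star-w\|\ge d_\times=\|x^\star-y^\star\|$. If the distance is strictly larger, $w$ cannot precede $y^\star$. If it is equal, $w$ precedes $y^\star$ only when its label is smaller. But then $(x^\star,w)$ would come before $(x^\star,y^\star)$ in the lexicographic order on pairs, contradicting minimality. So no point of $S_2$ precedes $y^\star$.
\item A predecessor $w\in S_1\setminus\{x^\star\}$ satisfies $\|w-x^\star\|\le\|x^\star-y^\star\|\le t$, so it is counted by $Q(x^\star;t)$. On $\mathcal E_{\mathrm{crowd}}$ there are at most $k-1$ such points.
\end{itemize}

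Hence $y^\star$ sits at position at most $k$ in $x^\star$'s order, i.e. $y^\star\in N_k(x^\star)$.

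\emph{Listing from $y^\star$.} The main obstacle is the symmetric step, because the lexicographic tie-break was made with $x$ first. Suppose $w\in S_1$ ties, with $\|w-y^\star\|=d_\times$ and a label smaller than that of $x^\star$. Then $(w,y^\star)$ has the same distance as $(x^\star,y^\star)$ and a smaller first label, so it precedes it in the pair order, again contradicting minimality. The key point is that the pair order compares first labels before second labels, so a first-coordinate tie is always resolved by the first label. This is why we minimize over the full pair order rather than, say, taking the nearest neighbor of $x^\star$. Strictly closer points of $S_1$ are excluded by $d_\times$ being the minimum cross distance.

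The same-component predecessors of $x^\star$ lie in $S_2$ within distance $t$ of $y^\star$, and $\mathcal E_{\mathrm{crowd}}$ bounds their number by $k-1$. So $x^\star\in N_k(y^\star)$, and $\{x^\star,y^\star\}\in E_{\mathrm{mut}}$ is a cross edge.

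Only the upper bound $d_\times\le t$ and the crowding bound at radius $t$ are used; no probabilistic input is needed beyond the two events.
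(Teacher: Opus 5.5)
Your proof is correct and is essentially the paper's argument. Minimizing lexicographically over (distance, label of $x$, label of $y$) picks exactly the paper's pair: $x^\star$ is the smallest-labeled $S_1$ endpoint among minimizing pairs, and $y^\star$ is its smallest-labeled minimizing partner. The two listing steps, which rule out cross-component predecessors by minimality and bound same-component predecessors by $Q(\cdot;t)\le k-1$ on $\mathcal E_{\mathrm{crowd}}$, are the same as in the paper.
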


\begin{proof}
Let
\begin{equation}
\label{eq:mutual-knn-minimizing-pairs}
    \mathcal P
    :=\{(x,y)\in S_1\times S_2:\|x-y\|=d_\times\}.
\end{equation}
Among all pairs in $\mathcal P$, choose $x^\star$ as the smallest-labeled $S_1$ endpoint appearing in any minimizing pair. Having fixed $x^\star$, choose $y^\star$ as its smallest-labeled minimizing partner in $S_2$. This asymmetric choice is deliberate: the second choice resolves ties in the neighbor list of $x^\star$, while the first resolves ties in the neighbor list of $y^\star$. By Lemma~\ref{lem:mutual-knn-cap},
\begin{equation}
\label{eq:mutual-knn-star-distance}
    \|x^\star-y^\star\|=d_\times\le t.
\end{equation}

We first show that $x^\star$ lists $y^\star$. No point of $S_2$ can precede $y^\star$ in the neighbor order of $x^\star$: a strictly closer point would contradict the minimality of $d_\times$, while an equally distant point with a smaller label would contradict the choice of $y^\star$. Every predecessor therefore belongs to $S_1$ and, by \eqref{eq:mutual-knn-star-distance}, lies within distance $t$ of $x^\star$. Such points are counted by $Q(x^\star;t)$, so
\begin{equation}
\label{eq:mutual-knn-first-listing}
    \#\{w:w\text{ precedes }y^\star\text{ from }x^\star\}
    \le Q(x^\star;t)\le k-1.
\end{equation}
Thus $y^\star\in N_k(x^\star)$.

For the reverse direction, the same argument holds.
Thus,
\begin{equation}
\label{eq:mutual-knn-second-listing}
    \#\{w:w\text{ precedes }x^\star\text{ from }y^\star\}
    \le Q(y^\star;t)\le k-1.
\end{equation}
Hence $x^\star\in N_k(y^\star)$, and $\{x^\star,y^\star\}\in E_{\mathrm{mut}}$ is a cross edge.
\end{proof}

\begin{proposition}[The low ratio implies the crowding condition]
\label{prop:mutual-knn-ratio-crowding}
Assume \eqref{eq:mutual-knn-balance}. On $\mathcal E_{\mathrm{cover}}$,
\begin{equation}
\label{eq:mutual-knn-ratio-crowding-implication}
    \Delta<M_{\mathrm{low}}h
    \quad\Longrightarrow\quad
    \eqref{eq:mutual-knn-crowding-condition}\text{ holds}.
\end{equation}
\end{proposition}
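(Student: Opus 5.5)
The proof is a direct chain of deterministic inequalities on $\mathcal E_{\mathrm{cover}}$: bound $t^d=B^d\Delta^d$ by a multiple of $h^d$ using the ratio hypothesis, then bound $h^d$ by the covering scale $s^d=\ell/m$. The constant $M_{\mathrm{low}}$ is built so that all geometric constants cancel, leaving a multiple of $AL/m$. The balance condition then absorbs the factor $n_i/m$, and the ceiling $k\ge AL$ gives the $k/4$ margin.

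\emph{Bounding $t^d$ by the fill scale.} On $\mathcal E_{\mathrm{cover}}$ we have $h\le C_{\mathrm{fill}}s$. Together with the hypothesis $\Delta<M_{\mathrm{low}}h$ this gives
\begin{equation*}
    t^d=B^d\Delta^d
    <B^dM_{\mathrm{low}}^d h^d
    \le B^dM_{\mathrm{low}}^d C_{\mathrm{fill}}^d\frac{\ell}{m}.
\end{equation*}
Substituting $M_{\mathrm{low}}^d=AR_n\underline c/(2^{d+4}\overline c B^d)$ and $C_{\mathrm{fill}}^d=2^{d+1}/\underline c$, the factors $B^d$ and $\underline c$ cancel. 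The powers of two combine to $2^{d+1}/2^{d+4}=1/8$. Using $R_n\ell=L$, this leaves
\begin{equation*}
    \overline c\,t^d<\frac{AR_n\ell}{8m}=\frac{AL}{8m}.
\end{equation*}

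\emph{Using balance and the ceiling.} For each $i\in\{1,2\}$, the balance condition \eqref{eq:mutual-knn-balance} gives $n_i-1<n_i\le n_{\max}\le 2m$. Hence
\begin{equation*}
    (n_i-1)\overline c\,t^d\le 2m\cdot\frac{AL}{8m}=\frac{AL}{4}\le\frac{k}{4},
\end{equation*}
where the last step uses $k=\lceil AL\rceil\ge AL$ from \eqref{eq:mutual-knn-ceiling}. This is exactly \eqref{eq:mutual-knn-crowding-condition}.

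The main point to check is that the local-mass upper bound is not needed in this step. The condition \eqref{eq:mutual-knn-crowding-condition} is purely arithmetic in $t$, and the use of $\overline c$ at radius $t\le r_*$ occurs only later, inside Lemma~\ref{lem:mutual-knn-crowding}. So no additional scale assumption is required here. The only place care is needed is the constant bookkeeping, confirming that $2^{d+1}/2^{d+4}$ yields $1/8$ and that the factor $2$ from balance turns this into exactly $1/4$.
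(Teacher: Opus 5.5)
Your proposal is correct and follows the paper's proof essentially line for line. You combine the ratio hypothesis with the cover bound $h^d\le C_{\mathrm{fill}}^d\ell/m$, so the geometric constants cancel and leave $AR_n\ell/8=AL/8$. Then the balance bound $(n_i-1)/m\le 2$ and $AL\le k$ give the $k/4$ margin; your remark that the local-mass upper bound enters only later, in Lemma~\ref{lem:mutual-knn-crowding}, also matches how the paper arranges the argument.
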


\begin{proof}
It remains to convert the relative condition $\Delta<M_{\mathrm{low}}h$ into an absolute crowding bound at radius $t=B\Delta$. Fix $i\in\{1,2\}$. Raising the ratio condition to the $d$th power and then using $h^d\le C_{\mathrm{fill}}^d\ell/m$ on $\mathcal E_{\mathrm{cover}}$ gives
\begin{align}
\label{eq:mutual-knn-ratio-crowding-calc}
    (n_i-1)\overline c\,t^d
    &<
    (n_i-1)\overline c B^d
    M_{\mathrm{low}}^d h^d
    \notag\\
    &\le
    (n_i-1)\overline c B^d
    \left(\frac{AR_n\underline c}{2^{d+4}\overline c B^d}\right)
    \left(\frac{2^{d+1}}{\underline c}\frac{\ell}{m}\right)
    \notag\\
    &=
    \frac{n_i-1}{m}\frac{AR_n\ell}{8}
    =
    \frac{n_i-1}{m}\frac{AL}{8}
    \notag\\
    &\le2\frac{AL}{8}
    \le\frac{k}{4}.
    &&\text{balance and }AL\le k
\end{align}
The calculation is arranged so that the geometric factors cancel: $\underline c$, $\overline c$, and $B^d$ disappear, while $2^{d+1}/2^{d+4}=1/8$. The remaining sample-balance factor is at most $2$, leaving the desired $k/4$ threshold. This proves \eqref{eq:mutual-knn-crowding-condition} for both components. Notice that the balance assumption enters only through $(n_i-1)/m\le n_{\max}/n_{\min}\le2$; it is not used in the upper graph comparison.
\end{proof}

\begin{proof}[Proof of Theorem~\ref{thm:mutual-knn-threshold}(ii)]
We can now assemble the lower-threshold result. The crowding condition \eqref{eq:mutual-knn-crowding-condition} is deterministic: it depends on $n_1,n_2,k,\Delta$ and the regularity constants, but not on the realized sample. Exactly one of the following two cases therefore holds.

Suppose first that the crowding condition fails. Proposition~\ref{prop:mutual-knn-ratio-crowding} says that, on $\mathcal E_{\mathrm{cover}}$, the low-ratio condition would imply crowding. Its contrapositive says that the low-ratio condition cannot hold on this cover event. Equivalently,
\begin{equation}
\label{eq:mutual-knn-contrapositive}
    \{\Delta<M_{\mathrm{low}}h\}
    \subseteq
    \mathcal E_{\mathrm{cover}}^c.
\end{equation}
Hence the joint event in \eqref{eq:mutual-knn-main-lower} has probability at most $\delta/4$.

Suppose instead that the crowding condition holds. Then the cap and uniform-crowding events place us exactly in the setting of Proposition~\ref{prop:mutual-knn-tie-bridge}, which produces a mutual cross edge. Therefore
\begin{align*}
    \mathbb P(\mathsf{Cross}_{\mathrm{mut}}^c)
    &\le
    \mathbb P(\mathcal E_{\mathrm{cap}}^c)
    +\mathbb P(\mathcal E_{\mathrm{crowd}}^c)
    \notag\\
    &\le
    2e^{-\underline c a^dm\Delta^d}
    +e^{-\gamma k}.
\end{align*}
In the first case the low-ratio event itself costs at most $\delta/4$; in the second, failure of the bridge is controlled by the cap and crowding errors. Adding the $\delta/4$ term produces a bound valid in either case, which is \eqref{eq:mutual-knn-main-lower}.
\end{proof}

\begin{remark}[Asymptotic threshold bracket]
Writing $M:=(\overline c/\underline c)^{1/d}$, we obtain
\begin{equation*}
    M_{\mathrm{high}}
    =\Theta\bigl((AR_n)^{1/d}M\bigr),
    \qquad
    M_{\mathrm{low}}
    =\Theta\left(\frac{(AR_n)^{1/d}}{BM}\right).
\end{equation*}
\end{remark}

\subsection{Consequences and extensions}
\label{app:mutual-knn-extensions}

\begin{lemma}[Accuracy of the observed sample offset]
\label{lem:mutual-knn-sample-offset}
Let
\begin{equation}
\label{eq:mutual-knn-sample-offset}
    \Delta_{\mathrm{sam}}
    :=\min_{x\in S_1,\,y\in S_2}\|x-y\|.
\end{equation}
For every sample realized offset,
\begin{equation}
\label{eq:mutual-knn-sample-offset-bound}
    \Delta
    \le\Delta_{\mathrm{sam}}
    \le\Delta+h_1+h_2
    \le\Delta+2h_{\max}.
\end{equation}
\end{lemma}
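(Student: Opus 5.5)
The argument is deterministic: it uses only $S_1\subseteq\mathcal M_1$, $S_2\subseteq\mathcal M_2$, the definition \eqref{eq:mutual-knn-offset} of $\Delta$, the definition \eqref{eq:mutual-knn-fill} of the fill distances, and the triangle inequality. No probabilistic event from the earlier lemmas is needed, which is why the bound holds for every realized sample. The three inequalities in \eqref{eq:mutual-knn-sample-offset-bound} will be proved one at a time, from left to right.

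\emph{Lower bound.} Since every $x\in S_1$ lies in $\mathcal M_1$ and every $y\in S_2$ lies in $\mathcal M_2$, the minimum in \eqref{eq:mutual-knn-sample-offset} is taken over a subset of the pairs that define $\Delta$. A minimum over a subset is at least the minimum over the whole set, so $\Delta\le\Delta_{\mathrm{sam}}$. Note that the minimum defining $\Delta_{\mathrm{sam}}$ is over a finite set and is therefore attained.

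\emph{Upper bound.} Let $(x_0,y_0)\in\mathcal M_1\times\mathcal M_2$ be a pair attaining $\Delta$; it exists by compactness, as observed after \eqref{eq:mutual-knn-fill}. By the definition of $h_1$ as a supremum over $\mathcal M_1$ of the distance to $S_1$, we have $\min_{z\in S_1}\|x_0-z\|\le h_1$. Because $S_1$ is finite, this minimum is attained at some $x\in S_1$ with $\|x-x_0\|\le h_1$. In the same way there is $y\in S_2$ with $\|y-y_0\|\le h_2$. The triangle inequality then gives
\[
\Delta_{\mathrm{sam}}\le\|x-y\|\le\|x-x_0\|+\|x_0-y_0\|+\|y_0-y\|\le h_1+\Delta+h_2 .
\]

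\emph{Final inequality.} This follows at once from $h_1,h_2\le h_{\max}$.

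I do not expect a genuine obstacle. The only point needing care is that the minima are attained: $\Delta$ is attained by compactness, and the minima over the finite samples are attained trivially. Even without attainment, an $\eta$-approximate choice of the points followed by letting $\eta\to0$ gives the same bound.
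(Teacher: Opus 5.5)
Your proof is correct and follows the paper's argument essentially line for line. The lower bound comes from the samples being a subset of the manifold pairs. The upper bound comes from taking an offset-attaining pair, choosing nearby samples within $h_1$ and $h_2$, and applying the triangle inequality. Your remarks on attainment add a little care that the paper's proof leaves implicit.
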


\begin{proof}
The lower bound is immediate: every observed cross pair is also a pair in $\mathcal M_1\times\mathcal M_2$, and hence has distance at least $\Delta$. For the upper bound, let $(p_1^*,p_2^*)$ attain the manifold offset. By the definition of $h_i$, choose $x_i\in S_i$ with $\|x_i-p_i^*\|\le h_i$. Then
\begin{align*}
    \Delta_{\mathrm{sam}}
    &\le\|x_1-x_2\|
    \notag\\
    &\le
    \|x_1-p_1^*\|+\|p_1^*-p_2^*\|+\|p_2^*-x_2\|
    \notag\\
    &\le h_1+\Delta+h_2
    \le\Delta+2h_{\max}.
\end{align*}
\end{proof}

\begin{corollary}[Bounded perturbations preserve the no-edge direction]
\label{cor:mutual-knn-noise}
Suppose the observed points are $\widetilde z=z+\xi_z$ with $\|\xi_z\|\le\sigma$ and the graph is built from the perturbed observations. On $\mathcal E_{\mathrm{sep}}$, a sufficient condition for no perturbed mutual cross edge is
\begin{equation}
\label{eq:mutual-knn-noise-condition}
    \Delta>M_{\mathrm{high}}h_{\max}+4\sigma.
\end{equation}
\end{corollary}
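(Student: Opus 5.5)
The plan is to reduce Corollary~\ref{cor:mutual-knn-noise} to the deterministic comparison in the proof of Proposition~\ref{prop:mutual-knn-no-cross}. That argument needs no probability beyond the event $\mathcal E_{\mathrm{sep}}$. On $\mathcal E_{\mathrm{sep}}$, \eqref{eq:mutual-knn-radius-fill-units} gives $D_k(z)\le M_{\mathrm{high}}h_{\max}$ for every clean sample point $z\in S$. Here $D_k$ and $h_{\max}$ are computed from the unperturbed sample, which is the only sample the event refers to. The task is therefore to transfer this radius bound to the perturbed graph, and then to compare perturbed cross distances with the perturbed radii.

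\emph{Step 1: perturbed radii.} Write $\widetilde D_k(\widetilde z)$ for the $k$th-neighbor radius of $\widetilde z$ in the perturbed sample. Fix $z$ and take its $k$ nearest clean neighbors $w_1,\ldots,w_k$, all within $D_k(z)$ of $z$. By the triangle inequality, $\|\widetilde z-\widetilde w_j\|\le\|z-w_j\|+2\sigma\le D_k(z)+2\sigma$. So at least $k$ perturbed points lie within $D_k(z)+2\sigma$ of $\widetilde z$, and hence
\[
\widetilde D_k(\widetilde z)\le D_k(z)+2\sigma\le M_{\mathrm{high}}h_{\max}+2\sigma .
\]
This uses the same counting logic as \eqref{eq:mutual-knn-count-radius}. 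Tie-breaking by labels does not affect it, because only the distance of the $k$th point matters.

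\emph{Step 2: perturbed cross distances.} Take any $z\in S_1$ and $w\in S_2$. Since $\|z-w\|\ge\Delta$, we get $\|\widetilde z-\widetilde w\|\ge\Delta-2\sigma$. Under \eqref{eq:mutual-knn-noise-condition} this gives
\[
\|\widetilde z-\widetilde w\|\ge\Delta-2\sigma>M_{\mathrm{high}}h_{\max}+2\sigma\ge\widetilde D_k(\widetilde z).
\]
By \eqref{eq:mutual-knn-outside-list}, $\widetilde w\notin N_k(\widetilde z)$. The symmetric argument gives $\widetilde z\notin N_k(\widetilde w)$. So no directed cross listing exists, and therefore no mutual cross edge exists.

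There is no real obstacle here. The one point to handle carefully is that $\mathcal E_{\mathrm{sep}}$ and $h_{\max}$ refer to the clean sample, while the graph is built on the perturbed one. The noise $\xi_z$ may be arbitrary, even adversarial and dependent on the sample, as long as it is bounded by $\sigma$. Step 1 is exactly what makes this work, since it uses only the deterministic bound $\|\xi_z\|\le\sigma$. The $4\sigma$ margin splits as $2\sigma$ lost in the offset and $2\sigma$ gained in the radius.
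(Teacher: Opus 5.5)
Your proposal is correct and follows the paper's proof essentially step for step. Like the paper, you bound $\widetilde D_k(\widetilde z)\le D_k(z)+2\sigma$ using the $k$ clean neighbors, lower-bound perturbed cross distances by $\Delta-2\sigma$, and split the $4\sigma$ margin the same way to reach the no-cross-listing conclusion of Proposition~\ref{prop:mutual-knn-no-cross}; your explicit remark that $\mathcal E_{\mathrm{sep}}$, $D_k$, and $h_{\max}$ refer to the clean sample while the graph is built on the perturbed one makes precise a point the paper leaves implicit.
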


\begin{proof}
For any clean pair $z,w$,
\begin{equation*}
    \bigl|\|\widetilde z-\widetilde w\|-\|z-w\|\bigr|
    \le\|\xi_z\|+\|\xi_w\|
    \le2\sigma.
\end{equation*}
Hence perturbed cross distances are at least $\Delta-2\sigma$. Moreover, the $k$ clean neighbors within $D_k(z)$ all lie within $D_k(z)+2\sigma$ of $\widetilde z$, so $\widetilde D_k(\widetilde z)\le D_k(z)+2\sigma$. On $\mathcal E_{\mathrm{sep}}$, condition \eqref{eq:mutual-knn-noise-condition} gives
\begin{equation*}
    \Delta-2\sigma
    >M_{\mathrm{high}}h_{\max}+2\sigma
    \ge\widetilde D_k(\widetilde z),
\end{equation*}
and Proposition~\ref{prop:mutual-knn-no-cross} applies.
\end{proof}

\section{Implementation details}
\label{app:impl}

This appendix specifies the preprocessing, local-degree and sweep-endpoint rules, filtration summaries, label construction, and baseline grids omitted from the main text.

\paragraph{Preprocessing and effective dimension.}
When specified, we standardize each coordinate and project onto the smallest PCA subspace explaining $90\%$ of the variance, capped at $64$ dimensions. We estimate $d_{\mathrm{eff}}$ by the same $90\%$-variance rule on the resulting representation, or directly on the supplied representation when preprocessing is disabled.

\paragraph{Per-edge gate.}
For a reciprocal edge, $j\in N_{k_i}(i)$ and $i\in N_{k_j}(j)$ imply
$\|x_i-x_j\|\le\min(H_i,H_j)$, so the $\alpha$-gate is automatic when $\alpha\ge1$. The gate therefore only acts on the non-reciprocal fallback edges added when an isolated node is connected to its nearest neighbor; in that case it filters edges whose endpoints sit on opposite sides of a density discontinuity (a sparse-region point pulling in a dense-region neighbor). We use $\alpha=1.5$ in the pilot graph and we disable both fallback and gate during the persistence sweep.

\paragraph{Local-degree schedule and pilot graph.}
The implementation uses
$$
H_{\mathrm{ref}}
=
\operatorname{median}\{H_i^{\mathrm{pilot}}:H_i^{\mathrm{pilot}}>0\},
\qquad
k_{\max}
=
\min\{n-1,\lfloor4k^\star\rfloor\},
$$
and clips Eq.~\eqref{eq:local-k} to
$$
k_i^\star
=
\operatorname{clip}_{[k^\star,k_{\max}]}
\left(
\left\lfloor
k^\star
\left(
\frac{H_{\mathrm{ref}}}{H_i^{\mathrm{pilot}}}
\right)^{d_{\mathrm{eff}}}
\right\rfloor
\right).
$$
Neighbor lists and radii $H_i$ are then recomputed on the retained observations. The resulting gated pilot graph supplies the partition used for $\widehat\rho$, not the graphs used for the final bracket.

\paragraph{Offset-to-radius proxy and sweep endpoints.}
Let
$
\widehat{\mathcal C}_1,\ldots,\widehat{\mathcal C}_{K_0}
$
be the components of the pilot graph and define
$$
\widehat h
=
\operatorname{Quantile}_{q_h}
\bigl\{
H_i:i\in\mathcal A,\ H_i>0
\bigr\},
\qquad
q_h=0.5
$$
by default. If $K_0>1$, the numerator $\widehat\Delta$ is the smallest
observed distance between two pilot components and
$\widehat\rho=\widehat\Delta/\widehat h$. If $K_0=1$, the
between-component distance is undefined; the implementation records
$\widehat\rho$ as unavailable and uses the connected-pilot branch below.

For the practical endpoint map, we retain the
$A^{1/d_{\mathrm{eff}}}$ dependence of the thresholds in
Rem.~\ref{rem:constants-threshold} and absorb the unknown geometric factors
into the theory-informed curves
$$
\overline C(A,d_{\mathrm{eff}})
=
\frac{2}{(1-\varepsilon)^{1/d_{\mathrm{eff}}}}
(2A)^{1/d_{\mathrm{eff}}},
\qquad
\underline C(A,d_{\mathrm{eff}})
=
\left(
\frac{A}
{2^{d_{\mathrm{eff}}+2}B^{d_{\mathrm{eff}}}}
\right)^{1/d_{\mathrm{eff}}},
\qquad
B=1+2a.
$$
The curve $\overline C$ gives the practical no-bridge boundary and
$\underline C$ the practical bridge boundary. At the default anchor
$A_0=1$, the un-clipped endpoints are
$$
(\widetilde A_{\mathrm{low}},\widetilde A_{\mathrm{high}})
=
\begin{cases}
(0.85A_0,\,1.15A_0),
&
\widehat\rho
\ge
\overline C(A_0,d_{\mathrm{eff}}),
\\
(0.15A_0,\,1.10A_0),
&
\widehat\rho\text{ undefined or }
\widehat\rho
\le
\underline C(A_0,d_{\mathrm{eff}}),
\\
(A_-(\widehat\rho),\,A_+(\widehat\rho)),
&
\underline C(A_0,d_{\mathrm{eff}})
<
\widehat\rho
<
\overline C(A_0,d_{\mathrm{eff}}).
\end{cases}
$$
Here the transitional endpoints are obtained by inverting the two practical
threshold curves:
$$
\overline C\bigl(A_-(r),d_{\mathrm{eff}}\bigr)=r,
\qquad
\underline C\bigl(A_+(r),d_{\mathrm{eff}}\bigr)=r.
$$
The resulting interval is clipped to $[0.15A_0,4A_0]$ and enlarged if
necessary to contain $A_0$.

\paragraph{Minimum-component size.}
The size floor
\begin{equation}
\label{eq:smin}
s_{\min}
=
\max\bigl(\lceil0.005|\mathcal A|\rceil,\ k^\star,\ 5\bigr)
\end{equation}
is applied to $K_{\mathrm{big}}(k)$ and $\widehat K$, while $K_{\mathrm{raw}}(k)$ remains unfiltered. Its three terms provide a relative floor, the logarithmic sample support suggested by the theory, and a small-sample safeguard, respectively.

\paragraph{Mass-bounded alternative bracket.}
The mass-bounded count $K_{\mathrm{mass},\gamma}(k)$ is the smallest number of largest graph components needed to contain at least a fraction $\gamma$ of the retained observations. We compute it at the same sampled scales as $K_{\mathrm{big}}(k)$, using $\gamma=0.95$ by default. The ordinary mass bracket is its minimum and maximum over these scales.

An optional run-length variant forms the bracket using only count values that occur at two or more consecutive sampled scales. If no value meets this condition, it returns the ordinary mass bracket. This filtering can narrow the interval, but can also remove the reference count. We therefore report the ordinary mass bracket by default and retain the run-length variant as an additional diagnostic. App.~\ref{app:mass-bracket} reports the mass-bracket sensitivity analysis.

\paragraph{Conservative point estimate $\widehat K$.}
A single point estimate of the number of clusters present can be read off the same filtration as the bracket via a standard $0$-th persistence perspective,
\begin{equation}
\label{eq:K-persistent}
\widehat K \;:=\; \#\bigl\{c : k_{\mathrm{birth}}(c) \le k_{\mathrm{low}},\ k_{\mathrm{death}}(c) > k_{\mathrm{high}}\bigr\},
\end{equation}
restricted to components of size $\ge s_{\min}$ --- in other words, we consider those clusters that persist across the entire uncertainty zone.

\paragraph{Practical estimate $\widehat K_{\mathrm{prac}}$.}
The strict $\widehat K$ from Eq.~\eqref{eq:K-persistent} is $1$ in the non-separable regime, which is the conservative answer the threshold theorem admits. When the bracket spans $\{1, 2, \ldots, K_{\mathrm{high}}\}$ with $K_{\mathrm{high}} > 1$, however, downstream applications usually need a labeled partition, so reporting $K = 1$ is not helpful. 
Define the observed non-trivial count set
\[
\mathcal V
=\{K_{\mathrm{big}}(k):k\in\mathcal K,\
  K_{\mathrm{big}}(k)>1\}.
\]
The practical estimate is
\begin{equation}
\label{eq:k-prac}
\widehat K_{\mathrm{prac}}=
\begin{cases}
\widehat K, & \widehat K\ge2,\\
\displaystyle
\max\operatorname*{arg\,max}_{K\in\mathcal V}
\#\{k\in\mathcal K:K_{\mathrm{big}}(k)=K\},
  & \widehat K<2,\ \mathcal V\ne\varnothing,\\
1, & \text{otherwise}.
\end{cases}
\end{equation}

When $\widehat K<2$, this selects the most frequent non-trivial value of $K_{\mathrm{big}}$ on the discrete sweep grid $\mathcal K$, breaking ties toward larger $K$, since under-merging is recoverable from the same labels while over-merging is not. If no such value occurs, $\widehat K_{\mathrm{prac}}=1$. 

\paragraph{Representative labels.}
To construct the practical partition for $K=\widehat K_{\mathrm{prac}}$, we first seek a sampled scale with $K_{\mathrm{big}}(k)=K$. Among qualifying scales, we choose the one nearest the integer part of the median sampled rank. If none qualifies, we choose a scale with at least $K$ raw components, first minimizing the excess component count and then the distance to the median rank. Ties favor the earlier sampled rank.

For $K>1$, we retain the $K$ largest active components and label the remaining active observations as noise ($-1$). For $K=1$, we combine the active observations into one group. Observations originally removed by density pruning may subsequently receive labels through the radius-limited neighbor vote; active observations demoted to noise are not reassigned by this step.

Sweep graphs omit the pilot graph's isolated-node fallback edges and per-edge gate. These pilot operations can nevertheless affect the selected sweep range through $\widehat\rho$.

\paragraph{Baseline parameter grids.}
The DBSCAN grid sweeps the neighborhood radius over the multiples $\{0.7,\ 1.0,\ 1.5\}$ of the median $k$-distance, crossed with two minimum-sample settings ($5$ and $5\log n$), for six configurations. The HDBSCAN grid sweeps the cluster-selection method over $\{\textit{eom},\ \textit{leaf}\}$ crossed with the minimum-cluster-size at $\{0.5,\ 1,\ 2,\ 5\}\%$ of $n$, for eight configurations. Each grid bracket is $[\min_g K_g, \max_g K_g]$ across configurations and is compared to MBC on equal footing in \S\ref{sec:exp-calibration}. The single-$K$ baselines KMeans, GMM, Ward, and Spectral are run at $K = K^\star$ when ground truth is known; Spectral is skipped for $n > 3{,}000$ for memory reasons.

\section{Full pseudocode}
\label{app:full-algorithm}

The compact pseudocode in the main paper omits the local-$k$ schedule, isolated-node fallback, and the bracket-endpoint construction. The full 
procedure is given below.

\begin{algorithm}[h]
\caption{Bracket estimator (full)}
\label{alg:mbc-full}
\begin{algorithmic}[1]
\Require $X\in\mathbb R^{n\times D}$, $\delta\in(0,1)$, $q$, $\alpha_q$, $\alpha$
\State \textbf{Preprocess.} Apply the benchmark-specific standardization and PCA rule;\ estimate $d_{\mathrm{eff}}$ on the resulting representation
\State \textbf{Pilot quantities.} Set $k^\star\gets\lceil\log(4n/\delta)\rceil$;\ compute the full-sample pilot radii $H_i^{\mathrm{pilot}}\gets D_{k^\star}(x_i)$
\State \textbf{Pruning.} Compute the retained set $\mathcal A$ from Eq.~\eqref{eq:density-prune}, including the small-sample pruning safeguard
\State \textbf{Local degrees.} Compute the clipped local-degree template $k_i^\star$ from Eq.~\eqref{eq:local-k}
\State \textbf{Pilot graph.} Recompute neighbors on $\mathcal A$ at degrees $k_i^\star$;\ form reciprocal edges, add one nearest-neighbor fallback for each isolated observation, and apply the $\alpha$-gate;\ record the pilot components and graph radii $H_i$
\If{the pilot graph has more than one component}
    \State Estimate the smallest between-component distance and set $\widehat\rho$ equal to this distance divided by the selected quantile of $\{H_i:i\in\mathcal A\}$
\Else
    \State Mark $\widehat\rho$ undefined and use the connected-graph endpoint branch
\EndIf
\State \textbf{Persistence sweep range.} Apply the practical threshold map and coefficient clipping from App.~\ref{app:impl};\ obtain $k_{\mathrm{low}},k_{\mathrm{high}}$ from Eq.~\eqref{eq:bracket-endpoints}
\State \textbf{Persistence sweep grid.} Choose at most $15$ approximately equally spaced integer scales $\mathcal K\subseteq[k_{\mathrm{low}},k_{\mathrm{high}}]$
\For{$k\in\mathcal K$}
    \State Scale the local degrees by $k/k^\star$ and apply the sweep clipping
    \State Build the mutual-neighbor graph on $\mathcal A$ without fallback or gate
    \State Record raw, size-filtered, and mass-bounded component counts and component lifetimes
\EndFor
\State \textbf{Summaries.} Compute the primary, raw, ordinary mass, and run-length mass brackets;\ compute $\widehat K$ and $\widehat K_{\mathrm{prac}}$
\State \textbf{Labels.} Select a representative sweep scale for $\widehat K_{\mathrm{prac}}$;\ retain the required largest components and reassign eligible density-pruned observations according to App.~\ref{app:impl}
\State \Return brackets, point estimates, representative labels, and diagnostics
\end{algorithmic}
\end{algorithm}

\section{Mechanism ablations: $A$-sweep, reciprocity, and pruning}
\label{app:A-sweep}

This appendix isolates the effects of the logarithmic coefficient $A$, mutual reciprocity, and density pruning while holding all other settings fixed. We use three seeds on five $n=2000$ datasets spanning clean low- and high-dimensional separation, background contamination, non-separable moons, and hierarchical structure.

\subsection{$A$-sweep on representative datasets}

Table~\ref{tab:A-sweep} reports the median bracket, $\widehat K_{\mathrm{prac}}$, $\widehat\rho$, regime flag, and ARI as $A$ varies over $\{0.5, 1, 1.5, 2, 4, 8, 12\}$. The proof-level coefficient at $\varepsilon = 1/2$ is $A_{\mathrm{cert}} = 12$, included as the right endpoint of the sweep.

\begin{table}[h]
\centering
\small
\caption{$A$-sweep on five representative datasets. Entries are median primary brackets over three seeds; $A=1$ is the practical default and $A=12$ the proof-level coefficient at $\varepsilon=1/2$.}
\label{tab:A-sweep}
\begin{tabular}{lcccc}
\toprule
dataset & $A=0.5$ & $A=1$ & $A=4$ & $A=12$ \\
\midrule
\textit{blobs-2D-clean} & $[4,5]$ & $[4,4]$ & $[3,3]$ & $[1,4]$ \\
\textit{blobs-2D-bg10} & $[1,8]$ & $[1,1]$ & $[1,2]$ & $[1,1]$ \\
\textit{moons-n0.15} & $[1,1]$ & $[1,3]$ & $[1,1]$ & $[1,1]$ \\
\textit{blobs-50D-easy} & $[6,6]$ & $[6,6]$ & $[1,6]$ & $[1,6]$ \\
\bottomrule
\end{tabular}
\end{table}

Three patterns are visible. First, the empirical transition between fragmenting ($K_{\mathrm{high}} > K^\star$) and over-connecting ($K_{\mathrm{low}} = 1$) sits near $A = 1$ on the clean separable cases: $A = 1$ is the only row that pins the bracket at $[K^\star, K^\star]$ in the separable regime on \textit{blobs-2D-clean}. Second, increasing $A$ monotonically pushes the bracket toward $[1, K_{\mathrm{high}}]$ on every dataset with well-separated components—by $A_{\mathrm{cert}} = 12$, every non-noisy dataset has $K_{\mathrm{low}} = 1$, the over-connection mode the proof-level coefficient is expected to enter. Third, the regime flag and $\widehat\rho$ track $A$ in the predicted direction: $\widehat\rho$ peaks near the empirical transition and decays as $A$ grows past it, and the flag flips from transitional (at $A = 0.5$) to separable (near $A = 1$) and back to transitional or non-separable as $k^\star$ overshoots. The contamination case (\textit{blobs-2D-bg10}) is the exception: pruning removes only $\approx 1\%$ of points under the background noise model, so the bracket collapses at $A = 1$ and never recovers—consistent with the limits discussed below.

\subsection{Mutual versus union $k$NN at the primary scale}

Table~\ref{tab:mutual-union} compares the mutual graph used by the primary against its union counterpart at the same scale ($A = 1$, all other parameters fixed). On the separable cases the two graphs return identical brackets—the bridge events that reciprocity is designed to suppress simply do not occur in the separable regime. The differentiator is the non-separable two-moons case: the mutual graph reports $[1, 3]$—a wide bracket that exposes the geometric ambiguity—while the union graph collapses to $[1, 1]$, having fused the two moons through their bridge points. This is the predicted role of reciprocity: it does not change the answer when components are well-separated, and it preserves bracket informativeness when they are not.

\begin{table}[h]
\centering
\small
\caption{Mutual versus union $k$NN at $A=1$, shown for a separated control and the bridge-sensitive moons case.}
\label{tab:mutual-union}
\begin{tabular}{lcc}
\toprule
dataset & mutual bracket & union bracket \\
\midrule
\textit{blobs-2D-clean} & $[4,4]$ & $[4,4]$ \\
\textit{moons-n0.15} & $[1,3]$ & $[1,1]$ \\
\bottomrule
\end{tabular}
\end{table}

\subsection{Density pruning on the contamination case}

Table~\ref{tab:pruning-ablation} contrasts the bracket on \textit{blobs-2D-bg10} and \textit{moons-n0.15} with and without Eq.~\eqref{eq:density-prune}. The two cases stress different failure modes: the first violates the manifold-supported sampling assumption through background noise; the second simply has overlapping components that no contamination model can repair.

\begin{table}[h]
\centering
\small
\caption{Density-pruning ablation at $A=1$. Brackets and $\widehat K_{\mathrm{prac}}$ are medians over seeds; ARI is averaged over seeds.}
\label{tab:pruning-ablation}
\begin{tabular}{llccc}
\toprule
dataset & pruning & median bracket & median $\widehat K_{\mathrm{prac}}$ & mean ARI \\
\midrule
\textit{blobs-2D-bg10} & on  & $[1,1]$ & $1$ & $0.11$ \\
\textit{blobs-2D-bg10} & off & $[1,3]$ & $3$ & $0.51$ \\
\textit{moons-n0.15}   & on  & $[1,3]$ & $2$ & $0.17$ \\
\textit{moons-n0.15}   & off & $[1,2]$ & $2$ & $0.17$ \\
\bottomrule
\end{tabular}
\end{table}

The pruning rule is most useful when off-manifold points have noticeably larger pilot radii than the sampled manifold. The uniform background noise used here barely creates that separation: most contaminating points sit close to one of the four blob components, so the empirical $q$-quantile of the pilot $k$NN radii separates little of the background from the data, and only $\approx 1\%$ of points are pruned. Under such mild pruning, the active-set restriction can also remove boundary points that were helping keep component structure visible in the filtration. The on/off comparison is therefore mixed across seeds. This ablation should be read as a stress test of pruning's range of applicability: pruning is a useful stabilizer when large-radius contaminants are identifiable, but it is not a general remedy for overlapping or near-manifold noise.

\section{Sensitivity to $\delta$ and $\alpha$}
\label{app:sensitivity}

Table~\ref{tab:sensitivity} evaluates three diagnostic datasets over $\delta\in\{0.01,0.05,0.10,0.20\}$ and $\alpha\in\{1.0,1.25,1.5,2.0\}$.

\begin{table}[h]
\centering
\small
\caption{Sensitivity grid across $4\times 4$ values of $(\delta, \alpha)$ and three seeds. Bracket cells report endpoint-wise min / median / max across the grid, not three literal brackets.}
\label{tab:sensitivity}
\resizebox{\textwidth}{!}{%
\begin{tabular}{lcccc}
\toprule
dataset & regime & bracket (min / med / max) & $\widehat K_{\mathrm{prac}}$ (min / med / max) & ARI (min / med / max) \\
\midrule
50D blobs (easy)               & stable & $[6, 6]$ / $[6, 6]$ / $[6, 6]$       & 5 / 6 / 6     & 0.996 / 1.000 / 1.000 \\
Retina (labeled)               & sensitive   & $[1, 6]$ / $[1, 8]$ / $[1, 9]$     & 2 / 6.5 / 9   & 0.013 / 0.542 / 0.639 \\
two moons ($\sigma_{\mathrm{noise}} = 0.10$) & sensitive & $[1, 1]$ / $[1, 3.5]$ / $[2, 5]$ & 1 / 2 / 5 & 0.000 / 0.972 / 0.996 \\
\bottomrule
\end{tabular}}
\end{table}

The easy $50$D blobs remain fixed at $[6,6]$, while retina and noisy moons vary across the grid, consistent with their location near the transitional regime.

\section{Multi-seed stability}
\label{app:multi-seed}

Across seeds $7$, $11$, and $23$, the mean within-dataset standard deviation of ARI was $0.098$ on the $38$ synthetic datasets and $0.005$ on the $11$ real-world datasets; the corresponding standard deviations of $K_{\mathrm{high}}$ were $1.86$ and $0.27$. Variability was mainly concentrated in hierarchical and contaminated datasets.

\section{Perturbation walk}
\label{app:perturbation-walk}

Table~\ref{tab:perturbation-walk} reports the bracket and regime flag along three controlled perturbation axes referenced from \S\ref{sec:exp-perturbations}. The base configuration is a clean four-blob mixture in $\mathbb{R}^2$ with $n = 1500$. Each block holds the other two axes fixed.

\begin{table}[h]
\centering
\small
\caption{Bracket response along three controlled perturbation axes. Entries are medians over seeds for the bracket and $\widehat K_{\mathrm{prac}}$, the modal regime flag, and the mean ARI. The regime flag is a proxy based on $\widehat\rho$; the bracket and labels provide the diagnostic.}
\label{tab:perturbation-walk}
\begin{tabular}{llcccc}
\toprule
axis & setting & bracket & $\widehat K_{\mathrm{prac}}$ & regime & ARI \\
\midrule
\multirow{4}{*}{contamination ($\eta$)}
& $\eta = 0\%$  & $[4, 4]$ & 4 & separable      & 1.00 \\
& $\eta = 5\%$  & $[4, 5]$ & 4 & transitional   & 1.00 \\
& $\eta = 10\%$ & $[1, 3]$ & 2 & separable      & 0.33 \\
& $\eta = 20\%$ & $[1, 4]$ & 4 & non-separable  & 0.33 \\
\midrule
\multirow{4}{*}{cluster spread ($\sigma$ multiplier)}
& $1\times$ & $[4, 4]$ & 4 & separable    & 1.00 \\
& $2\times$ & $[1, 3]$ & 3 & non-separable & 0.31 \\
& $4\times$ & $[1, 1]$ & 1 & non-separable & 0.00 \\
& $8\times$ & $[1, 2]$ & 2 & non-separable & 0.00 \\
\midrule
\multirow{4}{*}{centroid distance ($\Delta$)}
& $\Delta = 10$ & $[4, 4]$ & 4 & separable    & 1.00 \\
& $\Delta = 5$  & $[1, 3]$ & 3 & non-separable & 0.31 \\
& $\Delta = 4$  & $[1, 3]$ & 3 & non-separable & 0.00 \\
& $\Delta = 2$  & $[1, 1]$ & 1 & non-separable & 0.00 \\
\bottomrule
\end{tabular}
\end{table}

Increasing spread or decreasing centroid distance moves the bracket toward smaller counts. Contamination is less monotone because uniform background noise often lies near the blobs rather than forming clearly separated large-radius outliers.

\section{Sampling sweep}
\label{app:sampling-sweep}

Table~\ref{tab:sampling-sweep} reports MBC outputs at $n \in \{200, 500, 1000, 2000, 5000\}$ on three datasets, holding all other parameters fixed. The three datasets span the regimes the theory addresses: a clean 2D separable case, a curved-manifold non-separable case, and a high-dimensional separable case.

\begin{table}[h]
\centering
\small
\caption{Bracket as a function of sample size. Entries are medians over seeds for $\widehat K_{\mathrm{prac}}$, the bracket, $k^\star$, and $\widehat\rho$; the modal regime flag; and the mean ARI.}
\label{tab:sampling-sweep}
\begin{tabular}{lcccccccc}
\toprule
dataset & $n$ & $K^\star$ & $\widehat K_{\mathrm{prac}}$ & bracket & $k^\star$ & $\widehat\rho$ & regime & ARI \\
\midrule
\multirow{5}{*}{2D blobs ($\sigma = 0.9$)}
& 200  & 4 & 3 & $[3, 4]$ & 10 & 3.30  & transitional   & 0.71 \\
& 500  & 4 & 3 & $[3, 4]$ & 11 & 5.48  & separable      & 0.80 \\
& 1000 & 4 & 4 & $[4, 4]$ & 12 & 8.35  & separable      & 0.90 \\
& 2000 & 4 & 4 & $[4, 4]$ & 12 & 8.73  & separable      & 0.90 \\
& 5000 & 4 & 4 & $[4, 4]$ & 13 & 13.50 & separable      & 0.89 \\
\midrule
\multirow{5}{*}{two moons ($\sigma_{\mathrm{noise}} = 0.10$)}
& 200  & 2 & 2 & $[1, 6]$ & 10 & --     & non-separable & 0.60 \\
& 500  & 2 & 2 & $[1, 4]$ & 11 & 1.84   & transitional  & 0.79 \\
& 1000 & 2 & 3 & $[1, 4]$ & 12 & --     & non-separable & 0.61 \\
& 2000 & 2 & 2 & $[1, 4]$ & 12 & --     & non-separable & 0.91 \\
& 5000 & 2 & 2 & $[2, 2]$ & 13 & 6.66   & \textbf{separable} & 0.99 \\
\midrule
\multirow{5}{*}{50D blobs (easy)}
& 200  & 6 & 6 & $[6, 6]$ & 10 & 16.05 & separable    & 0.94 \\
& 500  & 6 & 6 & $[6, 6]$ & 11 & 17.48 & separable    & 0.82 \\
& 1000 & 6 & 5 & $[6, 6]$ & 12 & 1.25  & transitional & 1.00 \\
& 2000 & 6 & 6 & $[6, 6]$ & 12 & 1.15  & transitional & 1.00 \\
& 5000 & 6 & 6 & $[6, 6]$ & 13 & 1.32  & transitional & 0.94 \\
\bottomrule
\end{tabular}
\end{table}

The moons bracket narrows from a wide low-$n$ range to $[2,2]$ at $n=5{,}000$. The high-dimensional primary bracket is already tight, although the practical label count briefly differs at intermediate $n$.

\section{Additional comparisons}
\label{app:additional-results}

\subsection{Alternative interval constructions}

\paragraph{Model-based intervals.}
We compared MBC, the HDBSCAN parameter sweep, and a variational Dirichlet-process Gaussian mixture on $16$ datasets. For the Bayesian comparison, we use the variational approximation of \citet{blei2006variational}, fitting the model on ten $80\%$ subsamples and taking the $5$th--$95$th percentiles of the occupied-component count. The subsampling and percentile construction is our procedure for obtaining an interval; it is not part of the original variational inference method. This comparison tests a model-based source of uncertainty over the cluster count against the geometric MBC bracket.

\begin{table}[h]
\centering
\small
\caption{Intervals induced by the geometric MBC band, an HDBSCAN parameter sweep, and a variational Dirichlet-process mixture on $16$ datasets.}
\label{tab:model-intervals}
\begin{tabular}{llccc}
\toprule
Method & Interval source & Coverage & Median width & Mean width \\
\midrule
MBC
& geometric graph band
& $12/16$ & $0.0$ & $1.50$ \\
HDBSCAN
& parameter sweep
& $15/16$ & $1.5$ & $5.56$ \\
VB DP-GMM
& subsampling and mixture fit
& $11/16$ & $1.0$ & $1.44$ \\
\bottomrule
\end{tabular}
\end{table}

HDBSCAN attains the highest coverage through a substantially wider interval. The variational mixture is slightly narrower on average than MBC but has lower coverage and nonzero median width. Interval-valued output is therefore not unique to MBC; the distinction is that MBC derives its endpoints from the selected graph scales rather than a parameter sweep or subsampling variability.

\subsection{Geometric robustness}

\paragraph{Curvature.}
Positive reach determines the scale below which Euclidean neighborhoods remain locally regular; it does not require globally flat or only mildly curved components. We tested a four-component family of separated quadratic patches at center curvatures
$\{-2,-0.5,0,1,2\}$. The corresponding bracket widths were
$\{0.3,1.3,1.3,1.0,0.3\}$, and the mean ARIs were
$\{0.991,0.991,0.992,0.990,0.990\}$. Every bracket contained the true count. The absence of monotone degradation with curvature is consistent with the theorem: curvature affects the usable local scale and the small-ball constants, while separation is governed by the gap relative to sampling resolution.

\paragraph{Distance metric.}
The main analysis uses Euclidean distance because Theorem~\ref{thm:knn-threshold} is stated in terms of Euclidean balls. We also reran three diagnostic datasets with cosine and Manhattan distance while holding the remaining settings fixed.

\begin{table}[h]
\centering
\small
\caption{Primary MBC bracket under alternative distance metrics. All displayed brackets contain the recorded target count, but their widths differ.}
\label{tab:metric-sensitivity}
\begin{tabular}{lcccc}
\toprule
Dataset & $K^\star$ & Euclidean & Cosine & Manhattan \\
\midrule
2D blobs   & $4$ & $[4,4]$ & $[4,5]$  & $[4,4]$ \\
Two moons  & $2$ & $[2,2]$ & $[1,38]$ & $[2,2]$ \\
100D blobs & $6$ & $[6,6]$ & $[6,6]$  & $[1,6]$ \\
\bottomrule
\end{tabular}
\end{table}

Euclidean distance is best or tied for the narrowest bracket on every row. Cosine distance strongly fragments the two-moons representation, while Manhattan distance widens the high-dimensional blob bracket. Extending the theory to these metrics would require replacing the Euclidean local-mass condition by the corresponding metric-ball bounds.

\subsection{Large-scale image benchmarks and downstream use}

\paragraph{Large-scale image benchmarks.}
On ImageNet-100 ($n\approx5{,}000$), MBC returns $[16,60]$, compared with $[2,111]$ for HDBSCAN and $[38,82]$ for DBSCAN; only the broad HDBSCAN interval contains all $100$ recorded classes in the tested representation. On ImageNet-Dogs ($n=19{,}500$, $K^\star=15$), the corresponding intervals are $[14,19]$, $[2,23]$, and $[19,105]$.

\paragraph{Downstream use.}
We used each ImageNet-Dogs interval as a candidate set for KMeans and spectral clustering. Averaging over every integer in each interval, the MBC candidate set gives the highest mean ARI for both methods. When five approximately equally spaced values are evaluated and the highest-silhouette partition is selected, the MBC interval also gives the highest selected ARI for both methods. The only exception is using silhouette selection for KMeans, where a DBSCAN candidate slightly exceeds the selected MBC candidate despite DBSCAN's lower mean performance.

\section{Synthetic suite: catalog and per-dataset results}
\label{app:synth-catalog}

The synthetic suite contains $38$ datasets in eight families, generated at $n\approx2{,}000$ with $K^\star$ fixed by the generator. Table~\ref{tab:synth-catalog} reports the seed-$7$ results, App.~\ref{app:multi-seed} reports variability, and Table~\ref{tab:per-family} gives the aggregate comparison.

\paragraph{Family rationale.}
\begin{itemize}\itemsep1pt
\item \emph{Classic shapes.} Clean low-dimensional blobs, circles, and moons testing collapse to $[K^\star,K^\star]$ under clear separation.
\item \emph{Additive noise.} Moons and circles at increasing Gaussian noise, testing the transition as within-component dispersion grows.
\item \emph{Background contamination.} Blobs and moons with $5$--$20\%$ uniform background, testing the pruning step under violations of manifold-supported sampling.
\item \emph{Varied scale.} Anisotropic and variance-mismatched blobs and two spirals, testing nonuniform scale and curved geometry.
\item \emph{High-dimensional blobs.} Six-component blobs in $D\in\{50,100,200\}$ and a Swiss roll in $20$D, separating ambient dimension from intrinsic geometry.
\item \emph{Hierarchical.} Nested $3\times3$ blob grids with multiple natural resolutions.
\item \emph{Class imbalance.} Two- to four-cluster mixtures with unequal masses, including small minorities and background contamination.
\item \emph{Adversarial.} Touching, uneven, and mixed-geometry components near the boundary of the separation hypothesis.
\end{itemize}

\begin{table}[h]
\centering
\footnotesize
\setlength{\tabcolsep}{4pt}
\caption{Per-dataset MBC results on the synthetic suite at seed $7$. Primary is the size-filtered bracket and $\gamma{=}0.95$ the mass companion; ``--'' denotes an undefined $\widehat\rho$ because the pilot graph is connected. \emph{S}, \emph{T}, and \emph{N} denote separable, transitional, and non-separable proxy regimes. Bold brackets contain $K^\star$.}
\label{tab:synth-catalog}
\begin{tabular}{llcccccccc}
\toprule
family & dataset & $D$ & $K^\star$ & primary & $\gamma{=}0.95$ & $\widehat K_{\mathrm{prac}}$ & ARI & $\widehat\rho$ & reg. \\
\midrule
\multirow{3}{*}{classic}
& blobs ($\sigma{=}0.9$)             &  2 & 4 & $\boldsymbol{[4, 4]}$  & $\boldsymbol{[4, 4]}$  & 4 & 0.98 & 8.7   & S \\
& circles ($\sigma{=}0.04$)          &  2 & 2 & $\boldsymbol{[2, 2]}$  & $\boldsymbol{[2, 2]}$  & 2 & 1.00 & 5.4   & S \\
& moons ($\sigma{=}0.02$)            &  2 & 2 & $\boldsymbol{[2, 10]}$ & $\boldsymbol{[2, 10]}$ & 2 & 1.00 & 1.6   & T \\
\midrule
\multirow{5}{*}{noise sweep}
& moons ($\sigma{=}0.05$)            &  2 & 2 & $\boldsymbol{[2, 2]}$  & $\boldsymbol{[2, 2]}$  & 2 & 1.00 & 8.5   & S \\
& moons ($\sigma{=}0.10$)            &  2 & 2 & $\boldsymbol{[1, 4]}$  & $\boldsymbol{[1, 4]}$  & 2 & 0.97 & --    & N \\
& moons ($\sigma{=}0.15$)            &  2 & 2 & $\boldsymbol{[1, 3]}$  & $\boldsymbol{[1, 2]}$  & 2 & 0.00 & --    & N \\
& circles ($\sigma{=}0.08$)          &  2 & 2 & $\boldsymbol{[1, 5]}$  & $\boldsymbol{[1, 3]}$  & 5 & 0.64 & 2.0   & T \\
& circles ($\sigma{=}0.15$)          &  2 & 2 & $[1, 1]$  & $[1, 1]$  & 1 & 0.00 & --    & N \\
\midrule
\multirow{6}{*}{bg.\ contamination}
& blobs (bg $5\%$)                   &  2 & 4 & $\boldsymbol{[4, 4]}$  & $\boldsymbol{[4, 4]}$  & 4 & 0.98 & 3.2   & T \\
& blobs (bg $10\%$)                  &  2 & 4 & $[1, 1]$  & $[1, 1]$  & 1 & 0.00 & 4.8   & S \\
& blobs (bg $20\%$)                  &  2 & 4 & $\boldsymbol{[1, 11]}$ & $\boldsymbol{[1, 9]}$  & 11 & 0.71 & --   & N \\
& moons (bg $5\%$)                   &  2 & 2 & $\boldsymbol{[2, 3]}$  & $\boldsymbol{[2, 3]}$  & 2 & 0.96 & 2.7   & T \\
& moons (bg $10\%$)                  &  2 & 2 & $\boldsymbol{[1, 8]}$  & $\boldsymbol{[1, 20]}$ & 2 & 0.88 & 2.0   & T \\
& moons (bg $20\%$)                  &  2 & 2 & $[1, 1]$  & $[1, 1]$  & 1 & 0.01 & 3.1   & T \\
\midrule
\multirow{3}{*}{varied scale}
& blobs (anisotropic)                &  2 & 4 & $\boldsymbol{[2, 4]}$  & $[2, 2]$  & 2 & 0.32 & 214.3 & S \\
& blobs (varied $\sigma$)            &  2 & 3 & $[2, 2]$  & $[2, 2]$  & 2 & 0.56 & 5.8   & S \\
& two spirals                        &  2 & 2 & $\boldsymbol{[1, 20]}$ & $\boldsymbol{[1, 18]}$ & 20 & 0.19 & --   & N \\
\midrule
\multirow{8}{*}{high-$D$}
& blobs ($50$D, easy)                & 50 & 6 & $\boldsymbol{[6, 6]}$  & $\boldsymbol{[6, 221]}$ & 6 & 0.99 & 1.0  & T \\
& blobs ($50$D, hard)                & 50 & 6 & $[1, 5]$  & $\boldsymbol{[1, 170]}$ & 2 & 0.14 & 1.1  & T \\
& blobs ($50$D, anisotropic)         & 50 & 6 & $\boldsymbol{[6, 6]}$  & $\boldsymbol{[6, 123]}$ & 6 & 1.00 & 1.3  & T \\
& blobs ($100$D, easy)               &100 & 6 & $\boldsymbol{[6, 6]}$  & $\boldsymbol{[6, 244]}$ & 5 & 1.00 & 1.0  & T \\
& blobs ($100$D, hard)               &100 & 6 & $\boldsymbol{[6, 6]}$  & $\boldsymbol{[6, 189]}$ & 6 & 0.99 & 1.2  & T \\
& blobs ($200$D, easy)               &200 & 6 & $\boldsymbol{[6, 6]}$  & $\boldsymbol{[6, 284]}$ & 6 & 0.98 & 1.0  & T \\
& blobs ($200$D, hard)               &200 & 6 & $\boldsymbol{[6, 6]}$  & $\boldsymbol{[6, 181]}$ & 5 & 1.00 & 1.5  & T \\
& Swiss roll (in $20$D)              &  3 & 4 & $\boldsymbol{[1, 4]}$  & $[1, 2]$   & 2 & 0.00 & --   & N \\
\midrule
\multirow{2}{*}{hierarchical}
& $3\times 3$ blobs ($2$D)           &  2 & 9 & $\boldsymbol{[3, 38]}$ & $\boldsymbol{[3, 47]}$  & 3 & 0.21 & 2.4  & T \\
& $3\times 3$ blobs ($10$D)          & 10 & 9 & $[5, 6]$  & $[5, 6]$   & 5 & 0.58 & 5.4  & S \\
\midrule
\multirow{7}{*}{class imbalance}
& blobs ($90/10$)                    &  2 & 2 & $\boldsymbol{[2, 2]}$  & $\boldsymbol{[2, 2]}$   & 2 & 0.97 & 21.8 & S \\
& blobs ($80/15/5$)                  &  2 & 3 & $\boldsymbol{[3, 3]}$  & $[2, 2]$   & 3 & 0.99 & 21.3 & S \\
& blobs ($60/30/10$)                 &  2 & 3 & $\boldsymbol{[3, 3]}$  & $\boldsymbol{[3, 3]}$   & 3 & 0.99 & 18.9 & S \\
& blobs (4-cluster, imbal.)          &  2 & 4 & $\boldsymbol{[4, 4]}$  & $[3, 3]$   & 4 & 0.99 & 10.1 & S \\
& blobs ($85/10/5$, $50$D)           & 50 & 3 & $[1, 1]$  & $\boldsymbol{[1, 270]}$ & 1 & 0.00 & 1.0  & T \\
& blobs (3-cluster + bg $10\%$)      &  2 & 3 & $[1, 2]$  & $[1, 2]$   & 2 & 0.79 & 6.7  & S \\
& moons ($80/20$)                    &  2 & 2 & $\boldsymbol{[2, 4]}$  & $\boldsymbol{[2, 3]}$   & 2 & 0.99 & 2.8  & T \\
\midrule
\multirow{4}{*}{adversarial}
& touching Gaussians ($2$D)          &  2 & 3 & $[1, 2]$  & $[1, 1]$   & 2 & 0.00 & --   & N \\
& touching Gaussians ($10$D)         & 10 & 3 & $[1, 1]$  & $\boldsymbol{[1, 74]}$  & 1 & 0.00 & 1.0  & T \\
& uneven blobs ($2$D)                &  2 & 3 & $[2, 2]$  & $[2, 2]$   & 2 & 0.87 & 7.4  & S \\
& helix + plane + sphere             &  3 & 3 & $[2, 2]$  & $[2, 2]$   & 2 & 0.57 & 31.6 & S \\
\bottomrule
\end{tabular}
\end{table}

\paragraph{Reading the catalog.}
A few patterns cut across families. The classic shapes are tight when the geometry is clean. The noise and contamination sweeps show the bracket widening or dropping its lower endpoint as the separation becomes less defensible. The high-dimensional rows show why the primary and mass brackets are different diagnostics: the primary can stay tight at the intended count while the mass companion exposes many small fragments in Euclidean high dimension. The hierarchical and adversarial rows are boundary cases. Their wide or collapsed brackets are not failures to tune a parameter; they report that several resolutions, or no separated resolution, are defensible at the sampled scale.

\begin{table}[h]
\centering
\small
\caption{Per-family aggregates on the six synthetic families with a single-scale target interpretation. Informativeness is bracket coverage divided by (median width $+\,1$). The best-baseline ARI is the maximum across the DBSCAN and HDBSCAN parameter grids, OPTICS, BIRCH, and the $K^\star$-given runs of KMeans, GMM, Ward, and Spectral. Hierarchical and adversarial families are left out of this aggregate because they are designed to violate the single-scale assumptions behind the score.}
\label{tab:per-family}
\begin{tabular}{lc cccc cc}
\toprule
& & \multicolumn{4}{c}{Informativeness} & \multicolumn{2}{c}{ARI (mean)} \\
\cmidrule(lr){3-6} \cmidrule(lr){7-8}
family & $n_{\mathrm{ds}}$ & MBC & HDB-grid & DBS-grid & best & MBC & best baseline \\
\midrule
classic shapes      & 3 & \textbf{1.00} & 0.06 & 0.14 & MBC & 0.99 & 1.00 \\
additive noise      & 5 & 0.27 & 0.08 & \textbf{0.33} & DBS & 0.52 & 0.85 \\
bg.\ contamination  & 6 & \textbf{0.44} & 0.12 & 0.25 & MBC & 0.59 & 0.93 \\
varied scale        & 3 & \textbf{0.22} & 0.20 & 0.20 & MBC & 0.36 & 0.70 \\
high-$D$ blobs      & 8 & 0.88 & \textbf{1.00} & 0.11 & HDB & 0.76 & 0.96 \\
class imbalance     & 7 & \textbf{0.71} & 0.21 & 0.10 & MBC & 0.82 & 0.98 \\
\bottomrule
\end{tabular}
\end{table}

\section{Mass-bounded bracket sensitivity}
\label{app:mass-bracket}

This section compares the primary bracket with $K_{\mathrm{mass},\gamma}$ at $\gamma\in\{0.90,0.95,0.98\}$ on the same sweep. Table~\ref{tab:mass-bracket-primary} gives suite-level coverage and width, and Table~\ref{tab:mass-bracket-divergence} gives the datasets driving the differences.

\begin{table}[h]
\centering
\small
\caption{Per-suite summary statistics for each bracket variant. Coverage is computed only on rows with a recorded $K^\star$. Median width is the median of $K_{\mathrm{high}} - K_{\mathrm{low}}$, and informativeness is coverage divided by median width plus one.}
\label{tab:mass-bracket-primary}
\begin{tabular}{llccc}
\toprule
suite & bracket & coverage & median width & informativeness \\
\midrule
\multirow{4}{*}{synth (38)}
  & $K_{\mathrm{big}}$ (primary)          & 0.68 & 0   & 0.68 \\
  & $K_{\mathrm{mass},0.90}$               & 0.58 & 0.5 & 0.39 \\
  & $K_{\mathrm{mass},0.95}$               & 0.66 & 1   & 0.33 \\
  & $K_{\mathrm{mass},0.98}$               & 0.76 & 2   & 0.25 \\
\midrule
\multirow{4}{*}{real labeled (9)}
  & $K_{\mathrm{big}}$ (primary)          & 0.22 & 1  & 0.11 \\
  & $K_{\mathrm{mass},0.90}$               & 0.44 & 3  & 0.11 \\
  & $K_{\mathrm{mass},0.95}$               & 0.56 & 13 & 0.04 \\
  & $K_{\mathrm{mass},0.98}$               & 0.89 & 37 & 0.02 \\
\midrule
\multirow{4}{*}{neuro (2)}
  & $K_{\mathrm{big}}$ (primary)          & 1.00 & 3.5 & 0.22 \\
  & $K_{\mathrm{mass},0.90}$               & 1.00 & 4   & 0.20 \\
  & $K_{\mathrm{mass},0.95}$               & 1.00 & 5.5 & 0.15 \\
  & $K_{\mathrm{mass},0.98}$               & 1.00 & 8   & 0.11 \\
\bottomrule
\end{tabular}
\end{table}

On synthetic data, the primary has the highest informativeness. On labeled real data, increasing $\gamma$ raises coverage but rapidly widens the interval; all variants cover the two neuroscience targets, with width increasing in $\gamma$.

\begin{table}[h]
\centering
\small
\caption{Representative divergences between the primary bracket and $K_{\mathrm{mass},0.95}$.}
\label{tab:mass-bracket-divergence}
\begin{tabular}{llcc}
\toprule
pattern & dataset & primary & mass companion \\
\midrule
long-tail real & Wine & $[1,2]$ & $[1,12]$ \\
sub-$\gamma$ minority & blobs ($80/15/5$) & $[3,3]$ & $[2,2]$ \\
high-$D$ fragmentation & $50$D blobs (easy) & $[6,6]$ & $[6,221]$ \\
\bottomrule
\end{tabular}
\end{table}

\paragraph{Choice of $\gamma$.}
The value $\gamma=0.90$ often drops true minorities carrying between $5\%$ and $10\%$ of the sample, while $\gamma=0.98$ admits long fragmentation tails and greatly widens the real-data intervals. We therefore use $\gamma=0.95$ as the intermediate coverage diagnostic. The run-length variant can suppress persistent high-dimensional fragmentation but performs poorly on short real and neuroscience sweeps, so it is recorded but not recommended by default.

\section{Neuroscience filtration: per-dataset results}
\label{app:neuro-behavior}

We report the full filtration behavior on the retinal ganglion cell and V1 datasets in Fig.~\ref{fig:retina_filtration}. The top row shows the first two diffusion-map components, colored by ground-truth labels where known. The bottom row shows the filtration $K(k)$ across the uncertainty zone, with the bracket and $k^\star$ marked.
\begin{figure}[h!]
\centering
\includegraphics[width=0.85\linewidth]{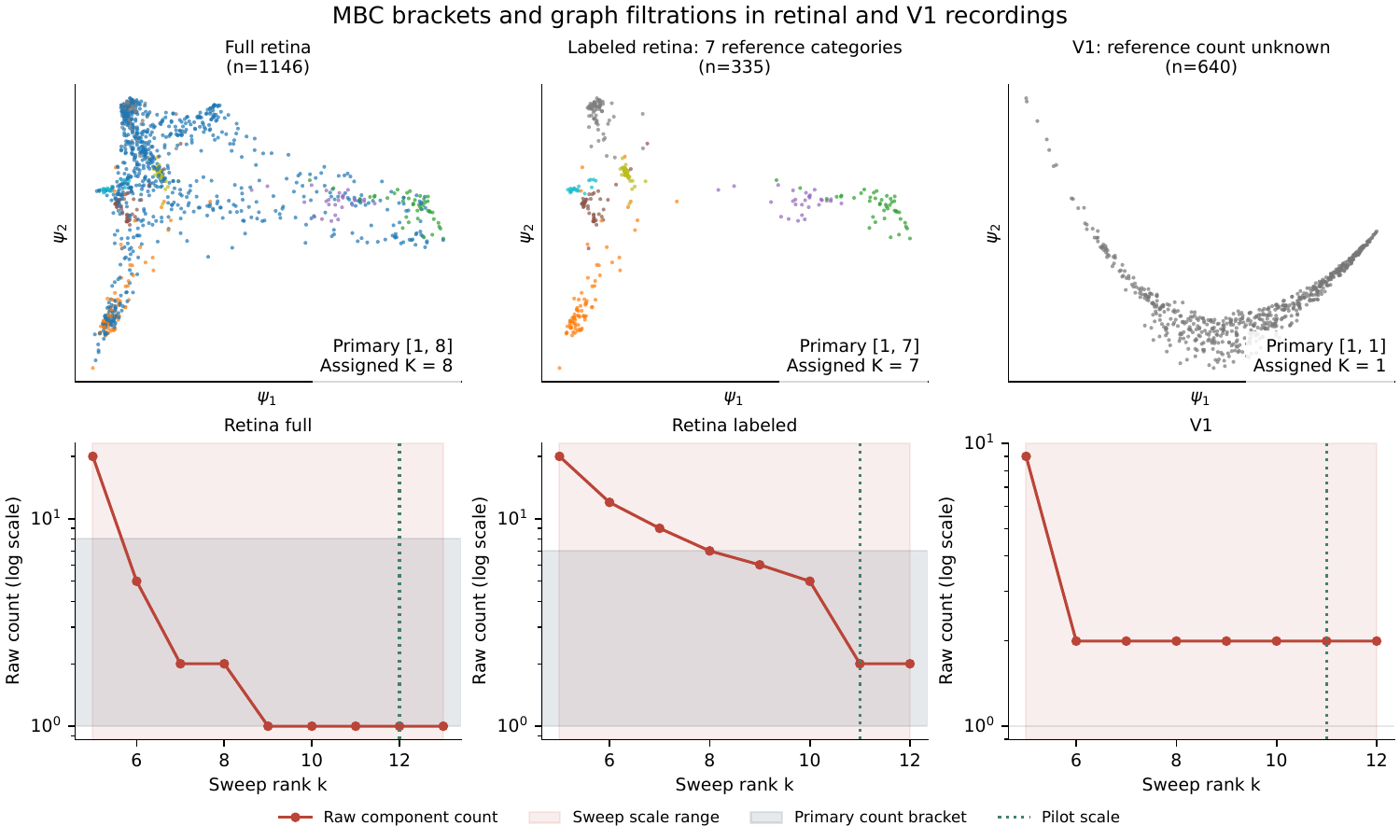}
\caption{Retinal ganglion cells and V1 in diffusion-map space. \emph{Top:} first two diffusion-map components with ground-truth labels where known. \emph{Bottom:} filtration $K(k)$ across the uncertainty zone; the shaded band marks the bracket and the dotted line marks $k^\star$.}
\label{fig:retina_filtration}
\end{figure}

\end{document}